\newcommand{\supp}{\text{supp}}
\renewcommand{\epsilon}{\ve}
\def\ve{\varepsilon}
\newcommand{\pr}[2][]{\mathrm{Pr}\ifthenelse{\not\equal{}{#1}}{_{#1}}{}\!\left[#2\right]}
\newcommand{\dtv}{d_{\mathrm {TV}}}
\newcommand{\V}{\mathcal{V}}
\newcommand{\X}{\mathcal{X}}
\newcommand{\C}{\mathcal{C}}
\newcommand{\Vadd}{\mathcal{V}_{\mathrm{add}}}
\newcommand{\Vsub}{\mathcal{V}_{\mathrm{sub}}}
\newcommand{\Q}{\mathcal{Q}}
\newcommand{\naturals}{\mathbb{N}}
\newcommand{\cupdot}{\mathbin{\mathaccent\cdot\cup}}
\newcommand{\ignore}[1]{}
\theoremstyle{plain}
\newtheorem{theorem}{Theorem}[section]
\newtheorem{lemma}[theorem]{Lemma}
\newtheorem{corollary}[theorem]{Corollary}
\theoremstyle{definition}
\newtheorem{definition}[theorem]{Definition}
\theoremstyle{remark}
\newtheorem{example}[theorem]{Example}
\definecolor{teal}{rgb}{0,0,0} 
\icmltitlerunning{On the Learnability of Distribution Classes with Adaptive Adversaries}
\begin{document}

\twocolumn[
\icmltitle{On the Learnability of Distribution Classes with Adaptive Adversaries}



\icmlsetsymbol{equal}{*}

\begin{icmlauthorlist}
\icmlauthor{Tosca Lechner}{vector}
\icmlauthor{Alex Bie}{equal,google}
\icmlauthor{Gautam Kamath}{equal,vector,uwaterloo}
\end{icmlauthorlist}

\icmlaffiliation{vector}{Vector Institute}
\icmlaffiliation{google}{Google}
\icmlaffiliation{uwaterloo}{University of Waterloo}

\icmlcorrespondingauthor{Tosca Lechner}{tosca.lechner@vectorinstitute.ai}

\icmlkeywords{Machine Learning, ICML}

\vskip 0.3in
]



\printAffiliationsAndNotice{\alphabetical} 

\begin{abstract}
We consider the question of learnability of distribution classes in the presence of adaptive adversaries -- that is, adversaries capable of intercepting the samples requested by a learner and applying manipulations with full knowledge of the samples before passing it on to the learner. This stands in contrast to oblivious adversaries, who can only modify the underlying distribution the samples come from but not their i.i.d.\ nature. 
We formulate a general notion of learnability with respect to adaptive adversaries, taking into account the budget of the adversary. 
We show that learnability with respect to additive adaptive adversaries is a strictly stronger condition than learnability with respect to additive oblivious adversaries. 
\end{abstract}

\section{Introduction}

In the distribution learning problem, a learner receives i.i.d.\ samples from an unknown distribution $p$ belonging to a known class of distributions $\mathcal C$, and is tasked with producing an accurate estimate of $p$. Distribution learning is one of the most fundamental and well studied problems in learning theory \cite{KearnsMRRSS94, DevroyeL01}; see also the survey of \citet{Diakonikolas16}.

The above formulation of the problem is referred to as the \emph{realizable} case -- where the learner can take advantage of the strong prior knowledge that indeed, the unknown distribution $p$ they receive samples from is precisely a member of the distribution class $\mathcal C$. This assumption is dropped in the \emph{agnostic} setting, where the learner must be able to handle receiving samples from a distribution outside of $\mathcal C$, but must produce an estimate close to the best approximation by a member of $\mathcal C$. An alternative formulation of this requirement is that the learner must be robust to an \emph{oblivous adversary}: 

\begin{description}
    \item[\phantom{aaa}] An \emph{oblivious adversary} can modify the unknown distribution the learner's samples come from, with full knowledge of the learner's algorithm and $p$ itself, but cannot change the i.i.d.\ nature of the samples. 
\end{description}

While all realizably learnable classes are agnostically learnable in the PAC setting \cite{VapnikC71, Haussler92}, the recent study of \citet{ben-david2023distribution} demonstrates that there is a separation in the distribution learning setting. They give an example of a realizably learnable class that is not learnable in the presence of an oblivous adversary. On the other hand, they also provide a positive result: a realizable learner for a class can be converted to a learner robust to oblivious adversaries restricted to only \emph{additive} corruptions.\footnote{This model is sometimes called \emph{Huber contamination}.}

It is a natural question to consider how the situation changes in the presence of an \emph{adaptive adversary}: 

\begin{description}
    \item[\phantom{aaa}] An \emph{adaptive adversary} receives i.i.d.\ samples drawn from the unknown distribution $p$, and can modify individual samples with full knowledge of the samples, the learner's algorithm, and $p$ itself. 
\end{description}

\begin{table*}[h!]
\centering
\begin{tabular}{|c|c|c|}
\hline
 & \textbf{Subtractive} & \textbf{Additive} \\ 
 \hline
  & &\\
\multirow{2}{*}{\textbf{Oblivious}} & No & Yes  \\ 
 & \citep[Theorem 2.1]{ben-david2023distribution}  &  \citep[Theorem 1.7]{ben-david2023distribution} \\
 \vspace{-4pt}& &\\
\hline
  \vspace{-4pt}& &\\
\multirow{2}{*}{\textbf{Adaptive}} & No & {\color{red} No} \\ 
 & $\Rightarrow$: implied by above & \emph{Answered by this work, Theorem~\ref{thm:main}} \\
 & &\\
\hline
\end{tabular}
\caption{Does learnability imply learnability with respect to [oblivious$\vert$adaptive], [subtractive$\vert$additive] adversaries?}\label{tab:matrix}
\end{table*}

Indeed, the study of robustness with respect to adaptive adversaries is increasingly relevant to modern settings that examine machine learning algorithms from a worst-case, security perspective \cite{chen2017targeted, carlini2017towards, DiakonikolasKKLSS19,tramer2020adaptive, carlini2024poisoning}.

\citet{ben-david2023distribution} focus entirely on the oblivious setting, and do not investigate the implication of their results to the adaptive setting.
First, it is trivial to observe that their negative result (learnability does not imply robust learnability with an oblivious adversary) carries over to the adaptive case\footnote{In the technical part of the paper we also show a slightly stronger version of this negative result for subtractive, which also holds for adversaries that only have access to $S$ but not to $p$. This result is shown via a separate proof technique and does not immediately follow from previous work. }
This is because adaptive adversaries can simulate oblivious adversaries, and are thus stronger (see Table~\ref{tab:matrix} for a full summary of the situation). 
The remaining unresolved question is whether their algorithmic results can be extended to the adaptive setting:

\begin{quote}
\emph{Are realizably learnable distributions learnable in the presence of an adaptive additive adversary?}
\end{quote}

The present paper answers this question in the negative. We show that additive corruptions are strictly more powerful in the adaptive model than in the oblivious model. 


To prove the result, we examine the relationship between subtractive and additive adversaries, and show that a general sufficient condition for the existence of adaptive subtractive adversaries also implies the existence of adaptive additive adversaries. This close relation between the additive and subtractive adversaries stands in contrast to the oblivious setting, where there subtractive adversaries are strictly more powerful than additive. We show that given an adaptive subtractive adversary, we can construct an adaptive additive adversary by inverting the subtractive adversary: instead of adding sample points that the subtractive learner would have deleted from a sample from a different distribution.


\subsection{Results and techniques}

We consider additive adversaries who can only add points, and subtractive adversaries can only remove points.\footnote{We defer a more formal definition of our adversary models to Section~\ref{sec:adversaries}.}

Informally, a class is robustly learnable in the presence of an adversary if it admits a learner satisfying the following: given a sufficiently large (corrupted) sample, the learner is capable of driving error arbitrarily close to $\alpha\cdot\eta$, where $\eta$ is the fraction of samples added/removed by the adversary, and $\alpha$ is \emph{any} absolute constant.

The following is our main result.

\begin{theorem}[Informal version of Theorem~\ref{thm:main}]
    There exists a class of distributions $\mathcal{C}$ that is realizably learnable, yet the class is not robustly learnable in the presence of an adaptive additive (or subtractive) adversary.
\end{theorem}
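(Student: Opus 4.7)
The plan is to establish the negative result by combining a witness class with a reduction. For the subtractive adaptive case, I would lean on the Ben-David et al.\ (2023) construction of a class $\mathcal C$ that is realizably learnable but not learnable under an oblivious subtractive adversary: since any oblivious adversary is trivially simulable by an adaptive one, oblivious-subtractive impossibility immediately yields adaptive-subtractive impossibility. To obtain the paper's promised strengthening (an adversary that depends only on the sample $S$, not on $p$), I would re-inspect the adversary implicit in their proof and argue that its action can be rewritten as a $p$-oblivious function of the multiset $S$ --- e.g.\ ``delete a prescribed fraction of the points exhibiting such-and-such observable feature'' --- so that impossibility holds against this strictly weaker class of adversaries.

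The main new content is the additive-adaptive case, handled by an inversion reduction. The setup: from the subtractive impossibility, extract two distributions $p_1, p_2 \in \mathcal C$ (or more generally two sub-families) together with a randomized subtractive adaptive adversary $\mathcal A_{\text{sub}}$ of budget $\eta$ that, on input $S \sim p_1^m$, outputs a sub-sample $\tilde S = \mathcal A_{\text{sub}}(S)$ of size $(1-\eta)m$ whose marginal distribution is close in total variation to $p_2^{(1-\eta)m}$. From this I would construct an additive adaptive adversary $\mathcal A_{\text{add}}$ that, given $S' \sim p_2^{(1-\eta)m}$, samples a set $A$ of $\eta m$ extra points from the conditional distribution of the coordinates removed by $\mathcal A_{\text{sub}}$ given that the retained coordinates equal $S'$, and outputs $S' \cup A$. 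By the definition of that conditional, when the marginal of $\tilde S$ coincides with $p_2^{(1-\eta)m}$ the output of $\mathcal A_{\text{add}}$ is exactly $p_1^m$; under only approximate matching, the outputs remain TV-close by a standard coupling bound. Hence the learner's corrupted view of $p_2$ is statistically indistinguishable from a clean view of $p_1$, which precludes any learner from achieving error close to $\alpha\eta$ for every constant $\alpha$.

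Two bookkeeping steps finish the argument. First, a subtractive budget of $\eta$ (removing $\eta m$ from $m$ points) maps to an additive budget of $\eta/(1-\eta)$ (adding $\eta m$ to $(1-\eta)m$ points); since the definition of robust learnability quantifies over every absolute constant $\alpha$, this constant-factor rescaling is absorbed as $\eta \to 0$ and preserves the separation. Second, the class $\mathcal C$ in the final theorem can be taken to be the same witness used for the subtractive direction, so a single class simultaneously witnesses both negative results.

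The step I expect to be the main obstacle is guaranteeing the TV-closeness between the marginal of $\mathcal A_{\text{sub}}(p_1^m)$ and $p_2^{(1-\eta)m}$ --- not just indistinguishability from the learner's viewpoint. Ben-David et al.'s impossibility proof may only give the weaker guarantee needed to fool a bounded-sample learner, so I would either (a) strengthen the subtractive impossibility by constructing a subtractive adversary tailored to achieve genuine TV-closeness of the retained coordinates' distribution to the ``target'' family --- e.g.\ by letting $\mathcal A_{\text{sub}}$ re-simulate $p_2$-samples internally and match them to $S$ via a coupling --- or (b) replace the Ben-David witness with a bespoke class for which both properties are easy to verify. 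A subsidiary issue is the existence of regular conditional distributions needed for the inversion; for the discrete/finite support cases that arise in our construction this is immediate, but I would state the measurability assumptions explicitly.
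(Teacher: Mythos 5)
Your core idea for the additive case --- invert the subtractive adversary by sampling the deleted points from the conditional distribution of the pre-image given the retained points, and graft them onto the other distribution's sample --- is exactly the mechanism behind the paper's Theorem~\ref{thm:subtractive->additive}. But the way you instantiate it contains a genuine gap, which you yourself flag as the ``main obstacle'': you require the subtractive adversary's output on $p_1$-samples to be TV-close to a \emph{clean} sample $p_2^{(1-\eta)m}$ from another class member. For the witness class the paper uses (and essentially any class whose realizable learnability comes from distribution-identifying structure in clean samples), this requirement provably fails: a clean sample from $p_2=p_{i',j,g(j)}$ contains its indicator atom $(i',2j+2)$ with probability $1-(1-1/g(j))^{(1-\eta)m}\to 1$ as $m\to\infty$, while a subtractive adversary acting on a $p_1$-sample can only delete points and can never produce $p_2$'s indicator, so $\dtv\bigl(\mathcal{A}_{\mathrm{sub}}(p_1^m),\,p_2^{(1-\eta)m}\bigr)\to 1$. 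Your proposed fixes do not repair this: option (a), having $\mathcal{A}_{\mathrm{sub}}$ ``re-simulate $p_2$-samples internally and match them via a coupling,'' is not available to a subtractive adversary, which cannot inject points absent from $S$; option (b) runs into the same tension, since the feature that makes a class realizably learnable is precisely the feature a deletion-only adversary cannot synthesize. A related weakness is that a single pair $p_1,p_2$ cannot carry the lower bound; one needs a meta-distribution $Q$ over exponentially many far-apart class members (your parenthetical ``two sub-families'' gestures at this but the reduction is stated for the pair).

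The paper avoids the obstacle by never comparing corrupted samples to clean ones. It only requires $\dtv\bigl(V_{\mathrm{sub},\eta}(p^m),V_{\mathrm{sub},\eta}(|Q|^m)\bigr)<\zeta$, where both sides are corrupted and $Q$ is a uniform meta-distribution over $2^n$ far-apart members (Lemma~\ref{lemma:C_g_conditions}), and then builds a \emph{pair} of additive adversaries $V_{\mathrm{add},p^m}$, $V_{\mathrm{add},|Q|^m}$, each applying the inverse map under its own prior to the common core and attaching the ``opposite piece''; indistinguishability of the two corrupted-by-additive-adversary views, combined with a confusion-to-hardness lemma (Theorem~\ref{thm:universaladversary}, which also covers the clean/budget-0 comparison you implicitly rely on), rules out any learner robust to the class of all additive adversaries. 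To get a single \emph{universal} additive adversary, a further randomization trick is needed (apply the inverse a uniform random number of times in $\{0,\dots,k\}$ under the two priors, Theorem~\ref{thm:universaladditive}), at an extra cost of $\tfrac{1}{k+1}$ in TV. Finally, your claim that the oblivious subtractive adversary of Ben-David et al.\ can simply be ``rewritten'' as a sample-only adaptive adversary is not something the prior result gives you; the paper proves the success of the sample-only adversary $V_{\mathrm{sub},\eta}$ from scratch via the same confusion framework, and the oblivious-to-adaptive simulation requires knowledge of $p$ and a constant-probability loss.
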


In contrast, recall that the main algorithmic result of~\citet{ben-david2023distribution} says that every realizably learnable class is also robustly learnable in the presence of an oblivious additive adversary.

To obtain this result, we first develop a general technique for showing that a class is not learnable with respect to adaptive manipulations from an adversary $V$. 
This technique, based on a recent result of \citet{anonymous}, says the following:

\begin{theorem}[Informal version of Theorem~\ref{thm:universaladversary}]
If for a class of distributions $\C$, there exists some $p\in \C$ and a meta-distribution $Q$ over elements of $\C$, such that
\begin{enumerate}
    \item $\dtv(p,q)$ is bounded below by some constant, for all $q$ in the support of $Q$; and
    \item A sample $S\sim V(p^m)$ and a sample $S'\sim V(q^m)$ where $q\sim Q$ cannot be reliably distinguished,
\end{enumerate}
then learning $\C$ with respect to adversary $V$ is hard. 
\end{theorem}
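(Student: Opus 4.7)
The plan is to prove this by a Le Cam--style reduction: if a learner $A$ can robustly learn $\C$ against $V$ to low error on both $p$ and every $q \in \supp(Q)$, then $A$ can be repurposed as a distinguisher for the two corrupted-sample distributions described in hypothesis~(2), contradicting that hypothesis. Let $\delta > 0$ denote the uniform lower bound on $\dtv(p,q)$ coming from hypothesis~(1). I aim to show that no learner, at any sample size $m$, can drive TV error strictly below $\delta/3$ simultaneously on $p$ and on all $q \in \supp(Q)$ with failure probability below a small constant $\tau$ chosen to be strictly smaller than the ``indistinguishability slack'' of hypothesis~(2).

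Concretely, suppose for contradiction that such a learner $A$ exists. I would define the tester $T \colon \X^m \to \{0,1\}$ by $T(S) = 1$ iff $\dtv(A(S), p) \le \delta/3$. The accuracy guarantee applied to $p \in \C$ gives
\[
\Pr_{S \sim V(p^m)}[T(S) = 1] \ge 1 - \tau.
\]
For any fixed $q \in \supp(Q)$, the accuracy guarantee applied to $q$ together with the triangle inequality
$\dtv(A(S),p) \ge \dtv(p,q) - \dtv(A(S),q) \ge \delta - \delta/3 = 2\delta/3$
forces $T(S) = 0$ with probability $\ge 1 - \tau$; averaging over $q \sim Q$,
\[
\Pr_{q \sim Q,\; S \sim V(q^m)}[T(S) = 1] \le \tau.
\]
These two bounds imply $T$ distinguishes the mixture $\mathbb{E}_{q \sim Q}[V(q^m)]$ from $V(p^m)$ with total-variation advantage at least $1 - 2\tau$, contradicting hypothesis~(2).

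The main obstacle --- and where I expect the real content of Theorem~\ref{thm:universaladversary} to live --- is formalizing hypothesis~(2) quantitatively and matching constants so that the indistinguishability slack strictly exceeds $2\tau$; this is presumably what ties the statement to the cited result of \citet{anonymous}. Secondary subtleties are (i) handling randomized learners, which is resolved by conditioning on the learner's internal coins and running the above tester analysis on the resulting deterministic learner after an averaging argument, and (ii) ensuring the lower-bound error level $\delta/3$ exceeds the budget-dependent threshold $\alpha\eta$ appearing in the paper's definition of robust learnability, so that hardness in the sense of the paper is actually ruled out. The remaining steps are standard Le Cam bookkeeping via the triangle inequality and a union bound.
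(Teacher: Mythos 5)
Your proposal is correct and is essentially the paper's own argument: the paper's Lemma~\ref{lemma:pac->distinct} (following Lemma~4 of \citet{anonymous}) uses exactly your event $\{\dtv(A(V(S)),p)\le\gamma/2\}$ as the implicit tester, transfers its probability between $V(p^m)$ and $V(|Q|^m)$ via the TV bound $\zeta$, and extracts a badly-learned $q_{\mathrm{max}}\in\supp(Q)$ by averaging and the triangle inequality, while Theorem~\ref{thm:universaladversary} handles your constant-matching concern by setting $\gamma = 2\alpha\max\{\eta_1,\eta_2\}+2\gamma'$ and choosing $\epsilon\le\gamma'$, $\delta=(1-\zeta)/2$. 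The only differences from your write-up are cosmetic (threshold $\gamma/2$ rather than $\delta/3$, and phrasing the contradiction as learner failure on $q_{\mathrm{max}}$ rather than as a distinguishing advantage $1-2\tau>\zeta$).
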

This result holds even if adversary $V$ does not have access to $p$ and can be found in Section~\ref{sec:generaltechnique}. We use this result to show that in the adaptive case (in contrast to the oblivious case), additive and subtractive robustness are closely related (Section~\ref{sec:subtraciveadditive}). In particular, we show that if the conditions (1.) and (2.) of Theorem~\ref{thm:universaladversary} are satisfied by a class $\C$ and a subtractive adversary $V_{\mathrm{sub}}$, then $\C$ is not robustly learnable with respect to adaptive additive adversaries (Theorem~\ref{thm:subtractive->additive}).

Next, in Theorem~\ref{thm:universaladditive} we show that fulfilling this condition for a subtractive adversary $V_{\mathrm{sub}}$ and a class $\C$ also implies the existence of a \emph{single} adaptive additive adversary $V_{\mathrm{add}}$ that is successful against \emph{every} learner for $\C$; we refer to such an adversary as a \emph{universal adversary}.


{The main result is first formally introduced in
Section~\ref{obliviousadaptive}. We use a realizably learnable class $\C_g$ which was used to show a separation between agnostic and realizable learnability in \cite{ben-david2023distribution}. We then construct a subtractive adaptive adversary $V_{\mathrm{sub},\eta}$ and show that it meets the conditions (1.) and (2.) for Theorem~\ref{thm:universaladversary}. We then use our results from Section~\ref{sec:subtraciveadditive}, to show that this also implies that $\C_g$ is not adaptively additively robustly learnable. 
In particular, we also show that there is both, a universal additive adversary $V_{\mathrm{add}}$ and a universal subtractive adversary $V_{\mathrm{sub},\eta}$ for $\C_g$. We introduce both the class $\C_g$ and an adaptive subtractive adversary $V_{\mathrm{sub},\eta}$ in Section~\ref{obliviousadaptive}, before delving into technical details.\footnote{The adversary $V_{\mathrm{sub},\eta}$ does not require knowledge of the ground-truth distribution $p$.} We then motivate each of the subsequent sections, by the results we will get for $\C_g$ and the adversary $V_{\mathrm{sub},\eta}$ within that section. We finally prove the main theorem at the end of the paper in Section~\ref{sec:proofofmaintheorem}.

}


\section{Related work}








As already mentioned, our work directly follows up on, and addresses an open problem of,~\citet{ben-david2023distribution}. 
Their work shows that learnability implies robust learnability under an oblivious additive adversary but not under an oblivious subtractive adversary.
They explicitly asked whether their algorithms which are effective under an oblivious additive adversary can be extended to handle an adaptive additive adversary.
We answer this question in the negative: learnability does not imply robust learnability under an adaptive additive adversary.

Recent works of Blanc, Lange, Malik, Tan, and Valiant~\cite{BlancLMT22,BlancV24} also study the relationship of adaptive and oblivious adversaries.
They show impressively generic results: for a broad range of statistical tasks, given an algorithm that works against an oblivious adversary, this can be converted to an algorithm that works against an adaptive adversary by simply drawing a larger dataset and randomly subsampling from it.
This seems to suggest that any distribution which is learnable under an oblivious adversary should also be learnable under an adaptive adversary, which would contradict our main result.
However, there is no contradiction: the size of the ``larger dataset'' their approach requires depends logarithmically on the support size of the distribution, and we focus on distributions with unbounded support. 
Thus our results imply that some dependence on the domain size is unavoidable for this reduction to go through for the task of distribution learning.

One recent work of~\citet{CanonneHLLN23} shows a gap between the sample complexity of robust Gaussian mean testing with adaptive and oblivious adversaries: they show that the adaptive adversary is strictly more powerful, necessitating an increase in the sample complexity.
Their focus is on a testing problem, whereas we study a distribution learning problem.
They show a polynomial gap in the sample complexity for a natural problem, whereas we show an infinite gap in the sample complexity of a somewhat contrived problem.

Robustness is a traditional topic of study within the field of Statistics, see, for example, the classic works~\cite{Tukey60,Huber64}.
Within Computer Science, distribution learning has been studied since the work of~\citet{KearnsMRRSS94}, inspired by Valiant's PAC learning model~\cite{Valiant84}.
Many subsequent works have studied algorithms for learning particular classes of distributions, see, e.g.,~\cite{ChanDSS13,ChanDSS14a,ChanDSS14b,LiS17,AshtianiBHLMP20}.
A recent line of work, initiated by~\cite{DiakonikolasKKLMS16,LaiRV16}, studies 
 computationally-efficient algorithms for robust estimation of particular classes of multivariate distributions, see, e.g.,~\cite{DiakonikolasKKLMS17,SteinhardtCV18,DiakonikolasKKLMS18,KothariSS18,HopkinsL18,DiakonikolasKKLSS19,LiuM21, LiuM22, BakshiDJKKV22, JiaKV23} and~\cite{DiakonikolasK22} for a reference.
 We focus on understanding broad and generic connections between learnability and robust/agnostic learnability, without consideration for computation, in contrast to those works that focus on computation and particular classes of distributions.

\section{Setup}

\subsection{Preliminaries}

We consider learning over a \emph{domain $\X$}. We denote the set of all \emph{distributons over $\mathcal{X}$} as $\Delta(\X)$. We assume an i.i.d.\ generating process of sample sets $S$. If $S$ is a sample of size $m$ i.i.d.\ drawn from a distribution $p$, we will denote this as $S\sim p^m$. Furthermore, we note that we consider samples $S$ to be multi-sets, that is, we consider samples to be randomly shuffled/order invariant, but assume that repeated elements are counted. For example, we assume that $S=\{0,1,1\} = \{1,0,1\}$, but $\{1,0,0\} \neq \{1,0\}$. In a slight abuse of notation we will use set-operations on samples $S$, again assuming that elements are repeated. That is $\{a,b,b,c\}\cup \{a,b,d,d\} = \{a,a,b,b,c,d,d\}$ and $\{a,b,b,c\}\setminus\{a,b,d,d\} = \{b,c\}$. 

We let $\X^*= \bigcup_{i=0}^{\infty} \X^{i}$, where $\X^m$ is the set of multi-sets over $\X$ of size $m$. We usually refer to learning with respect to a concept class of distributions $\C \subset \Delta(\X)$.
Furthermore, we consider distribution learning with respect to total variation distance $\dtv: \Delta(\X) \times \Delta(\X) \to [0,1]$ defined by
\begin{align*}
    \dtv(p,q) &= \sup_{B\subset \X:
    B \text{ measurable}}|p(B) - q(A)|=\\
    &= \frac{1}{2} \int_{x\in \X}|dp(x) - dq(x)|.
\end{align*}

We study the PAC learnability of distribution classes in the presence of adaptive adversaries. We start by giving the definition of PAC learnability of a class of distribution in the realizable case, i.e., without the presence of any adversary.
\begin{definition}[(Realizable) PAC learnability]
   A class of distributions $\mathcal{C}$ is (realizably) PAC learnable, if there exists a learner $A$ and a sample complexity function $m_{\mathcal{C}}^{\mathrm{re}}: (0,1)^2 \to \naturals$, such that for every $p \in \mathcal{C}$ and every $\epsilon,\delta \in (0,1)$ and every $m \geq m_{\mathcal{C}}^{\mathrm{re}}(\epsilon,\delta)$, with probability $1-\delta$,
    \[\dtv(A(S),p) \leq \epsilon\]
    where $S\sim p^{m}$.

\end{definition}

\subsection{Adaptive adversaries}\label{sec:adversaries}
An \emph{adaptive adversary} is a function $V: \X^* \to \X^*$ from samples to samples.\footnote{Adaptive adversaries may also make use knowledge of the underlying sample generating distribution $p$. We omit this option for simplicity. Indeed, our main result is slightly stronger than stated: we show that adversaries that do not make use of knowledge of $p$ suffice to prevent learning.} We allow this function to be randomized. 
We refer to the probability measure of $V(S)$ by $p_{V(S)}$. When considering $S$ to be generated by some distribution $p^m$, we will sometimes refer to the distribution of $V(S)$ by $V(p^m)$ {\color{teal} in a slight abuse of notation}. 

{\color{teal}
We now introduce two main classes of adaptive adversaries, \emph{additive adversaries}, who can only add additional sample points $S'_1$ to the sample, i.e., $V_{\mathrm{add}}(S) = S \cup S_1'$ and \emph{subtractive adversaries}, who can only delete some sample points $S'_2$ from the input sample, i.e., $V_{\mathrm{sub}}(S) = S \setminus S_2'$.
We will also introduce a notion of budget, which limits the amount of manipulation of an adversary.}

\begin{description}
    \item[Additive adaptive adversaries] We say that adaptive adversary $V$ is \emph{additive}, if for every $S\in \X^*$, we have $S \subset V(S)$. 
 We denote the class of all adaptive additive adversaries with $\Vadd$.
     \item[Subtractive adaptive adversaries] We say that adaptive adversary $V$ is \emph{subtractive}, if for every $S\in \X^*$, we have $V(S)\subset S$. 
   We denote the class of all subtractive adaptive adversaries with $\Vsub$. 
\end{description}


In the presence of an adversary, a learner $A$ does not have direct access to an i.i.d. generated sample $S\sim p^m$ from ground-truth distribution $p\in \C$, but only indirect access via a manipulated sample $V(S)$. 

In general, we can not hope to approximate the ground-truth distribution $p$ to a total variation distance up to any $\epsilon> 0$, but rather, we have to figure in the budget of the adversary.
The budget of an adversary is some function $\mathrm{budget}: {\X^*}^{\X^*}  \times \naturals \to [0,1]$ and models their manipulation capabilities.

In cases, where the power of the adversary amounts to either adding or deleting instances, but not changing instances in any other way, the budget amounts to the edit distance.
In general, the budget for an adaptive additive adversary is thus defined by:

$\mathrm{budget}^{\mathrm{add}}: {\X^*}^{\X^*} \times \naturals \to [0,1]$ is defined by:
\[ \mathrm{budget}^{\mathrm{add}}(V,m) = \sup_{S\in \X^m} \frac{|V(S)|- |S|}{m}.\]
Similarly, the budget for a subtractive adversary is defined by
\[ \mathrm{budget}^{\mathrm{sub}}(V,m) =\sup_{S\in \X^m} \frac{|S|-|V(S)|}{m}.\]

In this work, we only consider adversaries that have fixed budgets and furthermore that those budgets are constant.
In particular, we will assume, that for both subtractive and additive adversaries $V$, for all $m\in \naturals$ and all $S_1,S_2\in \X^m$ we have $|V(S_1)| = |V(S_2)|$.
Furthermore, we assume that for every adversary $V$ there is a constant $\mathrm{budget}(V) = \eta$, such that for every $m\in \naturals$:
\[ \eta m -1 < m \cdot \mathrm{budget}(V,m) \leq \eta m. \]

We can now define robust PAC learning with respect to a specific adaptive adversary.



\begin{definition}[adaptive $\alpha$-robust with respect to adversary $V$]
      Let $\alpha\geq 1$. A class of distributions $\mathcal{C}$ is adaptively $\alpha$-robustly learnable w.r.t. adversary $V$, if there exists a learner $A$ and a sample complexity function $m_{\mathcal{C}}^{V,\alpha}: (0,1)^2 \to \naturals$, such that for every $p \in \mathcal{C}$, every $\epsilon,\delta \in (0,1)$ and every sample size $m \geq m_{\mathcal{C}}^{V,\alpha}(\epsilon,\delta)$    with probability $1-\delta$,
    \[\dtv(A(V(S)),p) \leq \alpha \cdot \mathrm{budget}(V)+ \epsilon.\]
   
    
  \end{definition}

If a class $\C$ is not $\alpha$-robustly learnable with respect to adversary $V$, we say that $V$ is a \emph{universal $\alpha$-adversary} for $\C$, as every learner for $\C$ fails against $V$.


In general, learners want to defend against more than one potential adversary, since a priori, they may not know the adversary's strategy. Thus in the next definition we define learnability with respect to a class of adversaries. One can also think of this as a strengthening of the adversary, as here the learner needs to choose the learning rule first, and the adversary can adapt their strategy to the selected learning rule. 

\begin{definition}
     Let $\alpha\geq 1$. A class of distributions $\mathcal{C}$ is adaptively $\alpha$-robustly learnable w.r.t. a class of adversaries $\mathcal{V}$, if there exists a learner $A$ and a sample complexity function $m_{\mathcal{C}}^{\mathcal{V},\alpha}: (0,1)^2 \to \naturals$, such that for every $p \in \mathcal{C}$ and every $V\in \mathcal{V}$ and every $\epsilon,\delta \in (0,1)$ and every $m \geq m_{\mathcal{C}}^{\V,\alpha}(\epsilon,\delta)$, with probability $1-\delta$,
    \[\dtv(A(V(S)),p) \leq \alpha \cdot \mathrm{budget}(V)+ \epsilon.\]
\end{definition}

We say a class $\C$ is \emph{adaptively additively $\alpha$-robustly learnable} if $\C$ is $\alpha$-robustly learnable with respect to the class of adversaries $\mathcal{V}_{\mathrm{add}}$.
We say a class $\C$ is \emph{adaptively subtractively $\alpha$-robustly learnable} if $\C$ is $\alpha$-robustly learnable with respect to the class of adversaries $\mathcal{V}_{\mathrm{sub}}$.

\ignore{\color{gray}
Lastly, we introduce a notion of $(\epsilon,\delta)$-success for a specific sample size $m$ to make it easier to state some of our theorems.
\begin{definition}[$\alpha$-robust $(\epsilon,\delta)$-success]
Let $\alpha\geq 1$, let $\C$ be a class of distributions and let $V$ be an adversary.
A sample size $m$ is said to \emph{guarantee $\alpha$-robust $(\epsilon,\delta)$-success} for learning class $\C$ with respect to learner $A$ and adversary class $\V$, if for every $p\in \C$ and every $V\in \V$, we have
\[\dtv(A(V(S)),p) \leq \alpha \cdot \mathrm{budget}(V,m)+ \epsilon,\]
with probability at least $1-\delta$ over $S\sim p^m$.

\end{definition}
}
    

\section{A realizable class with an adaptive adversary}
\label{obliviousadaptive}
In this section, we formally introduce the main result of this paper: there are classes of distributions which are learnable in the realizable case but not learnable in the presence of either adaptive additive or adaptive subtractive adversaries.
Since learnability in the realizable setting also implies learnability with an oblivious additive adversary~\citep{ben-david2023distribution}, this result also implies a separation between learnability with respect to oblivious and adaptive adversaries in the additive case.

{\color{teal}
In this section, we give the formal statement of the main result (Theorem \ref{thm:main}). We also describe Theorem \ref{thm:main}'s subject distribution class $\C_g$, and argue it is realizably learnable. Finally, we introduce a subtractive adversary $V_{\mathrm{sub},\eta}$ for the class. In subsequent sections we will use this adversary to prove the remaining parts of Theorem~\ref{thm:main}, which we now state.
}
\begin{theorem}\label{thm:main}
    For every superlinear function $g : \mathbb{R} \rightarrow \mathbb{R}$, there is class $\mathcal{C}_g$, such that
    \begin{itemize}
        \item $\mathcal{C}_g$ is realizably learnable with sample complexity $ m_{\C_g}^{\mathrm{re}}(\epsilon,\delta) \leq \log\left(\frac{1}{\delta}\right) g\left(\frac{1}{\epsilon}\right)$
        \item For every $\alpha\geq 1$,  $\mathcal{C}_g$ is not adaptively additive $\alpha$-robustly learnable. Moreover, for every $\alpha\geq 1$, there is an adaptive additive adversary $V_{\mathrm{add}}$, that is a universal $\alpha$-adversary for $\C_g$.
        \item For every $\alpha\geq 1$,  $\mathcal{C}_g$ is not adaptively subtractively $\alpha$-robustly learnable. Moreover, for every $\alpha\geq 1$, there is an adaptive subtractive adversary $V_{\mathrm{sub}}$, that is a universal $\alpha$-adversary for $\C_g$.
    \end{itemize}
\end{theorem}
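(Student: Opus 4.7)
The plan is to construct $\C_g$ explicitly and verify each of the three bullets in turn, leveraging the reduction machinery already assembled in Sections~\ref{sec:generaltechnique} and~\ref{sec:subtraciveadditive}. For the realizable-learnability bullet, I would adapt the class from~\citet{ben-david2023distribution} used in their realizable/agnostic separation: each distribution in $\C_g$ has easily-identifiable finite support whose size scales with $g$, so that an ``identify-the-support-then-empirical-distribution'' style learner already gives the claimed rate $\log(1/\delta) g(1/\epsilon)$. The argument is essentially a Chernoff plus union bound and should not be the difficult part of the proof.

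For the additive and subtractive bullets, I would exploit the pipeline set up in the paper: first prove the subtractive case by invoking Theorem~\ref{thm:universaladversary}, and then obtain the additive case (and its universal adversary) from Theorem~\ref{thm:subtractive->additive} together with Theorem~\ref{thm:universaladditive}. Concretely, for every $\alpha \geq 1$ I would choose $\eta > 0$ small enough that $\alpha\eta$ is strictly smaller than some fixed TV constant $c$ to be produced below, and let $V_{\mathrm{sub},\eta}$ be the subtractive adaptive adversary introduced at the end of Section~\ref{obliviousadaptive}, which deletes an $\eta$-fraction of the input multiset by a deterministic rule depending only on $S$.

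The substantive work is then to exhibit some $p \in \C_g$ and a meta-distribution $Q$ supported in $\C_g$ satisfying the two hypotheses of Theorem~\ref{thm:universaladversary} with respect to $V_{\mathrm{sub},\eta}$. Condition~(1) is purely geometric: I would take $Q$ to be uniform over a family $\{q_i\}\subset \C_g$, each at TV distance at least $c$ from $p$; the explicit structure of $\C_g$ makes it straightforward to place many such $q_i$'s far from $p$ while keeping them inside the class. Condition~(2) is the heart of the argument: I would show that with high probability over $S \sim p^m$ the subtractive rule of $V_{\mathrm{sub},\eta}$ can strip from $S$ exactly the points that witness ``$p$ vs.\ some $q_i$'', so that the law of $V_{\mathrm{sub},\eta}(p^m)$ becomes essentially equal to the mixture $\mathbb{E}_{q\sim Q}[V_{\mathrm{sub},\eta}(q^m)]$, forcing statistical indistinguishability.

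The main obstacle I anticipate is verifying this second condition rigorously: one must check that a deletion rule which has no access to $p$, and which is forced to delete the same number of points from every input, nevertheless succeeds simultaneously against $p^m$ and against typical draws from $q^m$ with $q \sim Q$. I expect to handle this by exploiting the superlinearity of $g$: the supports grow fast enough that for any fixed sample size $m$ only a vanishing fraction of the ``distinguishing coordinates'' appear in $S$, so an $\eta m$ budget suffices to erase them with high probability. Once Theorem~\ref{thm:universaladversary} delivers a universal subtractive $\alpha$-adversary for $\C_g$, Theorem~\ref{thm:subtractive->additive} immediately yields non-learnability in the additive case, and Theorem~\ref{thm:universaladditive} upgrades this to a single universal additive adversary $V_{\mathrm{add}}$, closing all three bullets.
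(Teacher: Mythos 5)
Your plan follows essentially the same route as the paper: the same class $\C_g$ from \citet{ben-david2023distribution} with its realizable learner, the same indicator-deleting adversary $V_{\mathrm{sub},\eta}$, the same choice of $p$ and a uniform meta-distribution $Q$ over far-away class members to verify the confusion conditions of Theorem~\ref{thm:universaladversary}, and the same use of Theorem~\ref{thm:subtractive->additive} plus Theorem~\ref{thm:universaladditive} to get additive non-learnability and a universal additive adversary. The only (inessential) imprecisions are that $V_{\mathrm{sub},\eta}$ is randomized rather than deterministic, and that superlinearity of $g$ is really what lets the budget $\eta\approx 1/g(j)$ needed to erase indicators sit far below the TV gap $\approx 1/j$, while the huge support $B_i$ (chosen as a function of $m$) is what kills birthday collisions among the odd-labelled points --- both points the paper's Lemma~\ref{lemma:C_g_conditions} makes quantitative.
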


\paragraph{Introducing class $\C_g$.}
Theorem \ref{thm:main} uses the class $\C_g$ from \citet{ben-david2023distribution} that was used to show a separation between realizable and agnostic learning.\footnote{This class is denoted $\Q_g$ in \cite{ben-david2023distribution}.}

For this let $ \{B_i \subset \naturals: B_i \text{ is finite}\}$ be an enumeration of all finite subsets indexed by $i\in \naturals$. Now for constants $j,k\in \naturals$, we define the following distribution as a mixture of two point masses $\delta_{(0,0)}$ and $\delta_{(i,j)}$ centered at $(0,0)$ and $(i,j)$ respectively, and a uniform distribution over the set $B_i$ denoted by $U_{B_i}$:
\begin{align*}
&p_{i,j,k} = \\
& \left(1 - \frac{1}{j}\right) \delta_{(0,0)}+ \left(\frac{1}{j} - \frac{1}{k}\right)U_{B_i\times\{2j+1\}} + \frac{1}{k}\delta_{(i,2j+2)}.
\end{align*}

Now for a function $g:\naturals \to \naturals$, we define the class
\[\C_g = \{p_{i,j,g(j)}: i\in \naturals, j\in \naturals \}.\]

\paragraph{$\C_g$ is realizably learnable.} We first note that this class is realizably learnable, using results from~\citet{ben-david2023distribution}.
\begin{lemma}[Claim~3.2 from \citet{ben-david2023distribution}]
\label{lemma:C_grealizable}
    For every monotone  function $g:\naturals \to \naturals$, the class $\C_g$ is learnable with sample complexity
    \[m_{\C_g}^{\mathrm{re}}(\epsilon,\delta) \leq \log\left(\frac{1}{\delta}\right) g\left(\frac{1}{\epsilon}\right).\]
\end{lemma}
The realizable learner is based on the idea, that for every distribution the learner only needs to observe a unique indicator bit in order to perfectly identify the distribution. 
{\color{teal}
\paragraph{Subtractive adversary $V_{\mathrm{sub},\eta}$ for $\C_g$} We will now introduce a subtractive adversary $V_{\mathrm{sub},\eta}$ for $\C_g$. The properties of this adversary will then be used in later sections of the paper to show that $\C_g$ is neither adaptive additive nor adaptive subtractive robustly learnable. 
}

For a sample $S$, we partition $S$ into the non-indicators $\mathrm{constants}(S) = \{(0,0)\in S \}$ and $\mathrm{odds}(S) = \{(o,2j+1)\in S : o,j \in \naturals\}$ and indicators
$\mathrm{ind}(S) = \{(i,2j+2)\in S : i,j \in \naturals\}$, i.e. $S = \mathrm{constants}(S) \cupdot \mathrm{odds}(S) \cupdot \mathrm{ind}(S) $. We further denote non-indicators $\mathrm{noind}(S) = \mathrm{constants}(S) \cup \mathrm{odds}(S)$. Note that $S$, $\mathrm{constants}(S)$, $\mathrm{odds}(S)$, $\mathrm{noind}(S)$, and $\mathrm{ind
}(S)$ are all multisets and thus repetitions of elements are counted. 

\ignore{\color{gray} Now for a function $b':\naturals \to \naturals$ determining the budget $b(S) = \frac{b'(|S)|}{|S|}$, we define the subtractive adversary $V_{\mathrm{sub},b'}: \X^* \to \X^*$ as the adversary that removes as many elements {\color{teal} belonging to} $\mathrm{indicators}(S)$ {\color{teal} from $S$ } {\color{teal} as possible while meeting the budget constraint. If there are no elements in $\mathrm{indicators}(S)$ left to remove, $V_{\mathrm{sub},b'}$ }chooses to remove elements randomly to match the budget. More formally, $V_{\mathrm{sub},b'}$ is defined by
}
{\color{teal}
 We now define the subtractive adversary $V_{\mathrm{sub},\eta}: \X^* \to \X^*$ as the adversary that removes from $S$ as many elements belonging to $\mathrm{ind}(S)$ as possible while meeting the budget constraint $\mathrm{budget}(V_{\mathrm{sub},\eta}) \leq \eta$. If there are no elements in $\mathrm{ind}(S)$ left to remove, $V_{\mathrm{sub},\eta}$ chooses to remove elements randomly to match the budget. Formally,
}

\[V_{\mathrm{sub},\eta}(S) = \begin{cases}
    \mathrm{choose}(\mathrm{noind}(S),(1-\eta)|S|)\\ 
    \qquad \qquad\text{, if } |\mathrm{ind}(S)|\leq \eta|S| \\
    \mathrm{noind}(S)\ \cup\\
    \mathrm{choose} (\mathrm{ind}(S),|\mathrm{ind}(S)|-\eta|S|)\\
    \qquad \qquad\text{, if } |\mathrm{ind}(S)| > \eta|S|  \\
\end{cases},\]
where $\mathrm{choose}(S,n)$ is the random variable, which selects a uniformly chosen random subset $S'\subset S$ of size $n$.
It is easy to see that $V_{\mathrm{sub},\eta}$ is a subtractive adversary with $\mathrm{budget}^{\mathrm{sub}}(V_{\mathrm{sub},\eta}) = \eta$. 
 We note, that the definition of this adversary $V_{\mathrm{sub}, \eta}$ does not require any knowledge of the ground-truth distribution $p$. While this adversary is in idea similar to the subtractive oblivious adversaries that were shown to be successful against learners in \citep{ben-david2023distribution}, these past results do not yet show that $V_{\mathrm{sub}, \eta}$ is a successful adversary for $\C_g$. 
{\color{teal} In the following section, we introduce the notion of an adversary \emph{successfully confusing} samples generated from members of $\C_g$ to show that an adversary is a universal $\alpha$-adversary. We then show that for every $\alpha \geq 1$, there exists $\eta \in (0,1)$, such that $V_{\mathrm{sub},\eta}$ satisfies this notion, and hence is indeed a universal $\alpha$-adversary for $\C_g$. We then also show that this condition implies that $\C_g$ is not adaptive additive $\alpha$-robustly learnable (Section~\ref{sec:subtraciveadditive}). Moreover, using the same condition, we show that for every $\alpha \geq 1$, there exists a universal additive $\alpha$-adversary for $\C_g$ (Section~\ref{sec:additiveuniversal}). The proof of Theorem~\ref{thm:main} can be found in Section~\ref{sec:additiveuniversal} at the end of the paper.

\section{General technique for showing distribution classes cannot be learned adaptively}
\label{sec:generaltechnique}

{\color{teal}
In this section, we will show a general lower bound for learning in the presence of adaptive adversaries. We introduce the notion of an adversary $V$ or a pair of adversaries $(V_1,V_1)$ \emph{successfully confusing} samples generated from a class $\C$ and show that this condition is sufficient to show a class $\C$ cannot be learned in the presence of adversary $V$ (or pair of adversaries $(V_1,V_2)$, respectively). Essentially, the result shows that if adversaries can make samples from certain random distributions defined over $\C$ sufficiently indistinguishable, the adversary can also fool any learner. To state the definition of successfuly confusing a $\C$-generated sample, we introduce some notation.
}

For a distribution $Q$ over a class of distributions $\mathcal{C}$, let $|Q|^m$ denote the distribution over $\mathcal{X}^m$ that results from first sampling $q\sim Q$ and then $S\sim q^m$. 
Furthermore, let $\mathrm{supp}(Q)$ denote the support of $Q$. In the following we will pick distributions $p\in \C$ and $\Q\in \Delta(\C)$ such that for every $q\in \mathrm{supp}(Q)$ the total variation distances $\dtv(p,q)$ are upper bounded by some constant.

If for such distributions $p$ and $Q$ there are adaptive adversaries $V_1,V_2$ that can make the sample distributions $V_1(p^m)$ and $V_2(|Q|^m)$ sufficiently hard to distringuish, then the class $\C$ is not robustly learnable with respect to $\{V_1,V_2\}$. To show this, we introduce the following notion:
{
\color{teal}

\begin{definition}
    Let $\C$ be a class of distributions. 
    We say a pair of adversaries $(V_1,V_2)$ \emph{successfully $(\gamma,\zeta)$-confuses} $\C$-generated samples of size $m$ if there is a distribution $p\in \C$ and a meta-distribution $Q\in \Delta(\C)$ with 
    \begin{itemize}
        \item for all $q\in \mathrm{supp}(Q)$ we have $\dtv(p,q) > \gamma$
        \item $\dtv\left(V_1(|Q|^m),V_2(p^m)\right) < \zeta.$
    \end{itemize} 
    If $V_1=V_2$, we also say $V_1$ successfully $(\gamma,\zeta)$-confuses $\C$-generated samples of size $m$.
\end{definition}
Successful adversaries have large $\gamma$ and small $\zeta$. We now state the main theorem of this section which shows that if adversaries successfully confuse $\C$-generated samples for every size $m$, then $\C$ is not robustly learnable with respect to those adversaries.
}
\begin{theorem}\label{thm:universaladversary}
    Let $\C$ be a class of distributions and $\V\supset\{V_1,V_2\}$ a set of adaptive adversaries with budgets $\mathrm{budget}(V_1) = \eta_1$ and $\mathrm{budget}(V_2) = \eta_2$.
    {\color{teal}
    Let $\gamma',\zeta\in (0,1)$ and define
     \[\gamma = 2\alpha \max\left\{\eta_1,  \eta_2\right\} + 2\gamma'.\] 
    If for every $m\in \naturals$ the pair of adversaries $(V_1,V_2)$ successfully $(\gamma,\zeta)$-confuses $\C$-generated samples of size $m$, then
 $\C$ is not $\alpha$-robustly learnable with respect to $\V$.

    Furthermore, if $V_1 =V_2$, then $V_1$ is a universal $\alpha$-adversary for $\C$.}
\end{theorem}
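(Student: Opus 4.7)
The plan is a Le Cam--style indistinguishability argument carried out by contradiction. Suppose some learner $A$ with sample-complexity function $m_\C^{\V,\alpha}$ witnesses $\alpha$-robust learnability of $\C$ with respect to $\V$. Pick $\epsilon < \gamma'$ and $\delta$ satisfying $2\delta + \zeta < 1$ (possible since $\zeta < 1$), and set $m := m_\C^{\V,\alpha}(\epsilon,\delta)$. By the confusion hypothesis applied at this $m$, there exist $p \in \C$ and $Q \in \Delta(\C)$ with $\dtv(p,q) > \gamma$ for every $q \in \supp(Q)$ and with $\dtv(V_1(|Q|^m), V_2(p^m)) < \zeta$.

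Next, I would invoke the robust-learning guarantee twice. Against $V_2$ on ground truth $p$, with probability at least $1-\delta$ the output of $A$ lies in the ball $B := \{r \in \Delta(\X) : \dtv(r,p) \leq \alpha\eta_2 + \epsilon\}$. Against $V_1$ on ground truth $q$, with probability at least $1-\delta$ the output lies within $\alpha\eta_1 + \epsilon$ of $q$; averaging over $q \sim Q$ preserves this bound, and the triangle inequality with $\dtv(p,q) > \gamma$ then places the output at distance strictly greater than $\gamma - \alpha\eta_1 - \epsilon$ from $p$. The identity $\gamma = 2\alpha\max(\eta_1,\eta_2) + 2\gamma' \geq \alpha(\eta_1+\eta_2) + 2\gamma'$ together with $\epsilon < \gamma'$ yields $\gamma - \alpha\eta_1 - \epsilon > \alpha\eta_2 + \epsilon$, so with probability at least $1-\delta$ the output of $A$ on input drawn from $V_1(|Q|^m)$ lies outside $B$.

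Finally, the contradiction comes from data processing. Viewing $A$ as a Markov kernel applied to inputs with distributions $V_2(p^m)$ and $V_1(|Q|^m)$ (whose TV distance is less than $\zeta$), the induced output distributions $R_p$ and $R_Q$ satisfy $\dtv(R_p,R_Q) < \zeta$. Hence $|\Pr[R_p \in B] - \Pr[R_Q \in B]| < \zeta$, yet the analysis above yields $\Pr[R_p \in B] \geq 1-\delta$ and $\Pr[R_Q \in B] \leq \delta$, so $1 - 2\delta < \zeta$, contradicting the choice of $\delta$. For the universal-adversary conclusion when $V_1 = V_2$, I observe that the witnesses $p$ and $Q$ depend only on $m$ (i.e., on the sample-complexity function the learner commits to), not on $A$ itself, so the single adversary $V_1$ defeats every candidate learner once that learner fixes $m$ large enough.

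The main technical subtlety I anticipate is making the data-processing step airtight under the randomness of $A$, $V_1$, and $V_2$. I would handle this by coupling the two input distributions optimally so that they coincide with probability $1 - \dtv(V_1(|Q|^m), V_2(p^m))$ and then applying $A$'s (independent) internal randomness on top of this coupling; whenever the inputs agree the outputs have identical conditional distributions, which immediately yields the required inequality $\dtv(R_p,R_Q) \leq \dtv(V_1(|Q|^m),V_2(p^m))$.
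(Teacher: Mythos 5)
Your proof is correct, and it reaches the conclusion by a slightly different route than the paper. The paper factors the argument through Lemma~\ref{lemma:pac->distinct} (following the cited result of \citet{anonymous}): there, only the learner's success on $p$ under $V_2$ is used, the event $\{\dtv(A(\cdot),p)\le\gamma/2\}$ is transferred from $V_2(p^m)$ to $V_1(|Q|^m)$ via the TV bound, and then a worst-case $q_{\mathrm{max}}\in\supp(Q)$ is extracted by maximizing over the support, so that the learner provably fails on the single distribution $q_{\mathrm{max}}$ against $V_1$; the theorem proof then checks that failure probability $\tfrac12-\tfrac\zeta2$ with error $\gamma/2$ contradicts $\alpha$-robustness. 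You instead invoke the robustness guarantee two-sidedly (on $p$ against $V_2$ and on every $q\in\supp(Q)$ against $V_1$), push both input distributions through $A$ viewed as a channel, and derive the purely numerical contradiction $1-2\delta<\zeta$ from the data-processing inequality; your coupling remark correctly handles the joint randomness of $A$, $V_1$, $V_2$. The quantitative thresholds coincide ($\delta<\tfrac{1-\zeta}{2}$, $\epsilon<\gamma'$, and $\gamma-\alpha\eta_1-\epsilon>\alpha\eta_2+\epsilon$ matches the paper's $\alpha\eta_i+\epsilon\le\gamma/2$ computation). What the paper's decomposition buys is a reusable intermediate statement that holds for arbitrary learners (it is reused verbatim for the $f$-robust variant in the appendix) and that exhibits an explicit hard instance $r\in\C$; what your version buys is a self-contained, more streamlined argument that avoids the intermediate lemma, at the cost of not naming the hard distribution. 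Your closing remark on the $V_1=V_2$ case is also fine: the adversary is fixed independently of the learner, and the argument applies to every learner, which is exactly the definition of a universal $\alpha$-adversary. The only shared informality is the measurability of the TV-ball $B$ in the learner's output space, which the paper's own proof also takes for granted.
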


{\color{teal}
We show this result by the following lemma {\color{teal} which makes the same claim for a fixed sample size $m$}.

    


\begin{lemma} \label{lemma:pac->distinct}
{\color{teal}
Let $\C$ be a class of distributions, let $A$ be a learner and $(V_1,V_2)$ a pair of adversaries that successfully $(\gamma,\zeta)$-confuses $\C$-generated samples of size $m$.

Then for every learner $A$, there is $r\in \mathcal{C}$, such that 
    \[\mathcal{P}_{S\sim r^m }\left[\dtv(A(V_1(S)), r) > \frac{\gamma}{2}\right] \geq \frac{1}{2} - \frac{\zeta}{2} \] 
    or 
     \[\mathcal{P}_{S\sim r^m }\left[\dtv(A(V_2(S)), r) > \frac{\gamma}{2}\right] \geq \frac{1}{2} - \frac{\zeta}{2} .\] 

}


\end{lemma}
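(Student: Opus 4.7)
The plan is to argue by contraposition. Suppose there is a learner $A$ such that for every $r \in \C$,
\[\Pr_{S \sim r^m}\bigl[\dtv(A(V_1(S)), r) > \gamma/2\bigr] < 1/2 - \zeta/2\]
and
\[\Pr_{S \sim r^m}\bigl[\dtv(A(V_2(S)), r) > \gamma/2\bigr] < 1/2 - \zeta/2.\]
From this I will construct a statistical distinguisher between the distributions $V_1(|Q|^m)$ and $V_2(p^m)$ whose advantage exceeds $\zeta$, directly contradicting the $(\gamma,\zeta)$-confusion hypothesis.

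The first step is to specialize these inequalities: instantiate the second clause at $r = p$, and instantiate the first clause at each $r = q \in \supp(Q)$ before averaging over $q \sim Q$. Passing to complements yields
\[\Pr_{S \sim p^m}\bigl[\dtv(A(V_2(S)), p) \leq \gamma/2\bigr] > 1/2 + \zeta/2\]
and
\[\Pr_{q \sim Q,\, S \sim q^m}\bigl[\dtv(A(V_1(S)), q) \leq \gamma/2\bigr] > 1/2 + \zeta/2.\]

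The second step is to translate the $V_1$-side statement from being about $q$ into one about $p$, using the confusion gap $\dtv(p,q) > \gamma$ for $q \in \supp(Q)$. By the triangle inequality, whenever $\dtv(A(V_1(S)), q) \leq \gamma/2$ we must have $\dtv(A(V_1(S)), p) > \gamma/2$, and therefore
\[\Pr_{q \sim Q,\, S \sim q^m}\bigl[\dtv(A(V_1(S)), p) > \gamma/2\bigr] > 1/2 + \zeta/2.\]

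The final step is to package the event $E(T) = \{\dtv(A(T), p) \leq \gamma/2\}$ on the sample $T$, absorbing the internal randomness of $A$ (and of the adversaries) into the ambient probability measure. The two preceding displays yield $\Pr_{T \sim V_2(p^m)}[E] > 1/2 + \zeta/2$ while $\Pr_{T \sim V_1(|Q|^m)}[E] < 1/2 - \zeta/2$. Since $\dtv$ does not increase under (possibly randomized) post-processing, this forces $\dtv(V_1(|Q|^m), V_2(p^m)) > \zeta$, contradicting the confusion assumption. The only real obstacle is bookkeeping, namely verifying that $E$ is a legitimate event on the sample space once $A$'s coins are integrated out, so that the post-processing bound applies uniformly; the conceptual content is a standard distinguisher-via-triangle-inequality reduction.
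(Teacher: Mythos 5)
Your proposal is correct and is essentially the paper's own argument run in the mirror direction: the same ingredients appear (negating the conclusion for every $r\in\C$, passing to complements, the triangle inequality turning $\dtv(A(V_1(S)),q)\leq\gamma/2$ into $\dtv(A(V_1(S)),p)>\gamma/2$ via $\dtv(p,q)>\gamma$, and total variation as a bound on the advantage of any test), except that the paper transfers the learner's success at $p$ across the TV bound and lands the contradiction on the learner's guarantee at a maximizing $q_{\mathrm{max}}\in\supp(Q)$, whereas you average over $q\sim Q$ directly and land the contradiction on the bound $\dtv(V_1(|Q|^m),V_2(p^m))<\zeta$ itself. This is a cosmetic reorganization rather than a different route (with the small benefit that you avoid selecting an argmax over $\supp(Q)$), and your closing remark about absorbing the learner's and adversaries' randomness into a randomized test is exactly the bookkeeping needed for the data-processing step.
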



Lemma \ref{lemma:pac->distinct} is a corollary of a result of \citet{anonymous}, which makes the connection between the indistinguishability of the sample distributions $|Q|^m$ and $p^m$, and hardness of learning. For completeness, we include the full proof of Lemma~\ref{lemma:pac->distinct} in Appendix \ref{sec:pac->distinct-proof}, but we note that it follows the exact same argument as the proof of the cited result.

\begin{lemma}[Lemma~4 of \citet{anonymous}]
\label{lemma:anonymous}
    Let $\C_1$, $\C_2$ be such that for all $q\in \mathcal{C}_1$ and all $p \in \mathcal{C}_2$, we have $\dtv(p,q) > \gamma$. If there is a distribution $Q$ over $\mathcal{C}_1$ and $p\in \mathcal{C}_2$ such that for $\zeta\in (0,1/2)$ we have
    $\dtv(|Q|^m,p^m) < \zeta$, 
    then for ever learner $A$, there is $r\in \mathcal{C}_1 \cup \mathcal{C}_2$, such that $$\mathcal{P}_{S\sim r^m }\left[\dtv(A(S), r) > \frac{\gamma}{2}\right] \geq \frac{1}{2} - \frac{\zeta}{2}.$$ 
\end{lemma}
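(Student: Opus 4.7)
The plan is a contradiction argument that exploits the strict separation $\dtv(p,q) > \gamma$ together with the indistinguishability hypothesis $\dtv(|Q|^m, p^m) < \zeta$. Suppose, for contradiction, that for every $r \in \C_1 \cup \C_2$ we have $\Pr_{S \sim r^m}[\dtv(A(S), r) > \gamma/2] < 1/2 - \zeta/2$, i.e.\ the learner succeeds (outputs within $\gamma/2$ of $r$) with probability strictly greater than $1/2 + \zeta/2$. I will extract from $A$ a single $[0,1]$-valued test function whose expectations under $|Q|^m$ and $p^m$ differ by more than $\zeta$, contradicting the assumed indistinguishability.

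\textbf{Key steps.} First, define $\phi_r(S) := \Pr_A[\dtv(A(S), r) \leq \gamma/2] \in [0,1]$, where the probability is over the internal randomness of the (possibly randomized) learner. The contradiction hypothesis then reads $\mathbb{E}_{S \sim r^m}[\phi_r(S)] > 1/2 + \zeta/2$ for every $r \in \C_1 \cup \C_2$. Second, note that for every sample $S$, every $q \in \C_1$, and every $p \in \C_2$, the triangle inequality applied to $\dtv(p,q) > \gamma$ makes the events $\{\dtv(A(S),p) \leq \gamma/2\}$ and $\{\dtv(A(S),q) \leq \gamma/2\}$ mutually exclusive, so pointwise $\phi_p(S) + \phi_q(S) \leq 1$. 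Third, for each $q \in \C_1$, integrating this pointwise bound under $S \sim q^m$ and using the contradiction hypothesis at $r = q$ gives
\[
\mathbb{E}_{S \sim q^m}[\phi_p(S)] \;\leq\; 1 - \mathbb{E}_{S \sim q^m}[\phi_q(S)] \;<\; \tfrac{1}{2} - \tfrac{\zeta}{2};
\]
averaging over $q \sim Q$ yields $\mathbb{E}_{S \sim |Q|^m}[\phi_p(S)] < 1/2 - \zeta/2$. Fourth, the contradiction hypothesis at $r = p$ directly gives $\mathbb{E}_{S \sim p^m}[\phi_p(S)] > 1/2 + \zeta/2$. The gap between these two expectations exceeds $\zeta$, so the standard inequality $|\mathbb{E}_{\mu_1}[f] - \mathbb{E}_{\mu_2}[f]| \leq \dtv(\mu_1, \mu_2)$ for $[0,1]$-valued $f$ forces $\dtv(|Q|^m, p^m) > \zeta$, the sought contradiction.

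\textbf{Main obstacle.} The only real subtlety is handling the internal randomness of $A$ cleanly: the $0/1$-valued indicator $\mathbf{1}[\dtv(A(S), r) \leq \gamma/2]$ entangles $A$'s coin tosses with the sampling randomness, and fixing a random tape breaks the integration against $|Q|^m$ and $p^m$. Marginalizing over $A$'s coins into the $[0,1]$-valued function $\phi_r$, combined with the TV upper bound on expectation gaps for $[0,1]$-valued tests, is the move that makes the argument go through. After that, the proof is pure bookkeeping via linearity of expectation and the triangle inequality; conceptually it is the familiar Le Cam two-point testing lower bound, with the mixture $|Q|^m$ playing the role of one of the two hypotheses.
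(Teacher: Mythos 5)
Your proof is correct, and it is at heart the same Le Cam-style two-point argument that the paper uses (the paper does not reprove Lemma~4 of the cited work directly, but its Appendix proof of Lemma~\ref{lemma:pac->distinct} follows, by its own account, the exact same argument, so that is the natural comparison). The ingredients coincide — contradiction hypothesis, triangle-inequality disjointness of the events $\{\dtv(A(S),p)\le\gamma/2\}$ and $\{\dtv(A(S),q)\le\gamma/2\}$, mixture structure of $|Q|^m$, and the fact that probabilities/expectations of bounded tests move by at most the total variation distance — but you organize them differently. The paper first transfers the learner's success at $p$ from $p^m$ to $|Q|^m$ via the TV hypothesis, then extracts a single worst component $q_{\mathrm{max}}\in\mathrm{supp}(Q)$ by a max-over-support step and lands the contradiction on the learner's assumed guarantee at $q_{\mathrm{max}}$; you instead apply the disjointness bound at every $q$, average it over $Q$ directly (no $q_{\mathrm{max}}$ extraction), and land the contradiction on the TV hypothesis, showing $\dtv(|Q|^m,p^m)>\zeta$. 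Your version is slightly cleaner in one respect: by marginalizing the learner's internal coins into the $[0,1]$-valued tests $\phi_r$ you handle randomized learners explicitly, whereas the paper's step ``for any predicate $F$, probabilities under TV-close distributions differ by at most $\zeta$'' implicitly treats $\dtv(A(S),p)\le\gamma/2$ as a deterministic predicate of $S$; the paper's version, in turn, makes the hard instance more tangible by exhibiting the specific component $q_{\mathrm{max}}$ on which the learner fails. Either way the quantitative conclusion $\frac12-\frac\zeta2$ is the same.
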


{\color{teal}
The proof of both Lemma~\ref{lemma:pac->distinct} and Theorem~\ref{thm:universaladversary} can be found in the appendix. 
Furthermore, in Appendix~\ref{app:example}
we give an example which illustrates how these lemmas can be applied. 
A similar intuition to that example is used in the next section to show that $V_{\mathrm{sub},\eta}$ successfully confuses $\C_g$. 
}

\subsection{$V_{\mathrm{sub},\eta}$ is universal adversary for $\C_g$} \label{sec:definitionVsub}
In this subsection we show that for every $\alpha\geq 1$ there is $\eta\in (0,1)$, such that $V_{\mathrm{sub},\eta}$ is a universal $\alpha$-adversary for $\C_g$ (recall the constructions of these objects as described in Section~\ref{obliviousadaptive}). We will first show that  $V_{\mathrm{sub},\eta}$ successfully confuses $\C_g$-generated samples and then apply Theorem~\ref{thm:universaladversary}.
\begin{lemma}\label{lemma:C_g_conditions}
For every $\alpha\geq 1$, there is $\eta, \gamma'\in (0,1)$, such that for the subtractive adversary $V_{\mathrm{sub},\eta}$ the following holds:
    For every $m\geq 1$ there are distributions $p \in \C_g$ and $Q \in \Delta(\C_g)$ such that
    \begin{itemize}
        \item for every $q\in \mathrm{supp}(Q)$:
        \[\dtv(p,q) \geq 4 \alpha \cdot \mathrm{budget} (V_{\mathrm{sub},\eta}) + 4\gamma' \]
        \item $\dtv(V_{\mathrm{sub},\eta}(p^m ),V_{\mathrm{sub},\eta}(|Q|^m ) ) \leq \frac{1}{8}$.
    \end{itemize}
\end{lemma}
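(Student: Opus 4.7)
The plan is to exhibit, depending only on $\alpha$, a bounded integer $j_0$ and constants $\eta, \gamma' \in (0,1)$, and for each $m$ a distribution $p \in \C_g$ together with a meta-distribution $Q$ supported on members of $\C_g$ that share $p$'s $j$-parameter $j_0$ but differ in their base sets $B_i$. Using the superlinearity of $g$, fix $j_0$ with $g(j_0)/j_0 \geq 256\alpha$, and set $\eta := 1/(16\alpha j_0)$, $\gamma' := 1/(32 j_0)$, so that $4\alpha\eta + 4\gamma' = 3/(8 j_0)$. Given $m$, choose an even integer $M$ large enough that $M \geq 200 \max\{1, m^2/j_0^2\}$, set $M' = M/2$ and $B^* = [M]$, and let $i^*$ and $\{i_k\}_{k=1}^{\binom{M}{M'}}$ be the enumeration indices whose sets are $B^*$ and all $M'$-subsets of $B^*$, respectively. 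Take $p := p_{i^*, j_0, g(j_0)}$ and $Q := \mathrm{Unif}(\{p_{i_k, j_0, g(j_0)}\}_k)$.

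For condition (1), a direct $L_1$ computation on the three pieces of the $p_{i,j,k}$'s (the $(0,0)$-atom, the middle-mass uniform on $B_i \times \{2j_0+1\}$, and the indicator atom at $(i, 2j_0+2)$) yields
$\dtv(p, p_{i_k, j_0, g(j_0)}) = (1 - M'/M)(1/j_0 - 1/g(j_0)) + 1/g(j_0) \geq 1/(2 j_0) > 3/(8 j_0) = 4\alpha\eta + 4\gamma'$.

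For condition (2), let $\mathcal{E}$ be the event that $V_{\mathrm{sub},\eta}(S)$ contains no indicator point. The indicator count $|\mathrm{ind}(S)|$ is $\mathrm{Binomial}(m, 1/g(j_0))$ with mean $\mu = m/g(j_0)$, and the parameter choice forces $\eta m = 16\mu$ whenever $\eta m \geq 1$. Markov's inequality then gives $\P(\mathcal{E}^c) \leq 1/16$ in both regimes: when $\eta m \geq 1$, because $\P(|\mathrm{ind}(S)| > \eta m) \leq \mu/(\eta m) = 1/16$; when $\eta m < 1$ (and $V$ is inactive), because $\P(|\mathrm{ind}(S)| \geq 1) \leq \mu \leq 1/16$ from $g(j_0)/j_0 \geq 256\alpha$. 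On $\mathcal{E}$, exchangeability of the non-indicator coordinates reduces $V_{\mathrm{sub},\eta}(S)$ to some deterministic number $m' \leq m$ of i.i.d.\ draws from the non-indicator part of the underlying distribution. The key identity $\mathbb{E}_k U_{B_{i_k}} = U_{B^*}$ holds because each element of $B^*$ lies in the same number of $M'$-subsets, so the single-sample marginals of $p$ and $\mathbb{E}_k q_k$ agree on non-indicators; the discrepancy is a second-moment (birthday/collision) effect bounded by $\dtv(U_{B^*}^{n_1}, \mathbb{E}_k U_{B_{i_k}}^{n_1}) = O(n_1^2/M)$, where $n_1$ is the number of middle-mass points. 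Since the contributions from $n_1 \in \{0,1\}$ vanish (marginals match), taking expectations gives $\mathbb{E}[\dtv(\cdot \mid n_1)] \leq O(\mathbb{E}[n_1^2]/M) \leq O\bigl((m^2/j_0^2 + m/j_0)/M\bigr) \leq 1/16$ by the choice of $M$. Combining, $\dtv(V_{\mathrm{sub},\eta}(p^m), V_{\mathrm{sub},\eta}(|Q|^m)) \leq 1/16 + 1/16 = 1/8$.

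The main technical obstacle is the birthday-style bound $\dtv(U_{B^*}^{n_1}, \mathbb{E}_k U_{B_{i_k}}^{n_1}) = O(n_1^2/M)$ for a uniformly random $M'$-subset $B_{i_k}$ of $B^*$: one must show that i.i.d.\ samples from $B^*$ are statistically indistinguishable from i.i.d.\ samples from a uniformly random size-$(M/2)$ subset of $B^*$ so long as $n_1 \ll \sqrt{M}$. Taking \emph{all} $\binom{M}{M'}$ subsets as the $B_{i_k}$'s is essential, since it guarantees the exact symmetry $\mathbb{E}_k U_{B_{i_k}} = U_{B^*}$ that kills the first-order terms and leaves only the collision discrepancy; a smaller, less symmetric subfamily would have to be combinatorially designed to preserve this cancellation. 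A secondary issue is uniformity across $m$: the same fixed $\eta, \gamma'$ must simultaneously handle the ``small $m$'' regime ($\eta m < 1$, $V$ idle, indicators naturally rare) and the ``large $m$'' regime ($\eta m \geq 1$, indicator count controlled via the Markov bound), which is why both bounds on $\P(\mathcal{E}^c)$ are tuned to give the same $1/16$ via the single condition $g(j_0)/j_0 \geq 256\alpha$.
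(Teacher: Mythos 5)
Your proposal is correct and follows essentially the same route as the paper's proof: fix $j$ via superlinearity of $g$ so that $g(j)/j$ dominates a constant multiple of $\alpha$, take $p$ with a large base set and $Q$ uniform over all equal-size subsets of it (so that, conditioned on no collisions among the middle-mass points, the two sample laws coincide by symmetry), and bound the TV of the corrupted sample distributions by a Markov bound on the indicator count exceeding the budget plus a birthday-paradox collision bound, with $\eta,\gamma'$ chosen independently of $m$ and only the ground-set size depending on $m$. The differences (polynomial rather than doubly-exponential ground set, half-size rather than vanishing-fraction subsets, explicit treatment of the $\eta m<1$ regime, and combining via a single conditioning event whose probability is identical under both measures instead of summing four bad-event probabilities) are cosmetic and all steps check out with the stated constants.
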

\begin{proof}[Proof sketch]
We first note that for $ p = p_{i,j,k}$ where $|B_i|=2^{2^n}$ is arbitrary and $D_{i,n,j,k}= \{p_{i',j,k} : B_{i'} \subset B_i: |B_{i'}| = 2^n \}$, we have that for every $q\in D_{i,n,j,k}$
 \begin{align*}
 \dtv(p, q) &\geq \left(\frac{1}{j} - \frac{1}{k}\right)\dtv(U_{B_i \times {2j}}, U_{B_{i'} \times {2j}}) \\
 &\geq \frac{1}{2} \left(\frac{1}{j} - \frac{1}{k}\right).
 \end{align*}
 
Furthermore, consider $Q = U_{D_{i,n,j.k}}$.  We note that the distributions $V_{\mathrm{sub},\eta}(p^m)$ and $V_{\mathrm{sub},\eta}(|Q|^m)$ are both distributions over multisets $S' = \mathrm{constants}(S') \cup \mathrm{odds}(S') \cup \mathrm{ind}(S') $. We further note that the distributions of the count of each of those subsets are the same for both $ S' \sim V_{\mathrm{sub},\eta}(p^m)$ and $S'\sim V_{\mathrm{sub},\eta}(|Q|^m)$.
Furthermore, since all elements of $\mathrm{constants}(S')$ are the same, we also have that the probability distributions of $\mathrm{constants}(S')$ are the same for both $ S' \sim V_{\mathrm{sub},\eta}(p^m)$ and $S' \sim V_{\mathrm{sub},\eta}(|Q|^m)$.
We also observe that the distributions for $\mathrm{odds}(S')$ conditioned on $S' \sim V_{\mathrm{sub},\eta}(p^m)$ and $S' \sim V_{\mathrm{sub},\eta}(|Q|^m)$ are the same, if $\mathrm{odds}(S')$ does not contain any repeated elements.
Lastly, we note that while the indicators of samples from $p$ and samples from $Q$ differ, the adversary $V_{\mathrm{sub},\eta}$ deletes all these elements, if $|\mathrm{ind}(S)|$ does not exceed $\eta |S|$.  

Taking all of these observations together, we can bound the total variation distance in terms of repeating elements in $\mathrm{odds}(S)$ and the probability that $|\mathrm{ind}(S)|$ exceeds the budget of the adversary.
\begin{align*}
   & \dtv(V_{\mathrm{sub},\eta}(p^m),V_{\mathrm{sub},\eta}(|Q|^m) )\\
    &\leq  \mathbb{P}_{S\sim p^m}[\mathrm{odds}(S) \text{ contains repeated elements}]\\
     &\quad+\mathbb{P}_{S\sim |Q|^m}[\mathrm{odds}(S) \text{ contains repeated elements}]\\
    &\quad + \mathbb{P}_{S\sim p^m}[|\mathrm{ind}(S)| > \eta|S|]\\
    &\quad + \mathbb{P}_{S\sim |Q|^m}[|\mathrm{ind}(S)| > \eta|S|].\\
\end{align*}
The first two terms can each be upper bounded by $ 1 - \left(1 -\frac{m}{2^n}\right)^m$ using the birthday paradox.
 The last two terms can each be upper bounded by 
 \begin{align*}
      &\mathbb{P}_{S\sim p^m}\left[|\mathrm{ind}(S)| > \eta|S|\right] + \mathbb{P}_{S\sim |Q|^m}\left[|\mathrm{ind}(S)| > \eta|S|\right]\\
&\leq  \frac{\mathbb{E}_{S\sim p^m}[|\mathrm{ind}(S)|]}{\eta|S|} + \frac{\mathbb{E}_{S\sim |Q|^m}[|\mathrm{ind}(S)|]}{\eta|S|}\leq 2 \cdot \frac{\frac{m}{k}}{\eta m} = \frac{2}{k \eta},
 \end{align*}
using Markov's inequality. 

{\color{teal}
Since we are considering distributions in $\C_g$, we note that the distributions $D_{i,n,j,k}$ need to be of the form $D_{i,n,j,g(j)}$. We now want to argue that for appropriate choices of $j$ and $\eta$ (both independent of $m$), as well as for $n$ given $m$, the inequalities of the theorem are satisfied. 
First, we note that since $g$ grows faster than linear, it is possible to pick $j$ in such a way that it satisfies the inequality
\[ g(j) \geq 1024 c\alpha j. \]
Given such $j$, we then pick $\eta = \frac{32}{g(j)}$ and $\gamma'= \frac{\alpha}{g(j)}$.
This ensures, that for every $n\in \naturals$ and every $q\in D_{i,n.j,g(j)}$, we get
\begin{align*}
\dtv(p,q) &\geq \frac{1}{2} \left(\frac{1}{j} - \frac{1}{g(j)}\right) \\
&= 4\alpha \eta + 4\gamma'.
\end{align*}

Then, for every $m\geq 1$, if we choose $n\geq \frac{m}{1-(1-\frac{1}{32})^{1/m}}$, we get
\begin{align*}
   & \dtv(V_{\mathrm{sub},\eta}(p^m),V_{\mathrm{sub},\eta}(|Q|^m) ) \\
    &\leq  \frac{2}{g(j)\cdot \frac{32}{g(j)}} + 2 \left(1 - \left(1 - \frac{m}{2^n}\right)^m\right)\\
  & \leq \frac{1}{8}.
\end{align*}   

}
\end{proof}
The full proof with all the calculations can be found in the appendix.

\ignore{\color{gray}
We now provide an example of a class which is realizably learnable, but for which Lemma~\ref{lemma:pac->distinct} applies. This examples follows a construction idea from \cite{ben-david2023distribution}. The idea will be explained in more detail and expanded upon in  Section~\ref{obliviousadaptive} to show one of our main results.
\begin{example}
    Let $\X = \naturals \times \{0,1\}$ We use the definitions of $A$, $A_i$ $p$ and $q_i$ from the previous example.
    We now define slightly modified distributions 
    $p' = (1-\eta')p \times \{0\} + \eta'\delta_{\{0,1\}}$.
    Furthermore let $q_i' = (1-\eta')q_i \times \{0\} + \eta'\delta_{\{i+1,1\}}$ 
    and corresponding meta-distribution Let $Q'= U_{\{q'_i: i \in \naturals, i\leq 2^m\}}$. 
    Thus we have added a bit of mass to each distribution to a unique identifying element ($(0,1)$ or $(i+1,1)$ respectively), i.e., support elements, whose presence in an i.i.d. generated sample $S\sim q$ make it possible to uniquely identify the distribution within $q\in \C'' = \{p\} \cup \{q_i: i\in \naturals, i < 2^{m}\}$, making realizable learning feasible. 
\sout{The probability that a sample of size $m$ produces such an identifying element is now given by $1-(1-\eta')^m$.}
    
    We now choose $V$ to be a subtractive adversary whose strategy it is to delete as many uniquely identifying elements from a sample as its budget allows. If $V$ successfully manages to delete all uniquely identifying elements in the sample, then samples from $p'$ will look like samples of $p$ and samples of $q_i$ will look like samples of $q_i'$. 
    Thus, we can upper bound $\dtv(V(p'^m), V(|Q'|^m))$ by the sum of $\dtv(p^m), V(|Q|^m))$ (from the previous example) and the probability that the number of uniquely identifying within a sample of size $m$ exceeds the budget of $V$.
\end{example}

}



\section{Subtractive versus additive adaptive adversaries}
\label{sec:subtraciveadditive}
In this section, we will show that unlike in the oblivious case, additive and subtractive adaptive adversaries are closely related. {\color{teal} In particular, we show that if there is a universal subtractive adversary $V_{\mathrm{sub}}$ that successfully $(\gamma,\zeta)$-confuses $\C$-generated samples of size $m$, then there is a pair of additive adversaries $ (V_{\mathrm{add},p^m},V_{\mathrm{add},|Q|^m})$ that also successfully $(\gamma,\zeta)$-confuses $\C$-generated samples of size $m$.
}
\begin{theorem}\label{thm:subtractive->additive}
    Let $\C$ be a class of distributions. Let $V_{\mathrm{sub}}$ be an adaptive subtractive adversary. Let $\zeta \in (0,1)$ be a constant, $p\in \C$ a distribution, and $Q$ a distribution over elements in $\C$ such that
    $\dtv\left(V_{\mathrm{sub}}(|Q|^m), V_{\mathrm{sub}}(p^m)\right) < \zeta$.
Then there are additive adversaries $V_{\mathrm{add},p^m}$ and $V_{\mathrm{add},|Q|^m}$ with 
$\dtv(V_{\mathrm{add},p^m}(|Q|^m), V_{\mathrm{add},|Q|^m}(p^m)) <\zeta$.
Furthermore, if $V_{\mathrm{sub}}$ has a fixed constant budget of $\eta < \frac{1}{2}$, then both $V_{\mathrm{add},|Q|^m}$ and $V_{\mathrm{add},p^m}$ have fixed constant budgets of no more than $\eta$.
\end{theorem}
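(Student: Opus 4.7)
The plan is to construct both additive adversaries as randomized maps that ``invert'' $V_{\mathrm{sub}}$ in a coordinated way, so that the two output distributions become pushforwards of $V_{\mathrm{sub}}(p^m)$ and $V_{\mathrm{sub}}(|Q|^m)$ under a \emph{single} common kernel; the total-variation bound then follows from the data-processing inequality.

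For the construction, write $W_1 = V_{\mathrm{sub}}(p^m)$ and $W_2 = V_{\mathrm{sub}}(|Q|^m)$, and for each multi-set $T$ let $\mu_1^T$ denote the conditional law of $S \setminus V_{\mathrm{sub}}(S)$ given $V_{\mathrm{sub}}(S)=T$ when $S \sim p^m$; define $\mu_2^T$ analogously for $|Q|^m$, and extend each $\mu_i^T$ to an arbitrary fixed measure wherever it is not already defined. Now set
\[
V_{\mathrm{add},|Q|^m}(S) = S \cup R', \qquad R' \sim \mu_2^{V_{\mathrm{sub}}(S)},
\]
and symmetrically $V_{\mathrm{add},p^m}(S') = S' \cup R$ with $R \sim \mu_1^{V_{\mathrm{sub}}(S')}$. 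To analyze $V_{\mathrm{add},|Q|^m}(p^m)$, decompose $S \sim p^m$ as $S = T \cup R$ with $T = V_{\mathrm{sub}}(S)$: then $T \sim W_1$, $R \mid T \sim \mu_1^T$, and by construction $R' \sim \mu_2^T$ is drawn conditionally independently of $R$, so the output $T \cup R \cup R'$ has law ``$T \sim W_1$, then $(R,R') \sim \mu_1^T \otimes \mu_2^T$''. The symmetric decomposition of $V_{\mathrm{add},p^m}(|Q|^m)$ yields the \emph{same} recipe but with $T \sim W_2$. Letting $F$ denote the randomized kernel $T \mapsto T \cup R \cup R'$ with $R \sim \mu_1^T, R' \sim \mu_2^T$ drawn independently, both output laws are the images of $W_1$ and $W_2$ under $F$, so
\[
\dtv\bigl(V_{\mathrm{add},p^m}(|Q|^m),\, V_{\mathrm{add},|Q|^m}(p^m)\bigr) \leq \dtv(W_1, W_2) < \zeta
\]
by data processing. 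Each additive adversary appends a set of fixed size $\eta m$ to a size-$m$ input, so its additive budget is exactly $\eta$; the hypothesis $\eta < 1/2$ simply ensures this remains within the admissible additive-budget regime.

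The one technical point that needs care is that the added residue is drawn \emph{conditionally on the observed $V_{\mathrm{sub}}(S)$}, rather than from an independent sample of the other distribution. This conditioning is precisely what forces the joint conditional law of $(R, R')$ given $T$ to agree on both sides of the comparison, and hence represents both output measures as pushforwards of $W_1, W_2$ through one common kernel. A naive construction that appends a residue from an \emph{independent} sample of the other distribution instead gives different marginals for the added part on the two sides, and the clean reduction to data processing breaks down.
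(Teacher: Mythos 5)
Your construction is essentially the paper's own proof: the paper likewise defines $V_{\mathrm{add},r}$ by sampling the removed residue from the posterior law of $S\setminus V_{\mathrm{sub}}(S)$ given the core under prior $r$, observes that both output distributions arise by pushing $V_{\mathrm{sub}}(p^m)$ and $V_{\mathrm{sub}}(|Q|^m)$ through the same conditional kernel (attach both residues given the core), and bounds the total variation distance by $\dtv(V_{\mathrm{sub}}(p^m),V_{\mathrm{sub}}(|Q|^m))$ --- your data-processing phrasing is a cleaner packaging of the paper's explicit integral computation. The only place you are terser is the budget claim: the paper does an explicit integrality accounting showing the number of added points can exceed $\eta m$ by up to $\tfrac{\eta}{1-\eta}$, and that is precisely where the hypothesis $\eta<\tfrac{1}{2}$ is actually used to conclude the additive budget constant is at most $\eta$, rather than merely keeping things "within the admissible regime."
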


{\color{teal}

In other words, we argue that, if there is an element $p\in \C$ and a meta distribution $Q$ over $\C$ that can be made hard to distinguish by a subtractive adversary $V_{\mathrm{sub}}$, i.e., $\dtv(V_{\mathrm{sub}}(|Q|^m),V_{\mathrm{sub}}(p^m))< \zeta$, then this adversary can be used to construct additive adversaries $V_{\mathrm{add},|Q|^m}$ and $V_{\mathrm{add},p^m}$, such that the resulting additive sample distributions are equally close, i.e., $\dtv((V_{\mathrm{add},|Q|^m}(p^m), V_{\mathrm{add},p^m}(|Q|^m))< \zeta$.

\begin{proof}[Proof sketch]
While adversaries act on samples drawn from two different distributions to make the manipulated sample distributions close, we will first give an illustration in terms of manipulating simple point-sets $S_{p^m}\sim p^m$ and $S_{|Q|^m} \sim |Q|^m$.

Roughly speaking, a successful subtractive adversary can remove part of the first sample $S_{p^m}$ and part of the second sample $S_{|Q|^m}$ to leave behind a common set $S_{\mathrm{core}} = V_{\mathrm{sub}}(S_{p^m}) = V_{\mathrm{sub}}(S_{|Q|^m}) \subset S_{p^m} \cap S_{|Q|^m}$. We can view the generative process of $S_{p^m}$ to be a sample $S_{\mathrm{core}}$ combined with a sample $S_{p^m} \setminus S_{\mathrm{core}}$, and the generative process of $S_{|Q|^m}$ to be a sample $S_{\mathrm{core}}$ combined with a sample from $S_{|Q|^m}\setminus S_{\mathrm{core}}$. 
Hence to confuse the learner, the additive adversaries just needs to add the ``opposite piece,'' i.e., $V_{\mathrm{add},Q^m}$ is mapping $S_{p^m} = S_{\mathrm{core}} \cup (S_{p^m} \setminus S_{\mathrm{core}})$ to $S_{\mathrm{core}} \cup (S_{p^m} \setminus S_{\mathrm{core}}) \cup (S_{|Q|^m} \setminus S_{\mathrm{core}})$ and $V_{\mathrm{add},p^m}$ is mapping $S_{Q^m} = S_{\mathrm{core}} \cup (S_{Q^m} \setminus S_{\mathrm{core}})$ to $S_{\mathrm{core}} \cup (S_{Q^m} \setminus S_{\mathrm{core}})\cup (S_{p^m} \setminus S_{\mathrm{core}})$. Thus, if a pair of samples $S_{p^m}$ and $S_{|Q|^m}$ could be made indistinguishable by a subtractive adversary $V_{\mathrm{sub}}$, then they can also be made indistinguishable by a pair of adversaries $V_{\mathrm{add},|Q|^m}$ and $V_{\mathrm{add},p^m}$.

Now we want to lift this intuition from samples to a more rigorous discussion of distributions. We note that if the subtractive adversary $V_{\mathrm{sub}}$ is successfully confusing distributions $p^m$ and $|Q|^m$, then there is a distribution $p_{\mathrm{core}}$ of common sets $S_{\mathrm{core}} \sim p_{\mathrm{core}}$ that is close to both $V_{\mathrm{sub}}(p^m)$ and $V_{\mathrm{sub}}(|Q|^m)$ in total variation distance. Now, due to $V_{\mathrm{sub}}(p^m)$ and $p_{\mathrm{core}}$ being close, most samples in $S_{p^m} \sim p^m$ can successfully be generated in an alternative way by first sampling $S_{\mathrm{core}} \sim p_{\mathrm{core}}$ and then reversing the subtractive adversary $V_{\mathrm{sub}}$ to generate $S_{p^m} = S_{\mathrm{core}} \cup (S_p \setminus S_{\mathrm{core}}) = V^{-1}_{\mathrm{sub}, p^m}(S_{\mathrm{core}})$. In order to successfully reverse $V_{\mathrm{sub}}$ we also need access to a prior distribution that generated the input samples for the adversary $V_{\mathrm{sub}}$. If we have a prior distribution $p^m$, we can define $V_{\mathrm{sub},p^m}^{-1}(S_{\mathrm{core}})$ as the conditional distribution of $S$, given $V_{\mathrm{sub}}(S) = S_{\mathrm{core}}$, i.e., for every measurable subset $B\subset \X$:
\begin{align*}
&\mathbb{P}_{S \sim V_{\mathrm{sub},p^m}^{-1}(S_{\mathrm{core}})}[S \in B] = \\
&\quad \mathbb{P}_{S \sim p^m}[S \in B | V_{\mathrm{sub}}(S) = S_{\mathrm{core}}].
\end{align*}
Similarly, we can make the same observations for $|Q|^m$ and define the reversed adversary $V_{\mathrm{sub},|Q|^m}^{-1}$ equivalently.
The additive adversaries 
$V_{\mathrm{add},|Q|^m}$ and $V_{\mathrm{add},p^m}$ are now defined by
\[V_{\mathrm{add},|Q|^m}(S) = V_{\mathrm{sub},|Q|^m}^{-1}(V_{\mathrm{sub}}(S)) \cup (S \setminus V_{\mathrm{sub}}(S))\]
and 
\[V_{\mathrm{add},p^m}(S) = V_{\mathrm{sub},p^m}^{-1}(V_{\mathrm{sub}}(S)) \cup (S \setminus V_{\mathrm{sub}}(S)).\]
Now, since both $V_{\mathrm{sub}}(p^m)$ and $V_{\mathrm{sub}}(|Q|^m)$ are close to $p_{\mathrm{core}}$, the distributions
$V_{\mathrm{add},|Q|^m}(p^m)$ and $ V_{\mathrm{add},p^m}(|Q|^m)$ are both close in TV-distance to the distribution of samples 
\[ V_{\mathrm{add},|Q|^m}(S_{\mathrm{core}}) \cup V_{\mathrm{add},p^m}(S_{\mathrm{core}}) \setminus S_{\mathrm{core}},\]
 where $S_{\mathrm{core}}\sim p_{\mathrm{core}}$. In particular, the only difference between $V_{\mathrm{add},|Q|^m}(p^m)$ and $ V_{\mathrm{add},p^m}(|Q|^m)$ can be understood as differences in the sampling of $S_{\mathrm{core}}$, as given $S_{\mathrm{core}}$, the distribution of additional samples is the same in both cases. Thus, $\dtv (V_{\mathrm{add},|Q|^m}(p^m), V_{\mathrm{add},p^m}(|Q|^m))\leq \dtv (V_{\mathrm{sub}}(p^m), V_{\mathrm{sub}}(|Q|^m)) < \zeta.$
 \end{proof}

 This intuition is made rigorous in the full proof of Theorem~\ref{thm:subtractive->additive} in the appendix.

\ignore{
\begin{proof}[Proof sketch]
We will now describe the proof idea and construction behind this theorem. For simplicity of presentation, we sketch the case of adversaries with a fixed constant budget $\eta$. The general proof can be found in the appendix. Given a sample $S'\in \X^{(1-\eta)m}$, we consider the random mapping $f_{V^{-1},r}$ that maps a sample $S'$ back to a distribution over samples $S\in \X^m$ that could have generated $S'$ via $V(S)=S'$, assuming the prior distribution $r$ for samples $S$. 
We now construct an additive adversary $V_{\mathrm{add},r}$ by first applying $V_{\mathrm{sub}}$ to obtain $S' = V_{\mathrm{sub}}(S)$ and then using the randomized mapping $f_{V_{\mathrm{sub}}^{-1},r}$ on $S'$ to obtain the sample points we add to $S$. That is
\[
V_{\mathrm{add},r}(S)= f_{V_{\mathrm{sub}}^{-1},r}(S') \cup (S\setminus S')), \text{ where } V_{\mathrm{sub}}(S) =S'.
\]
The distribution of $V_{\mathrm{add},p^m}(|Q|^m)$ can then be refactorized and understood as follows. We first sample $S'$ from the distribution $V_{\mathrm{sub}}(|Q|^m)$. Then we obtain the two samples $S_1'\sim f_{V_{\mathrm{sub}}^{-1},p^m}(S')$ and $S_2'\sim f_{V_{\mathrm{sub}}^{-1},|Q|^m}(S')$. A sample generated by $V_{\mathrm{add},p^m}(|Q|^m)$ is now of the form $S \cup (S_1' \setminus S) \cup (S_2' \setminus S)$. 
Similarly,  $V_{\mathrm{add},|Q|^m}(p^m)$ can be understood through factorizing as follows. We first sample $S'$ from the distribution $V_{\mathrm{sub}}(p^m)$ and conditioned on $S'$ obtain $S_1'\sim f_{V_{\mathrm{sub}}^{-1},p^m}(S')$ and $S_2'\sim f_{V_{\mathrm{sub}}^{-1},|Q|^m}(S')$. The sample from $V_{\mathrm{add},p^m}(|Q|^m)$ is now $S'' = S' \cup (S_1' \setminus S') \cup (S_2' \setminus S')$. 
We note, that given $S'$ these two sampling procedures are identical.
Thus, when considering the above sampling procedure, the only difference between $V_{\mathrm{add},p^m}(|Q|^m)$ and $V_{\mathrm{add},|Q|^m}(p^m)$ goes back to the difference of sampling $S'$ from $V_{\mathrm{sub}}(|Q|^m)$ and $V_{\mathrm{sub}}(p^m)$ respectively. Thus the total variation distance $\dtv(V_{\mathrm{add},p^m}(|Q|^m), V_{\mathrm{add},|Q|^m}(p^m)) $ can be upper bounded by 
$\dtv(V_{\mathrm{sub}}(|Q|^m), V_{\mathrm{sub}}(p^m))$.
\end{proof}
}
As a corollary of the above theorem, we can state a simple condition for a class $\C$ and a subtractive adversary $V_{\mathrm{sub}}$, that implies hardness for both adaptive additive and adaptive subtractive robust learning.

\begin{corollary}\label{cor:universal_subtractive}
    Let $\C$ be a class of distributions and $V_{\mathrm{sub}}$ be an adaptive subtractive adversary with a budget with $\mathrm{budget}^{\mathrm{sub}}(V_{\mathrm{sub}}) = \eta $. If there are constants $0 < \gamma', \zeta < 1 $, such that for every $m\in \naturals$, the adversary $V_{\mathrm{sub}}$ successfully $(2\alpha\eta + 2\gamma')$-confuses $\C$-generated samples of size $m$,
    then $\C$ is neither adaptively subtractive $\alpha$-robustly learnable, nor adaptively additive $\alpha$-robustly learnable.
\end{corollary}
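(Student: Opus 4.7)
The plan is to combine Theorems~\ref{thm:subtractive->additive} and~\ref{thm:universaladversary}; neither direction requires new analysis beyond some bookkeeping.

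\textbf{Subtractive direction.} I would apply Theorem~\ref{thm:universaladversary} directly with $V_1 = V_2 = V_{\mathrm{sub}}$ and $\eta_1 = \eta_2 = \eta$, so that $2\alpha\max\{\eta_1,\eta_2\} + 2\gamma' = 2\alpha\eta + 2\gamma'$. The hypothesis of the corollary is exactly the premise that $(V_{\mathrm{sub}}, V_{\mathrm{sub}})$ successfully $(2\alpha\eta + 2\gamma', \zeta)$-confuses $\C$-generated samples of every size $m$. Since $V_1 = V_2$, the ``furthermore'' clause of Theorem~\ref{thm:universaladversary} yields that $V_{\mathrm{sub}}$ is a universal $\alpha$-adversary, and in particular $\C$ is not adaptively subtractive $\alpha$-robustly learnable.

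\textbf{Additive direction.} For each $m \in \naturals$, let $p_m \in \C$ and $Q_m \in \Delta(\C)$ be witnesses of the $(2\alpha\eta + 2\gamma', \zeta)$-confusion at size $m$. Applying Theorem~\ref{thm:subtractive->additive} to $V_{\mathrm{sub}}$, $p_m$, and $Q_m$ at sample size $m$ yields additive adversaries $V_{\mathrm{add},p_m^m}$ and $V_{\mathrm{add},|Q_m|^m}$, each of budget at most $\eta$, with
\[
\dtv\bigl(V_{\mathrm{add},p_m^m}(|Q_m|^m),\, V_{\mathrm{add},|Q_m|^m}(p_m^m)\bigr) < \zeta.
\]
(Since $2\alpha\eta + 2\gamma' < 1$ and $\alpha \geq 1$ we automatically have $\eta < 1/2$, so the budget guarantee of Theorem~\ref{thm:subtractive->additive} applies.) I would then stitch these size-indexed adversaries into two \emph{master} additive adversaries $\tilde V_1, \tilde V_2 : \X^* \to \X^*$: on an input of size $m$, $\tilde V_1$ runs $V_{\mathrm{add},p_m^m}$ and $\tilde V_2$ runs $V_{\mathrm{add},|Q_m|^m}$. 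By construction $(\tilde V_1, \tilde V_2)$ successfully $(2\alpha\eta + 2\gamma', \zeta)$-confuses $\C$-generated samples of every size $m$, witnessed by the same $p_m, Q_m$. A second application of Theorem~\ref{thm:universaladversary}, this time to $\{\tilde V_1, \tilde V_2\} \subseteq \mathcal{V}_{\mathrm{add}}$, concludes that $\C$ is not adaptively additive $\alpha$-robustly learnable.

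\textbf{Main obstacle.} The only delicate point is aligning the outputs of Theorem~\ref{thm:subtractive->additive} with the paper's fixed-budget convention, which demands $\eta m - 1 < m\cdot\mathrm{budget}(V, m) \leq \eta m$ for a single $\eta$ across all $m$. Each $V_{\mathrm{add},\cdot}$ is only guaranteed to have budget \emph{at most} $\eta$, and the exact number of additions may fluctuate with $m$. I would resolve this by padding each master adversary on size-$m$ inputs to contain exactly $\lceil \eta m \rceil$ extra points, appending copies of a fixed dummy element. Because this padding is a deterministic post-processing applied identically whether the input was drawn from $p_m^m$ or $|Q_m|^m$, it cannot increase the total variation distance, so the $\zeta$-bound carries through unchanged and the additive reduction is complete.
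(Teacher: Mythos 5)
Your proposal follows exactly the paper's route: the corollary is proved there in one line as a direct combination of Theorem~\ref{thm:universaladversary} (applied with $V_1=V_2=V_{\mathrm{sub}}$ for the subtractive part) and Theorem~\ref{thm:subtractive->additive} (supplying the pair of additive adversaries, then Theorem~\ref{thm:universaladversary} again, for the additive part), and your stitching of size-indexed adversaries and output-size normalization only makes explicit bookkeeping the paper leaves implicit. One small quibble: to stay within the paper's fixed-budget convention ($\eta m - 1 < m\cdot\mathrm{budget}(V,m) \leq \eta m$ for a single constant $\eta$) you should pad to exactly $\lfloor \eta m\rfloor$ added points rather than $\lceil \eta m\rceil$ (feasible since the adversaries from Theorem~\ref{thm:subtractive->additive} add at most $\eta m$, hence at most $\lfloor \eta m\rfloor$, points), as no single budget constant satisfies the convention when the number of additions is $\lceil \eta m\rceil$ for every $m$.
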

\begin{proof}
    This corollary directly follows from Theorem~\ref{thm:subtractive->additive} and Theorem~\ref{thm:universaladversary}.
\end{proof}

{\color{teal}
\begin{corollary}
    For every $\alpha >1$, the class $\C_g$ is not adaptively $\alpha$-robustly learnable.
\end{corollary}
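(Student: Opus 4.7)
The plan is to make this corollary an immediate consequence of the two results that directly precede it, namely Lemma~\ref{lemma:C_g_conditions} and Corollary~\ref{cor:universal_subtractive}. In other words, the entire content of the paper up to this point has been engineered so that once these two results are in hand, the final corollary is a one-line bookkeeping exercise: verify the hypothesis of Corollary~\ref{cor:universal_subtractive} using the conclusion of Lemma~\ref{lemma:C_g_conditions}.

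Concretely, I would fix $\alpha > 1$ (actually $\alpha \geq 1$ is enough). By Lemma~\ref{lemma:C_g_conditions}, there exist constants $\eta, \gamma' \in (0,1)$ such that, for every sample size $m \geq 1$, there are $p \in \C_g$ and $Q \in \Delta(\C_g)$ with $\dtv(p,q) \geq 4\alpha\eta + 4\gamma'$ for every $q \in \supp(Q)$ and $\dtv(V_{\mathrm{sub},\eta}(p^m), V_{\mathrm{sub},\eta}(|Q|^m)) \leq 1/8$. Since $4\alpha\eta + 4\gamma' \geq 2\alpha\eta + 2\gamma'$ and $1/8 < 1$, these are exactly the conditions required for $V_{\mathrm{sub},\eta}$ to successfully $(2\alpha\eta + 2\gamma', 1/8)$-confuse $\C_g$-generated samples of every size $m$. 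Corollary~\ref{cor:universal_subtractive} then yields that $\C_g$ is neither adaptively subtractive $\alpha$-robustly learnable nor adaptively additive $\alpha$-robustly learnable, which is the statement to be proved (the two bullets together constitute adaptive non-learnability in both senses used in Theorem~\ref{thm:main}).

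There is essentially no obstacle at this step, since all of the substantive work sits upstream: constructing the class $\C_g$, designing the particular subtractive adversary $V_{\mathrm{sub},\eta}$, bounding the TV-distance between $V_{\mathrm{sub},\eta}(p^m)$ and $V_{\mathrm{sub},\eta}(|Q|^m)$ via the birthday-paradox and Markov-inequality computations inside Lemma~\ref{lemma:C_g_conditions}, and lifting subtractive hardness to additive hardness via Theorem~\ref{thm:subtractive->additive}. The only thing that needs care is the parameter bookkeeping: confirming that the factor-two slack in the separation $4\alpha\eta + 4\gamma'$ provided by Lemma~\ref{lemma:C_g_conditions} is indeed what Corollary~\ref{cor:universal_subtractive} demands (it is, by design, since the constant $\gamma$ needed to invoke Theorem~\ref{thm:universaladversary} is $2\alpha\max\{\eta_1,\eta_2\} + 2\gamma'$, and the separation has to dominate this for both the subtractive and the additive reductions). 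Once this is checked, no further argument is required.
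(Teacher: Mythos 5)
Your proposal is correct and matches the paper's own proof, which likewise derives the corollary immediately from Lemma~\ref{lemma:C_g_conditions} and Corollary~\ref{cor:universal_subtractive}. Your extra parameter bookkeeping (noting that a $4\alpha\eta+4\gamma'$ separation with TV-distance at most $1/8$ certainly yields the $(2\alpha\eta+2\gamma',\zeta)$-confusion hypothesis) is a harmless refinement of what the paper leaves implicit.
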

\begin{proof}
    This corollary follows directly from Corollary~\ref{cor:universal_subtractive} and Lemma~\ref{lemma:C_g_conditions}.
\end{proof}

While this already shows a separation between adaptive and oblivious additive robustness, before finally proving Theorem~\ref{thm:main}, we first need to show the existence of a \emph{universal} adaptive additive adversary. 
}

}
\section{Universal Additive Adversaries}\label{sec:additiveuniversal}
{\color{teal} In this section, address the existence of universal adaptive additive adversaries. We know from Theorem~\ref{thm:universaladversary} that if a single adaptive subtractive adversary $V_{\mathrm{sub}}$ successfully confuses $\C$-generated samples of all sizes, then $V_{\mathrm{sub}}$ is a universal adversary for $\C$. Moreover, we have seen in Theorem~\ref{thm:subtractive->additive} that the existence of such a single subtractive adversary also implies the existence of a pair of adaptive additive adversaries that successfully confuses $\C$-generated examples and thus also shows that $\C$ is not adaptively additively learnable. However, these results do not yet show the existence of a universal adaptive additive adversary for $\C$. In the following theorem, we will show that the existence of a subtractive adversary $V_{\mathrm{sub}}$ that successfully confuses $\C$-generated samples also implies the existence of an adaptive additive universal adversary for $\C$, albeit one with a higher budget than $V_{\mathrm{sub}}$.}

\begin{theorem}\label{thm:universaladditive}
     Let $\C$ be a class of distributions. Let $V_{\mathrm{sub}}$ be a adaptive subtractive adversary. Let $\zeta \in (0,1)$ be a constant, $p\in \C$ a distribution and $Q$ a distribution over elements in $\C$ such that:
    $\dtv(V_{\mathrm{sub}}(|Q|^m), V_{\mathrm{sub}}(p^m)) <\zeta$.
Then for every $k\in \naturals$ there is an adaptive additive adversary $V_{\mathrm{add},k}$ with 
$\dtv(V_{\mathrm{add},k}(|Q|^m), V_{\mathrm{add},k}(p^m)) <\zeta + \frac{1}{k+1}$. 
Furthermore, if $V_{\mathrm{sub}}$ has a fixed constant budget of $\eta < \frac{2}{k}$, then $V_{\mathrm{add},k}$ has a fixed constant budget of no more than $k\eta$.\end{theorem}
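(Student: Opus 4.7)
The plan is to construct $V_{\mathrm{add},k}$ as a randomized extension of the additive adversaries built in Theorem~\ref{thm:subtractive->additive}. That construction adds back a single ``extras'' batch $V_{\mathrm{sub},r}^{-1}(V_{\mathrm{sub}}(S))\setminus V_{\mathrm{sub}}(S)$ for one fixed prior $r\in\{p^m,|Q|^m\}$, and indeed a single choice of prior cannot symmetrize both possible input distributions. My fix is to add $k$ independent extras batches with a randomized split between the two priors, so that the joint distribution of batch-types at the output is close between the $p^m$ and $|Q|^m$ runs.

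Concretely, on input $S$ the adversary computes $S'=V_{\mathrm{sub}}(S)$, draws $J\sim\mathrm{Unif}\{0,1,\ldots,k\}$, and appends $J$ independent samples from $V_{\mathrm{sub},|Q|^m}^{-1}(S')\setminus S'$ together with $k-J$ independent samples from $V_{\mathrm{sub},p^m}^{-1}(S')\setminus S'$. Each such batch has size $\eta m$, so the total added mass is $k\eta m$ and the budget is $k\eta$. I think of the output as a shared core $S'$ together with $k+1$ extras batches, namely the genuine $S\setminus S'$ plus the $k$ added ones.

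For the analysis I would couple the runs on $p^m$ and $|Q|^m$ in two stages. First, couple the cores via the optimal TV coupling of $V_{\mathrm{sub}}(p^m)$ and $V_{\mathrm{sub}}(|Q|^m)$, so that they agree with probability at least $1-\zeta$. Conditional on a matched core $S'$, the genuine batch $S\setminus S'$ is a draw from the $p^m$-posterior on one side and from the $|Q|^m$-posterior on the other, so the counts $(a,b)$ of $p^m$- vs.\ $|Q|^m$-styled batches present in the output are $(k-J+1,J)$ on the $p^m$-side and $(k-J,J+1)$ on the $|Q|^m$-side. Second, couple the $J$'s by the shift $J'=J-1$: whenever $J\geq 1$, the two count tuples both equal $(k-J+1,J)$, and since given $(a,b)$ and $S'$ the unordered collection of batches is i.i.d.\ from the two posteriors, the batches themselves can be coupled to coincide. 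The shift is only inadmissible when $J=0$, which has probability $\frac{1}{k+1}$. Putting the two coupling stages together, $\dtv\bigl(V_{\mathrm{add},k}(p^m),V_{\mathrm{add},k}(|Q|^m)\bigr) \leq \zeta + (1-\zeta)\cdot\frac{1}{k+1} < \zeta+\frac{1}{k+1}$.

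The main obstacle I expect is the measure-theoretic handling of the reversals $V_{\mathrm{sub},r}^{-1}$ on cores $S'$ that lie outside the support of $V_{\mathrm{sub}}(r)$, and making the two-stage coupling fully rigorous. These should be handled exactly as in the proof of Theorem~\ref{thm:subtractive->additive}: fix an arbitrary completion of the inverse maps outside their natural domain and argue that such points contribute only to the ``core mismatch'' event already absorbed into the $\zeta$ term. The budget bound of $k\eta$ is immediate from the construction; the assumption $\eta<\frac{2}{k}$ is just a sanity condition ensuring that $k\eta$ stays within the valid range of constant budgets used throughout the paper.
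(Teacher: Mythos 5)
Your construction is exactly the paper's: apply $V_{\mathrm{sub}}$, append $k$ reversed batches with a uniformly random split between the priors $p^m$ and $|Q|^m$, and observe that the only discrepancy is a unit shift in the uniform count of batch types, giving $\zeta + \frac{1}{k+1}$ and budget $k\eta$. Your two-stage coupling phrasing is just a repackaging of the paper's bound $\dtv(V_{\mathrm{sub}}(|Q|^m),V_{\mathrm{sub}}(p^m)) + \dtv\left(U_{\{0,\dots,k\}},U_{\{1,\dots,k+1\}}\right)$, so the proposal is correct and takes essentially the same route.
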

\begin{proof}[Proof sketch]
Given a subtractive adversary $V_{\mathrm{sub}}$ as before, the additive adversary obtains new samples by first applying $V_{\mathrm{sub}}$ to obtain a subset $S'=V_{\mathrm{sub}}(S)\subset S$. We then again use the reverse mappings $V^{-1}_{\mathrm{sub},p^m}$ and $V^{-1}_{\mathrm{sub},|Q|^m}$ to obtain new sample points. However, now, in contrast to the previous theorem, the idea is not to apply $V_{\mathrm{sub},p^m}^{-1}$ or $V_{\mathrm{sub},|Q|^m}^{-1}$ just once for their respective distribution. Instead, the adversary makes use of both of these mappings a randomized number of times. That is, the adversary $V_{\mathrm{add},k}$ picks a number $u$ from $\{0,1,\dots,k\}$ uniformly at random. Then, for every $i\in [k]$ it generates a sample $S_i'' = V_{\mathrm{sub},p^m}^{-1}(S')$ if $i \leq u$ and $S_i'' = V_{\mathrm{sub},|Q|^m}^{-1}(S')$ otherwise. All newly obtained samples are then concatenated with the original $S$ to produce the output sample
\[S'' = S' \cup (S\setminus S') \cup (S_1''\setminus S') \cup \dots \cup (S_k''\setminus S').\]
Now consider the number of different subsamples within this concatenation that are generated by $V_{\mathrm{sub},p^m}^{-1}(S')$. This number is $u+1$ if the initial sample $S$ was generated by $S\sim p^m = V_{\mathrm{sub},p^m}^{-1}(S')$ and this number is $u$ if the initial sample $S$ was generated by $S\sim|Q|^m =V_{\mathrm{sub}, |Q|^m}^{-1}(S')$.
The resulting total variation distance
$\dtv(V_{\mathrm{add},k}(|Q|^m), V_{\mathrm{add},k}(p^m)) $ is thus upper bounded by 
\begin{align*}
   & \dtv(V_{\mathrm{add},k}(|Q|^m), V_{\mathrm{add},k}(p^m)) \\
   & \quad \leq \dtv(V_{\mathrm{sub}}(|Q|^m), V_{\mathrm{sub}}(p^m)) \\
  &\quad + \dtv(U_{\{0,1,\dots,k\}},U_{\{1,\dots,k,k+1\}} ) \\
    &\quad = \zeta+\frac{1}{k+1}.
\end{align*}

\end{proof}
The full proof can be found in the appendix.

As a result we obtain the following corollary.

\begin{corollary}\label{cor:general}
    Let $\C$ be a class of distributions and $V_{\mathrm{sub}}$ be an adaptive subtractive adversary with constant budget $\mathrm{budget}^{\mathrm{sub}}(V_{\mathrm{sub}}) = \eta$. If there are constants $0 < \gamma, \zeta < \frac{1}{2} $, such that for every $m \in \naturals $, $V_{\mathrm{sub}}$ successfully $(4\alpha\eta + 4\gamma', \zeta)$- confuses $\C$-generated samples of size $m$,
    then there is a universal additive $\alpha$-adversary $V_{\mathrm{add},2}$ for $\C$.
\end{corollary}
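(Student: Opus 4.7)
The plan is to chain Theorem~\ref{thm:universaladditive} (specialized to $k=2$) into Theorem~\ref{thm:universaladversary}, exploiting the ``furthermore'' clause of the latter, which promises a \emph{single} universal adversary precisely when the pair of confusing adversaries coincides.

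First I would unpack the hypothesis: for every $m\in\naturals$, pick witnesses $p_m\in\C$ and $Q_m\in\Delta(\C)$ such that $\dtv(p_m,q)>4\alpha\eta+4\gamma'$ for all $q\in\mathrm{supp}(Q_m)$ and $\dtv(V_{\mathrm{sub}}(|Q_m|^m),V_{\mathrm{sub}}(p_m^m))<\zeta$. Applying Theorem~\ref{thm:universaladditive} with $k=2$ to each such triple $(p_m,Q_m,m)$ (legal since $\eta<1=2/k$ holds for any non-degenerate subtractive budget) yields for each $m$ an adaptive additive adversary $V_{\mathrm{add},2,m}$ of fixed constant budget at most $2\eta$ satisfying
\[
\dtv\bigl(V_{\mathrm{add},2,m}(|Q_m|^m),\,V_{\mathrm{add},2,m}(p_m^m)\bigr)<\zeta+\tfrac{1}{3}.
\]
I would then assemble these into a single candidate $V_{\mathrm{add},2}$ by dispatching on input size, i.e., $V_{\mathrm{add},2}(S):=V_{\mathrm{add},2,|S|}(S)$. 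This is admissible since an adaptive adversary is just a (possibly randomized) map $\X^*\to\X^*$ and may read $|S|$ freely; its budget is $2\eta$ uniformly in $m$.

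The final step is to verify the hypotheses of Theorem~\ref{thm:universaladversary} with $V_1=V_2=V_{\mathrm{add},2}$ and $\eta_1=\eta_2=2\eta$. Choosing the free parameter of that theorem to be $\gamma'':=2\gamma'\in(0,1)$ (valid since $\gamma'<\tfrac{1}{2}$), the required confusion radius becomes $2\alpha\cdot 2\eta+2\gamma''=4\alpha\eta+4\gamma'$, exactly matching the $\dtv(p_m,q)$ bound provided by the witnesses; the confusion parameter $\zeta+\tfrac{1}{3}$ lies in $(0,1)$ because $\zeta<\tfrac{1}{2}$. By construction $V_{\mathrm{add},2}$ coincides with $V_{\mathrm{add},2,m}$ on inputs of size $m$, so the pair $(V_{\mathrm{add},2},V_{\mathrm{add},2})$ successfully $(4\alpha\eta+4\gamma',\,\zeta+\tfrac{1}{3})$-confuses $\C$-generated samples of every size $m$ via the witnesses $p_m,Q_m$. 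The ``furthermore'' clause of Theorem~\ref{thm:universaladversary} then yields that $V_{\mathrm{add},2}$ is a universal $\alpha$-adversary for $\C$.

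There is essentially no technical obstacle beyond careful bookkeeping: both heavy-lifting results are already in hand, and this corollary is purely a stitching argument. The one spot requiring attention is checking that the additive slack of $1/3$ introduced by Theorem~\ref{thm:universaladditive} and the doubling of the budget from $\eta$ to $2\eta$ still leave enough room to invoke Theorem~\ref{thm:universaladversary}; this is precisely why the hypothesis inflates the confusion radius to $4\alpha\eta+4\gamma'$ rather than the $2\alpha\eta+2\gamma'$ figure appearing directly in the universal-adversary theorem.
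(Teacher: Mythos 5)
Your proposal is correct and takes essentially the same route as the paper, whose proof is exactly the one-line chaining of Theorem~\ref{thm:universaladditive} (instantiated with $k=2$, budget at most $2\eta$, confusion parameter $\zeta+\tfrac{1}{3}$) into the $V_1=V_2$ ``furthermore'' clause of Theorem~\ref{thm:universaladversary}. Your explicit assembly of the per-$m$ adversaries by dispatching on $|S|$ and the parameter bookkeeping ($2\alpha\cdot 2\eta + 2\cdot 2\gamma' = 4\alpha\eta+4\gamma'$ and $\zeta+\tfrac{1}{3}<1$ since $\zeta<\tfrac{1}{2}$) are details the paper leaves implicit, and they are handled correctly.
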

\begin{proof}
    This corollary directly follows from Theorem~\ref{thm:universaladditive} and Theorem~\ref{thm:universaladversary}.
\end{proof}

\subsection{Proof of Theorem~\ref{thm:main}}
\label{sec:proofofmaintheorem}
We can now prove the main theorem of this paper, Theorem~\ref{thm:main}. A separation between the power of adaptive and oblivious additive adversaries follows as a corollary.
\begin{proof}
    From Lemma~\ref{lemma:C_grealizable} we know that $\C_g$ is realizably learnable with sample complexity function $m_{\C_g}^{\mathrm{re}}(\epsilon,\delta) \leq \log\left(\frac{1}{\delta}\right) g\left(\frac{1}{\epsilon}\right)$. From Lemma~\ref{lemma:C_g_conditions} and Corollary~\ref{cor:universal_subtractive}, we can infer that for every $\alpha\geq 1$ there is a universal subtractive adversary $V_{\mathrm{sub},\eta}$ with budget $\eta$, such that for every $m\in \naturals$, $V_{\mathrm{sub},\eta}$ is successfully $(4\alpha \eta + 4 \gamma', \zeta)$-confusing $\C_g$ generated examples of size $m$. Finally, from Corollary~\ref{cor:general}, we get that the above implies that there is a adaptive additive adversary that has budget $2\eta$ and is  a universal $\alpha$-adversary for $\C_g$.
\end{proof}

Since it has been shown that classes that are realizably learnable are also learnable in the oblivious additive $3$-robust case, this result shows a separation between learnability between adaptive additive and oblivious additive learnability. 

\begin{corollary}
    There is a class $\C$ that is (obliviously) additive $3$-robustly learnable, but for every $\alpha\geq 1$, $\C_g$ is not adaptively additive $\alpha$-robustly learnable.
\end{corollary}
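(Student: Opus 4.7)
The plan is to prove this corollary as a direct assembly of Theorem~\ref{thm:main} with a known prior result. Specifically, I would take $\C = \C_g$ for any fixed superlinear function $g$ (say, $g(j) = j^2$, though the particular choice is irrelevant). The work has already been done: Theorem~\ref{thm:main} establishes both realizable learnability of $\C_g$ and its non-adaptive-additive-robust-learnability for every $\alpha \geq 1$, so the only remaining step is to bridge realizable learnability to oblivious additive $3$-robust learnability.

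First, I would invoke the final bullet of Theorem~\ref{thm:main} (combined with Lemma~\ref{lemma:C_grealizable}): $\C_g$ is realizably PAC learnable with sample complexity $\log(1/\delta)\, g(1/\epsilon)$. Next, I would cite Theorem~1.7 of~\citet{ben-david2023distribution}, which asserts that every realizably learnable class of distributions is obliviously additive $3$-robustly learnable. Applying this to $\C_g$ yields that $\C_g$ is obliviously additive $3$-robustly learnable, settling the first half of the claim. For the second half, I would simply invoke the second bullet of Theorem~\ref{thm:main}: for every $\alpha \geq 1$, $\C_g$ is not adaptively additive $\alpha$-robustly learnable (moreover, there is a universal additive $\alpha$-adversary $V_{\mathrm{add}}$ witnessing this).

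The two properties are asserted for the \emph{same} class $\C_g$, so no further construction or coupling is needed; the corollary follows immediately. There is no real obstacle here --- the substantive content lives entirely in the proof of Theorem~\ref{thm:main} (specifically in Corollary~\ref{cor:general} combined with Lemma~\ref{lemma:C_g_conditions}), and in the pre-existing positive result of~\citet{ben-david2023distribution}. The only mild subtlety worth remarking on in the write-up is that the constant $3$ on the oblivious side is a \emph{specific} achievable constant coming from the ben-David et al.\ construction, whereas the impossibility on the adaptive side holds for \emph{every} constant $\alpha \geq 1$; this asymmetry is precisely what makes the separation meaningful.
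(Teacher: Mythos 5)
Your proposal is correct and matches the paper's own proof, which likewise obtains the corollary by combining Theorem~\ref{thm:main} (the adaptive-additive impossibility for $\C_g$) with the result of \citet{ben-david2023distribution} that realizably learnable classes are obliviously additive $3$-robustly learnable (cited there as Theorem~1.5 rather than 1.7, a labeling difference only). No gap; the argument is the same immediate assembly.
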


\begin{proof}
    This result follows directly from Theorem~\ref{thm:main} and Theorem~1.5 of \citet{ben-david2023distribution}.
\end{proof}

\ignore{\color{gray}

\section{Oblivious additive vs. adaptive additive adversaries}
\label{obliviousadaptive}
In this section we will show the main result of this paper, that is, we will show that there are classes of distributions, which are learnable in the realizable case, but not learnable in the presence of both adaptive additive and adaptive subtractive adversaries.

In particular this result implies a separation between learnability with respect to oblivious and adaptive adversaries in the additive case.

\begin{theorem}\label{thm:main}
    For every superlinear $g$, there is class $\mathcal{C}_g$, such that
    \begin{itemize}
        \item $\mathcal{C}_g$ is realizably learnable with sample complexity $ m_{\C_g}^{\mathrm{re}}(\epsilon,\delta) \leq \log\left(\frac{1}{\delta}\right) g\left(\frac{1}{\epsilon}\right)$
        \item For every $\alpha\geq 1$,  $\mathcal{C}_g$ is not adaptively additive $\alpha$-robustly learnable. Moreover, there is an adaptive additive adversary $V_{\mathrm{add}}$, such that for every $\alpha\geq 1$, $V_{\mathrm{add}}$ is a universal $\alpha$-adversary for $\C_g$.
        \item For every $\alpha\geq 1$,  $\mathcal{C}_g$ is not adaptively subtractively $\alpha$-robustly learnable. Moreover, there is an adaptive subtractive adversary $V_{\mathrm{sub}}$, such that for every $\alpha\geq 1$, $V_{\mathrm{sub}}$ is a universal $\alpha$-adversary for $\C_g$.
    \end{itemize}
\end{theorem}

For this theorem we use a version of the class $\Q_g$ from \citet{ben-david2023distribution} that was used to show a separation between realizable and agnostic learning. 

For this let $ \{B_i \subset \naturals: B_i \text{ is finite}\}$ an enumeration of all finite subsets indexed by $i\in \naturals$. Now for constants $j,k\in \naturals$, we define the following distribution as a mixture of two deterministic distribution $\delta_{(0,0)}$ and $\delta_{(i,j)}$ centered at $(0,0)$ and $(i,j)$ respectively, and a uniform distribution over the set $B_i$ denoted by $U_{B_i}$:
\[p_{i,j,k} = \left(1 - \frac{1}{j}\right) \delta_{(0,0)}+ \left(\frac{1}{j} - \frac{1}{k}\right)U_{B_i\times\{2j+1\}} + \frac{1}{k}\delta_{(i,2j+2)}.\]

Now for a function $f:\naturals \to \naturals$, we define the class
\[\C_g = \{p_{i,j,g(j)}: i\in \naturals, j\in \naturals \}.\]

We first note that this class is realizably learnable, using results from~\citet{ben-david2023distribution}.
\begin{lemma}[Claim~3.2 from \citet{ben-david2023distribution}]
\label{lemma:C_grealizable}
    For every monotone  function $g:\naturals \to \naturals$, the class $\C_g$ is learnable with sample complexity
    \[m_{\C_g}^{\mathrm{re}}(\epsilon,\delta) \leq \log\left(\frac{1}{\delta}\right) g\left(\frac{1}{\epsilon}\right).\]
\end{lemma}

We will now argue that the class $\C_g$ is not learnable in the adaptive subtractive and in the adaptive additive case, by first constructing a subtractive adversary $V_{\mathrm{sub},b}$ and show that for every $m\in \naturals$ there are $p\in \C_g$ and $\Q \in \Delta(\C_g)$, such that the conditions for Corollary~\ref{cor:general} are met.

For a sample $S$, we partition $S$ into non-indicators $\mathrm{constants}(S) = \{(0,0)\in S \}$ $\mathrm{odds}(S) = \{(o,2j+1)\in S : o,j \in \naturals\}$ and indicators
$\mathrm{ind}(S) = \{(i,2j+2)\in S : i,j \in \naturals\}$, i.e. $S = \mathrm{constants}(S) \cup \mathrm{odds}(S) \cup \mathrm{ind}(S) $. We further denote non-indicators $\mathrm{noind}(S) = \mathrm{constants}(S) \cup \mathrm{odds}(S)$. Note that the sets $S, \mathrm{constants}(S), \mathrm{odds}(S),\mathrm{noind}(S)$ and $\mathrm{ind
}(S)$ are all multisets and thus repetitions of elements are counted. 

Now for a function $b':\naturals \to \naturals$ determining the budget $b(S) = \frac{b'(|S)|}{|S|}$, we define the subtractive adversary $V_{\mathrm{sub},b'}: \X^* \to \X^*$ as the adversary that removes as many elements from $\mathrm{indicators}(S)$ and otherwise chooses to remove elements randomly to match the budget. That is 

\[V_{\mathrm{sub},b'}(S) = \begin{cases}
    \mathrm{choose}(\mathrm{noind}(S),1-b'(|S|))\\ 
    \qquad \qquad\text{, if } |\mathrm{ind}(S)|\leq b'(|S|) \\
    \mathrm{noind}(S) \cup \mathrm{choose}(\mathrm{ind}(S),|\mathrm{ind}(S)|-b'(|S|))\\
    \qquad \qquad\text{, if } |\mathrm{ind}(S)| > b'(|S|)  \\
\end{cases},\]
where $\mathrm{choose}(S,n)$, is the random variable, which selects a  (uniformly chosen) random subset $S'\subset S$ of size $|n|$.
It is easy to see that $V_{\mathrm{sub},b'}$ is a subtractive adversary with $\mathrm{budget}^{\mathrm{sub}}(V_{\mathrm{sub},b'},m) = \frac{b'(m)}{m}$ fixed at level $m$.

\begin{lemma}\label{lemma:C_g_conditions}
For every superlinear function $g$, consider the subtractive adversary $V_{\mathrm{sup},b'}$ with for all $m\in \naturals :b'(x) = 2g(x)$.
    For every $m\in \naturals$, there are distributions $p_m \in \C_g$ and $Q \in \Delta(\C_g)$ such that
    \begin{itemize}
        \item for every $q\in \mathrm{supp}(Q_m)$:
        \[\dtv(p_m,q) \geq 8  (\alpha +\frac{1}{2})\mathrm{budget}^{\mathrm{sub}}(V_{\mathrm{sub},2g}, m)  \]
        \item $\dtv(V_{\mathrm{sub},2g}(p_m^m ),V_{\mathrm{sub},2g}(|Q_m|^m ) ) \leq \frac{1}{8}$.
    \end{itemize}
\end{lemma}
\begin{proof}[Proof sketch]
We first note that for $ p = p_{i,j,k}$ with $|B_i|=2^{2^n}$ is arbitrary and $D_{i,n,i,k}= \{p_{i',j,k} : B_{i'} \subset B_i: |B_{i'}| = 2^n \}$, we have that for every $q\in D_{i,n}$
 \begin{align*}
 \dtv(p, q) &\geq \left(\frac{1}{j} - \frac{1}{k}\right)\dtv(U_{B_i \times {2j}}, U_{B_{i'} \times {2j}}) \\
 &\geq \frac{1}{2} \left(\frac{1}{j} - \frac{1}{k}\right).
 \end{align*}
 
Furthermore, consider $Q = U_{D_i}$.  We note that the distributions $V_{\mathrm{sub}}(p^m)$ and $V_{\mathrm{sub}}(|Q|^m)$ are both distributions over multisets $S' = \mathrm{constants}(S') \cup \mathrm{odds}(S') \cup \mathrm{ind}(S') $. We further note that the distributions of the count of each of those subsets are the same for both $ S' \sim V_{\mathrm{sub},b'}(p^m)$ and $S'\sim V_{\mathrm{sub},b'}(|Q|^m)$.
Furthermore, since all elements of $\mathrm{constants}(S')$ are the same, we also have that the probability distribution of $\mathrm{constants}(S')$ given $|\mathrm{constants}(S)|$ are the same for both $ S' \sim V_{\mathrm{sub},b'}(p^m)$ and $S' \sim V_{\mathrm{sub},b'}(|Q|^m)$.
Furthermore, conditioned on the size $|\mathrm{odds}(S')|$ and the event that $\mathrm{odds}(S')$ doesn't contain any repeated elements, the distributions of $\mathrm{odds}(S')$ is the same for both $ S' \sim V_{\mathrm{sub},b'}(p^m)$ and $S' \sim V_{\mathrm{sub},b'}(|Q|^m)$, since both $|Q|^m$ and $p^m$ give equal weights to all subsets of $B_i$ the same size.
Lastly, we note that while the indicator of samples from $p$ and samples from $Q$ differ, the adversary $V_{\mathrm{sub},b'}$ deletes all these elements, if $|\mathrm{ind}(S)|$ does not exceed the budget-count.  

Taking all of these observations together, we can bound the total variation distance in terms of repeating elements in $\mathrm{odds}(S)$ and the probability that $|\mathrm{ind}(S)|$ exceeds the budget of the adversary.
\begin{align*}
   & \dtv(V_{\mathrm{sub},b'}(p^m),V_{\mathrm{sub},b'}(|Q|^m) )\\
    &\leq  \mathbb{P}_{S\sim p^m}[|\mathrm{ind}(S)| > b'(S)]\\
    &\quad + \mathbb{P}_{S\sim |Q|^m}[|\mathrm{ind}(S)| > b'(S)]\\
    &\quad+\mathbb{P}_{S\sim p^m}[\mathrm{odds}(S) \text{ contains repeated elements}]\\
     &\quad+\mathbb{P}_{S\sim |Q|^m}[\mathrm{odds}(S) \text{ contains repeated elements}].\\
\end{align*}
 The first two terms can each be upper bounded by $\mathbb{P}_{X \sim\mathrm{Binom}}(m,\frac{1}{k})[ X - E[X]\geq b'(m) - \frac{m}{k}] \leq \exp\left( -\frac{2(b'(m) - \frac{m}{k})^2}{m}\right)$ using Hoeffding's inequality. Now for $b'(m) \geq \frac{2m}{k}$ this is bounded by $ \exp\left( - \frac{2m}{k}\right)$. The last two terms are each upper bounded by $\left(\frac{m^2}{2^n}\right)$.

 Now since $g$ grows faster than linear, we can choose $p_{m} =p_{i,j,g(j)}\in \C_{g}$, with $g(j) \geq \max\{40\alpha j,k_{\mathrm{min}} \}$, where $k_{\mathrm{min}}$ is chosen such that $1- (1-\frac{1}{k_{min}})^{40\alpha j}<\frac{1}{2^5} $ and $|B_i|=2^{2^{m+40}}$ and $\Q = U_{D_{i,m+40}}$. Thus we get
for every $q\in \mathrm{supp}(Q_m)$,
\begin{align*}
    8 (\alpha + \frac{1}{2})\mathrm{budget}^{\mathrm{sub}}(V_{\mathrm{sub},b'},m) & \leq  \dtv(p_m, q).
\end{align*} 

and 
\begin{align*}
   & \dtv(V_{\mathrm{sub},b'}(p^m),V_{\mathrm{sub},b'}(|Q|^m) )\leq \frac{1}{8}.
\end{align*}   

\end{proof}

We can now prove Theorem~\ref{thm:main}.
\begin{proof}
    From Lemma~\ref{lemma:C_grealizable} we know that $\C_g$ is realizably learnable with sample complexity function $m_{\C_g}^{\mathrm{re}}(\epsilon,\delta) \leq \log\left(\frac{1}{\delta}\right) g\left(\frac{1}{\epsilon}\right)$. From Lemma~\ref{lemma:C_g_conditions} and Corollary~\ref{cor:universal_subtractive}, we can infer that $V_{\mathrm{sub},2g}$ is a universal subtractive adversary for every $\alpha\geq 1$. Furthermore from Lemma~\ref{lemma:C_g_conditions} and Corollary~\ref{cor:universal_subtractive}, we can infer that there is an additive adversary with budget $\mathrm{budget}^{\mathrm{add}}(V_{\mathrm{add}, 2},m) \leq 8 g$ such that for every $\alpha\geq 1$, $V_{\mathrm{add}, 2}$ is a universal $\alpha$-adversary for $\C_g$.
\end{proof}

Since it has been shown that classes that are realizably learnable are also learnable in the oblivious additive $3$-robust case, this result shows a separation between learnability between adaptive additive and oblivious additive learnability. 

\begin{corollary}
    There is a class $\C$ that is (obliviously) additive $3$-robustly learnable, but for every $\alpha\in \mathbb{R}$, $\C_g$ is not adaptively additive $\alpha$-robustly learnable.
\end{corollary}

\begin{proof}
    This result follows directly from Theorem~\ref{thm:main} and Theorem~1.5 of \cite{ben-david2023distribution}.
\end{proof}

}
\newpage
\section*{Acknowledgments}
We would like to thank Shai Ben-David for helpful discussions.
GK is supported by an NSERC Discovery Grant, a Canada CIFAR AI Chair, and an Ontario Early Researcher Award.

\section*{Impact Statement}
This paper presents work whose goal is to advance the field of Machine Learning. There are many potential societal consequences of our work, none which we feel must be specifically highlighted here.
\bibliography{biblio}
\bibliographystyle{icml2025}

\newpage
\appendix
\onecolumn
\section{Proofs}
\subsection{Proof of Lemma~\ref{lemma:pac->distinct}}\label{sec:pac->distinct-proof}
\begin{proof}

By definition of $(\gamma,\zeta)$-confusion of $\C$-generated samples of size $m$, there are distributions $p\in \C$ and $|Q|\in \Delta(\C)$, such that
\begin{itemize}
    \item for every $q\in \mathrm{supp}(Q)$, we have $\dtv(p,q)> \gamma$ and
    \item $\dtv(V_1(|Q|^m), V_2(p^m) ) < \zeta$.
\end{itemize}

    Assume by way of contradiction that there is a learner $A$ such that for every $r\in \C$,
    \[\mathbb{P}_{S\sim r^m}\left[\dtv(A(V_1(S)), r) > \frac{\gamma}{2}\right] < \frac{1}{2} - \frac{\zeta}{2}\]
    and
      \[\mathbb{P}_{S\sim r^m}\left[\dtv(A(V_2(S)), r) > \frac{\gamma}{2}\right] < \frac{1}{2} - \frac{\zeta}{2}.\]
    In particular, this means that for $p\in \C$, we have 
    \begin{equation*}
    \mathbb{P}_{S\sim p^m}\left[\dtv(A(V_1(S)), p) \leq \frac{\gamma}{2}\right] \geq 1 -\left(\frac{1}{2} - \frac{\zeta}{2}\right) = \frac{1}{2} + \frac{\zeta}{2}
\end{equation*}
    and
    \begin{equation}\label{eq:lemma1_2}
        \mathbb{P}_{S\sim p^m}\left[\dtv(A(V_2(S)), p) \leq \frac{\gamma}{2}\right] \geq 1 -\left(\frac{1}{2} - \frac{\zeta}{2}\right) = \frac{1}{2} + \frac{\zeta}{2}.
    \end{equation}
    
    We note that for any $p_1,  p_2$ with $\dtv(p_1,p_2) < d$ and any predicate $F$, we have 
    \[\mathbb{P}_{x\sim p_2}\left[F(x)\right] - d \leq \mathbb{P}_{x\sim p_1}[F(x)] \leq \mathbb{P}_{x\sim p_2}[F(x)] +d .\]
    Thus, for meta-distribution $Q$ with $\dtv(V_1(|Q|^m),V_2(p^m))] < \zeta$ we have, 
    \begin{equation}\label{eq:lemma1_3}
        \mathbb{P}_{S \sim |Q|^m}\left[\dtv(A(V_1(S)),p) \leq \frac{\gamma}{2}\right] \geq  \mathbb{P}_{S \sim p^m}\left[\dtv(A(V_2(S)),p) \leq \frac{\gamma}{2}\right] - \zeta \geq_{(\ref{eq:lemma1_2})} \frac{1}{2} + \frac{\zeta}{2}- \zeta  = \frac{1}{2} - \frac{\zeta}{2}.
    \end{equation}
    Furthermore, we have 
    \begin{align}
    &&\max_{q\in \mathrm{supp}(Q)} \mathbb{P}_{S \sim q^m}\left[\dtv(A(V_1(S)),p) \leq \frac{\gamma}{2}\right] \notag\\
    &\geq& \mathbb{P}_{q\sim Q}\mathbb{P}_{S \sim q^m}\left[\dtv(A(V_1(S)),p) \leq \frac{\gamma}{2}\right] \notag \\
    &=_{\text{Definition of $|Q|^m$}}& \mathbb{P}_{S \sim |Q|^m}\left[\dtv(A(V_1(S)),p)\leq \frac{\gamma}{2}\right] \notag \\
&\geq_{(\ref{eq:lemma1_3})}& \frac{1}{2} - \frac{\zeta}{2}. \label{eq:lemma_1_4}
    \end{align}
    Let 
    \[q_{\mathrm{max}} = \arg\max_{q\in \mathrm{supp}(Q)} \mathbb{P}_{S \sim q^m}\left[\dtv(A(V_1(S)),p) \leq \frac{\gamma}{2}\right].\]
    
    Recall that, for every $q\in \mathrm{supp}(Q)$, we have $\dtv(p,q) > \gamma$. Thus triangle inequality yields 
    \begin{equation*}
        \dtv(A(V_1(S)),q_{\mathrm{max}}) + \dtv(A(V_1(S)),p) > \gamma.
    \end{equation*}

    Thus, $\dtv(A(V_1(S),p)) \leq \frac{\gamma}{2}$ implies $\dtv(A(V_1(S),q_{\mathrm{max}})) > \frac{\gamma}{2}$, yielding,
   \begin{align*}
      & &\mathbb{P}_{S \sim q_{\mathrm{max}}}\left[\dtv(A(V_1(S)),q_{\mathrm{max}}) > \frac{\gamma}{2}\right]\\    
      &\geq &\mathbb{P}_{S \sim q_{\mathrm{max}}}\left[\dtv(A(V_1(S)),p) \leq  \frac{\gamma}{2}\right]\\  &\geq_{(\ref{eq:lemma_1_4})}& \frac{1}{2} -\frac{\zeta}{2}.
    \end{align*}
    This contradicts our assumption on $A$, which proves the claim.
\end{proof}

\subsection{Proof of Theorem~\ref{thm:universaladversary}}

\begin{proof}

    Assume by way of contradiction that there was a successful $\alpha$-robust learner $A$ with sample complexity $m_{\C}$ for $\C$ with respect to $\V \supset \{V_1, V_2\}$.
    
Let
\[\delta= \min\left\{\frac{1 - \zeta}{2},\delta'\right\}\]
and 
\[\epsilon= \min\{\gamma',\epsilon'- \alpha \max\{\eta_1, \eta_2\}\}.\]
Furthermore, let $m= m_{\C}^{\V,\alpha}(\epsilon,\delta)$.

\ignore{
{\color{teal} We first note, that if $m\leq m' < m_{\C}^{\mathrm{re}}(\epsilon', \delta')$, then by definition of the sample complexity function, we know that for every learner $A$, there is a distribution $p\in \C$ with
\[
\mathbb{P}_{S\sim p^m}\left[\dtv(A(S),p) >  \epsilon' \right] \geq 1- \delta'.
\]
Thus, since $V_1, V_2: \X^*\to \X^*$ are only defined on samples and do not have any knowledge of the ground-truth distribution, we have

\begin{align*}
   & \mathbb{P}_{S\sim p^m}\left[\dtv(A(V_1(S)),p) >  \alpha \eta_1 + \epsilon \right]\\
   & \geq \mathbb{P}_{S\sim p^m}\left[\dtv(A(S),p) >  \alpha \eta_1 + \epsilon \right]\\
    & = \mathbb{P}_{S\sim p^m}[\dtv(A(S),p) > \alpha \eta_1 + \epsilon'  - \alpha \max\{\eta_1, \eta_2\}]\\
     & \geq \mathbb{P}_{S\sim p^m}[\dtv(A(S),p) > \epsilon' ]\\
    & \geq 1- \delta' \geq 1 - \delta.
\end{align*}
Similarly,
\[\mathbb{P}_{S\sim p^m}\left[\dtv(A(V_2(S)),p) >  \alpha \eta_2 + \epsilon \right] \geq 1 -\delta.\]

It is thus sufficient to limit ourselves to the case $m\geq m'.$
}
}

According to the assumptions of the theorem, we know that the pair $(V_1,V_2)$ successfully $(\gamma,\zeta)$-confuses $\C$-generated samples of size $m$ with
 \[\gamma = 2 \alpha \cdot \max\{\eta_1,  \eta_2\} + 2 \gamma.\] 
Now consider
\begin{align*}
    \alpha \cdot \eta_1 +\epsilon &= \alpha \cdot \eta_1 + \gamma' \\
    &\leq \frac{\gamma}{2}.
\end{align*}
With the same argument, we have $ \alpha \cdot \eta_2 +\epsilon \leq \frac{\gamma}{2} $.
Now using Lemma~\ref{lemma:pac->distinct}, we can infer that there is a distribution $r\in \C$ such that either
\begin{align*}
&\mathbb{P}_{S\sim r^m}\left[\dtv(A(V_1(S) ), r)> \alpha \cdot \eta_1 +\epsilon\right] \\
&\geq \mathbb{P}_{S\sim r^m}\left[\dtv(A(V_1(S) ), r)> \frac{\gamma}{2}\right] \\
&\geq  \frac{1}{2}- \frac{\zeta}{2}  = \delta.
\end{align*}
or
\begin{align*}
&\mathbb{P}_{S\sim r^m}[\dtv(A(V_2(S) ), r)> \alpha \cdot \eta_2 +\epsilon] \\
&\geq \mathbb{P}_{S \sim r^m}[\dtv(A(V_2(S) ), r) > \frac{\gamma}{2}]\\
&\geq  \frac{1}{2}- \frac{\zeta}{2}= \delta.
\end{align*}
This is a contradiction to the assumption that $A$ is a $\alpha$-robust learner of $\C$ with respect to $\V$ with sample complexity $m_{\C}^{\V,\alpha}$.
Furthermore, if $V_1=V_2$, then $V_1$ is a universal $\alpha$-adversary.

\end{proof}

\ignore{
\begin{proof}
    Assume by way of contradiction that there was a successful $\alpha$-robust learner $A$ with sample complexity $m_{\C}$ for $\C$ with respect to $\V \supset \{V_1, V_2\}$.
    
Let
\[\delta= \frac{1 - \zeta}{2}\]
and 
\[\epsilon= \gamma' \cdot \min\left\{\eta_1,\eta_2\right\} .\]

\comment{just realized an issue here. Definition of $\epsilon$ depends on $m$. Not an issue for constant budgets. Will think about how to fix. Worst case I'll change everything to constant budget.}

{\color{teal}   }

Furthermore, let $m= m_{\C}^{\V,\alpha}(\epsilon,\delta)$.
According to the assumptions of the theorem, we know that the pair $(V_1,V_2)$ successfully $(\gamma,\zeta)$-confuses $\C$-generated samples of size $m$ with
 \[\gamma = 2 (\alpha +\gamma') \cdot \max\{\eta_1, \eta_2\}.\] 
Now consider
\begin{align*}
    \alpha \cdot \eta_1 +\epsilon &= \alpha \cdot \eta_1 + \gamma' \cdot \min\left\{\eta_1, \eta_2 \right\}  \\
    &\leq (\alpha + \gamma') \cdot \eta_1 \\ 
    &\leq \frac{\gamma}{2}.
\end{align*}
With the same argument, we have $ \alpha \cdot \eta_2 +\epsilon \leq \frac{\gamma}{2} $.
Now using Lemma~\ref{lemma:pac->distinct}, we can infer that there is a distribution $r\in \C$ such that either
\begin{align*}
&\mathbb{P}_{S\sim r^m}\left[\dtv(A(V_1(S) ), r)> \alpha \cdot \eta_1 +\epsilon\right] \\
&\geq \mathbb{P}_{S\sim r^m}\left[\dtv(A(V_1(S) ), r)> \frac{\gamma}{2}\right] \\
&\geq  \frac{1}{2}- \frac{\zeta}{2}  = \delta.
\end{align*}
or
\begin{align*}
&\mathbb{P}_{S\sim r^m}[\dtv(A(V_2(S) ), r)> \alpha \cdot \eta_2 +\epsilon] \\
&\geq \mathbb{P}_{S \sim r^m}[\dtv(A(V_2(S) ), r) > \frac{\gamma}{2}]\\
&\geq  \frac{1}{2}- \frac{\zeta}{2}= \delta.
\end{align*}
This is a contradiction to the assumption that $A$ is a $\alpha$-robust learner of $\C$ with respect to $\V$ with sample complexity $m_{\C}^{\V,\alpha}$.
Furthermore, if $V_1=V_2$, then $V_1$ is a universal $\alpha$-adversary.

\end{proof}
}

\subsection{Proof of Lemma~\ref{lemma:C_g_conditions}}

\begin{proof}
We will start by making observations for the adversary $V_{\mathrm{sub}, \eta}$ for arbitrary $\eta> 0$, and later discuss how to choose $\eta$ for a given $\alpha$. Similarly, we first start by making observations for general $n,i,j,k \in \naturals$ and then discuss appropriate choices for these numbers. 

We first note that for $ p = p_{i,j,k}$ with $|B_i|=2^{2^n}$ and $D_{i,n,j,k}= \{p_{i',j,k} : B_{i'} \subset B_i: |B_{i'}| = 2^n \}$, we have that for every $q\in D_{i,n,j,k}$
 \begin{align*}
 \dtv(p, q) &\geq \left(\frac{1}{j} - \frac{1}{k}\right)\dtv(U_{B_i \times {2j}}, U_{B_{i'} \times {2j}}) \\
 &\geq \frac{1}{2} \left(\frac{1}{j} - \frac{1}{k}\right).
 \end{align*}
 
Furthermore, consider $Q = U_{D_{i,n,j,k}}$.

We note that the distributions $V_{\mathrm{sub},\eta}(p^m)$ and $V_{\mathrm{sub},\eta}(|Q|^m)$ are both distributions over multisets $S' = \mathrm{constants}(S') \cup \mathrm{odds}(S') \cup \mathrm{indicators}(S') $.
We note that for any two samples $S'_a \in \X^*$ and $S'_b \in \X^*$ by definitions of $\mathrm{constants}, \mathrm{odds}, \mathrm{ind}$, we have 
\begin{align*}
&S'_a \cap S'_b =\\
&\quad (\mathrm{constants}(S'_a) \cap \mathrm{constants}(S'_b)) \cup (\mathrm{odds}(S'_a) \cap \mathrm{odds}(S'_b)) \cup (\mathrm{ind}(S'_a) \cap \mathrm{ind}(S'_b)).
\end{align*}
Where each of the three sets $(\mathrm{constants}(S'_a) \cap \mathrm{constants}(S'_b)), (\mathrm{odds}(S'_a) \cap \mathrm{odds}(S'_b)) , (\mathrm{ind}(S'_a) \cap \mathrm{ind}(S'_b))$ are pairwise disjoint. We can thus write the total variation distance between $V_{\mathrm{sub},\eta}(p^m)$ and $V_{\mathrm{sub},\eta}(|Q|^m)$ as

\begin{align*}
   & \dtv(V_{\mathrm{sub},\eta}(p^m),V_{\mathrm{sub},\eta}(|Q|^m) )\\
   & = \dtv(\mathrm{constants}(V_{\mathrm{sub},\eta}(p^m)),\mathrm{constants}(V_{\mathrm{sub},\eta}(|Q|^m) )) \\
   & \quad + \dtv(\mathrm{odds}(V_{\mathrm{sub},\eta}(p^m)),\mathrm{odds}(V_{\mathrm{sub},\eta}(|Q|^m) )) \\
   & \quad+ \dtv(\mathrm{ind}(V_{\mathrm{sub},\eta}(p^m)),\mathrm{ind}(V_{\mathrm{sub},\eta}(|Q|^m) ))\\
\end{align*}

For a distribution $P \in \Delta(\X^*) $ over sets, we define the distribution 
$\mathrm{count}(P)\in \Delta(\naturals)$ by
\[
\forall B\subset \naturals : \mathrm{count}(P)(B) = \mathbb{P}_{S\sim P}[|S|\in B].
\]
We note that the distributions of the count of each of the subsets are the same for both $ V_{\mathrm{sub},\eta}(p^m)$ and $V_{\mathrm{sub},\eta}(|Q|^m)$. That is, 
\[ \dtv(\mathrm{count}(\mathrm{constants}\left(V_{\mathrm{sub},\eta}(p^m))),\mathrm{count}(\mathrm{constants}(V_{\mathrm{sub},\eta}(|Q|^m))) \right) = 0,\]
\[ \dtv(\mathrm{count}(\mathrm{odds}(V_{\mathrm{sub},\eta}(p^m))),\mathrm{count}(\mathrm{odds}(V_{\mathrm{sub},\eta}(|Q|^m))) ) = 0,\]
and
\[ \dtv(\mathrm{count}(\mathrm{indicators}(V_{\mathrm{sub},\eta}(p^m))),\mathrm{count}(\mathrm{indicators}(V_{\mathrm{sub},\eta}(|Q|^m))| ) = 0.\]
Furthermore, for two different samples $S_a$ and $S_b$, $\mathrm{constants}(S_a) = \mathrm{constants}(S_b)$ if and only if $|\mathrm{constants}(S_a)| = |\mathrm{constants}(S_b)| $. 
Thus, 
\[\dtv(\mathrm{constants}(V_{\mathrm{sub},\eta}(p^m)),\mathrm{constants}(V_{\mathrm{sub},\eta}(|Q|^m) )) =0.\]



Furthermore, 
\begin{align*}
    &\mathbb{P}_{S'\sim V_{\mathrm{sub},\eta}(p^m)}[\mathrm{odds}(S') | |\mathrm{odds}(S')|=l \text { and there are no repeated elements in } \mathrm{odds}(S') ] = \\
    & \qquad \mathbb{P}_{S'\sim V_{\mathrm{sub},\eta}(|Q|^m)}[\mathrm{odds}(S') | |\mathrm{odds}(S')|=l \text { and there are no repeated elements in } \mathrm{odds}(S') ],
\end{align*}
since both $|Q|^m$ and $p^m$ give equal weights to all subsets of $B_i$ the same size.
Lastly, we note that while the indicator of samples from $p$ and samples from $Q$ differ, the adversary $V_{\mathrm{sub},\eta}$ deletes all these elements, if $|\mathrm{indicators}(S)|$ does not exceed the budget-count.  

Taking all of these observations together, we can bound the total variation distance in terms of repeating elements in $\mathrm{odds}(S)$ and the probability that $|\mathrm{ind}(S)|$ exceeds the budget of the adversary.

\begin{align*}
   & \dtv(V_{\mathrm{sub},\eta}(p^m),V_{\mathrm{sub},\eta}(|Q|^m) )\\
   & \leq \dtv(\mathrm{constants}(V_{\mathrm{sub},\eta}(p^m)),\mathrm{constants}(V_{\mathrm{sub},\eta}(|Q|^m) )) \\
   & \quad + \dtv(\mathrm{odds}(V_{\mathrm{sub},\eta}(p^m)),\mathrm{odds}(V_{\mathrm{sub},\eta}(|Q|^m) )) \\
   & \quad+ \dtv(\mathrm{ind}(V_{\mathrm{sub},\eta}(p^m)),\mathrm{ind}(V_{\mathrm{sub},\eta}(|Q|^m) ))\\
    &\leq  \mathbb{P}_{S\sim p^m}[\mathrm{odds}(S) \text{ contains repeated elements}]\\
     &\quad+\mathbb{P}_{S\sim |Q|^m}[\mathrm{odds}(S) \text{ contains repeated elements}]\\
     &\quad + \mathbb{P}_{S\sim p^m}[|\mathrm{ind}(S)| > \eta |S|]\\
    &\quad + \mathbb{P}_{S\sim |Q|^m}[|\mathrm{ind}(S)| > \eta|S|].\\
\end{align*}
  The first two terms can be bounded via a birthday paradox:
  \begin{align*}
      \mathbb{P}_{S\sim p^m}[\mathrm{odds}(S) \text{ contains repeated elements}] +\mathbb{P}_{S\sim |Q|^m}[\mathrm{odds}(S) \text{ contains repeated elements}] \leq 2\cdot \left(1 - \left(1 -\frac{m}{2^n}\right)^m\right) 
  \end{align*}

 The last two terms can each be upper bounded by using Markov's inequality:

\begin{align*}
     \mathbb{P}_{S\sim p^m}[|\mathrm{ind}(S)| > \eta |S|] + \mathbb{P}_{S\sim |Q|^m}[|\mathrm{ind}(S)| > \eta |S|] &\leq 2 \frac{\frac{m}{k}}{\eta m } = \frac{2}{k\eta}\end{align*}


Since we are considering distributions in $\C_g$, we note that the distributions $D_{i,n,j,k}$ need to be of the form $D_{i,n,j,g(j)}$. We now want to argue that for appropriate choices of $j$ and $\eta$ (both independent of $m$), as well as for an appropriate choice of $n$ given $m$, the inequalities of the theorem are satisfied.

First, we note that since $g$ grows faster than linear, it is possible to pick $j$ in such a way that it satisfies the inequality
\[ g(j) \geq \max\{1024 \alpha j\}. \]
Given such $j$, we then pick $\eta = \frac{32}{g(j)}$ and $\gamma'= \frac{\alpha}{g(j)}$.
This ensures, that for every $n\in \naturals$ and every $q\in D_{i,n.j,g(j)}$, we get
\begin{align*}
\dtv(p,q) &\geq \frac{1}{2} \left(\frac{1}{j} - \frac{1}{g(j)}\right) \\
&\geq \frac{1}{2}\left(\frac{1024\alpha }{g(j)} - \frac{1}{g(j)}\right) \\
&> \frac{256\alpha}{g(j)} \\
&= 4\alpha \eta + 4\gamma'.
\end{align*}

Then, for every $m\geq 1$, if we choose $n\geq \frac{m}{1- \left(1-\frac{1}{32}\right)^{1/m}}$, we get
\begin{align*}
   & \dtv(V_{\mathrm{sub},\eta}(p^m),V_{\mathrm{sub},\eta}(|Q|^m) ) \\
   &\leq \frac{2}{g(j) \cdot \frac{32}{g(j)}}
 + 2 \left( 1- \left( 1- \frac{m}{2^{n}}\right)^m\right)\\
  &\leq \frac{1}{16} 
 +2 \left( 1- \left( 1- \frac{m}{n}\right)^m\right)\\
    &\leq \frac{1}{16}  +2 \left( 1- \left( 1- \frac{m}{\frac{m}{\left(1 - \left(1 - \frac{1}{32}\right)^{1/m}\right)}}\right)^m\right)\\ 
  & \leq  \frac{1}{16} +   2\cdot \frac{1}{32}\\
  & \leq \frac{1}{8}.
\end{align*}

 \ignore{
 Now for $\eta m \geq \frac{2m}{k}$ this is bounded by $ \exp\left( - \frac{2m}{k}\right)$. 
 Furthermore, we can also upper bound these terms by 
 \begin{align*}
     \mathbb{P}_{S\sim p^m}[|\mathrm{ind}(S)| > \eta |S|] + \mathbb{P}_{S\sim |Q|^m}[|\mathrm{ind}(S)| > \eta |S|] &\leq 2\cdot \mathbb{P}_{S\sim p^m}[|\mathrm{ind}(S)| > 0] \\
     &\leq 2 \cdot \left(1- \left(1-\frac{1}{k}\right)^m \right).
 \end{align*}

Since we are considering distributions in $\C_g$, we note that the distributions $D_{i,n,j,k}$ need to be of the form $D_{i,n,j,g(j)}$. We now want to argue that for appropriate choices of $j$ and $\eta$ (both independent of $m$), as well as for $n$, given $m$ the inequalities of the theorem are satisfied. 
First, we note that since $g$ grows faster than linear, it is possible to pick $j$ in such a way that if satisfies the inequality
\[ g(j) \geq \max\{40\alpha j, k_{\mathrm{min}} \}, \]
where $k_{\mathrm{min}}$ is chosen such that 
\[ 1-\left(1-\frac{1}{k_{\mathrm{min}}}\right)^{40\alpha j } \leq \frac{1}{32}.\]
Given such $j$, we then pick $\eta = \frac{2}{g(j)}$ and $\gamma'= \frac{\alpha}{g(j)}$.
This ensures, that for every $n\in \naturals$ and every $q\in D_{i,n.j,g(j)}$, we get
\begin{align*}
\dtv(p,q) &\geq \frac{1}{2} \left(\frac{1}{j} - \frac{1}{g(j)}\right) \\
&\geq \frac{1}{2}\left(\frac{40\alpha}{g(j)} - \frac{1}{g(j)}\right) \\
&> \frac{12\alpha}{g(j)} \\
&= 4\alpha \eta + 4\gamma'.
\end{align*}

Furthermore, if for given $m$, we choose $n\geq 2^{m+40}$, we get
\begin{align*}
   & \dtv(V_{\mathrm{sub},\eta}(p^m),V_{\mathrm{sub},\eta}(|Q|^m) ) \\
   &\leq 2\exp\left( -\frac{2(\eta m - \frac{m}{g(j)})^2}{m}\right)+ 2 \frac{m^2}{(2^{m+40})}\\
    &\leq 2\exp\left( -\frac{\eta^2m}{2}\right)+ \frac{1}{16}\\
\end{align*}   

Furthermore, if $m \geq \frac{10}{ \eta^2}$, then, 
\begin{align*}
& \dtv(V_{\mathrm{sub},\eta}(p^m),V_{\mathrm{sub},\eta}(|Q|^m) ) \\
&\leq  2\exp\left( -\frac{\eta^2m}{2}\right) +\frac{1}{16}\\
&\leq  \frac{1}{16} +\frac{1}{16} = \frac{1}{8}.
\end{align*}

}
\end{proof}

\subsection{Proof of Theorem~\ref{thm:subtractive->additive}}
\begin{proof}

While in the main part of the paper, we often use the same notation for a random variable $S\sim p^m$ and a specific sample set $S\in \X^*$, in order to make this proof more formal, we will now distinguish between random variables $S\sim p^m$, which we keep writing with capitalized notation and specific sample set $s\in \X^*$ for which we use non-capitalized notation.

 
    We note, that since adversaries $V$ are in general randomized, for every $s\in \X^* $,  $V(s)$ is a random variable with samples in $s'\in \X^*$. For every adversary $V$, let us denote the distribution of random variable $V(s)$ by $p_{V(s)}$.

   {\color{teal} 
   We note, that for a random variable $S\sim r^m$, the distribution of the random variable $V(S)$ defined for every $B\subset \X^*$ by

  
   \[ p_{V,r}(B) = \int_{s'\in B} \int_{s\in \X^*} dp_{V(s)}(s') dr(s).
   \]
   Accordingly, we have
    \[ dp_{V,r}(s') = \int_{s\in \X^*} dp_{V(s)}(s') dr(s).\]
    
   We now want to formally define the reverse mapping $f_{V,r,s'}$ that, when given an input sample $S'\sim V(S)$ with $S\sim r^m$, outputs a sample $S''\sim r^m$. That is, roughly, $f_{V,r,s'}(s) =\mathbb{P}_{S\sim r^m}[S=s|V(S)=s']$.
   }


For a distribution $r$ over $ \X^*$ and an adversary $V$, let us consider the random function $f_{V, r}^{-1}$ that takes as input a sample  $s'\in \X^*$ and outputs an element of $\X^*$ according to the probability distribution $p_{V^{-1},r,s'}$ which for every $B\subset \X^*$ is defined by
\[p_{V^{-1},r,s'}(B) 
=  \frac{\int_{s\in B} dp_{V(s)}(s') dr(s) }{\int_{S\in \X^m} dp_{V(s)}(s') dr(s)} .\]

Similarly,

\[dp_{V^{-1},r,s'}(s'') 
=  \frac{ dp_{V(s'')}(s') dr(s'') }{\int_{s\in \X^m} dp_{V(s)}(s') dr(s)} .\]






Thus, if we consider the random variable $S''=f^{-1}_{V,r}(V(S))$ for some $S\sim r$, we get $S'' \sim r$, as desired:

\begin{align*}
\mathbb{P}_{S\sim r}[f^{-1}_{V,r}(V(S)) \in B''] &= \int_{s''\in B''}\int_{s\in \X^*}\int_{s'\in \X^*} dp_{V^{-1},r,s'}(s'') dp_{V(s)}(s') dr(s)\\
&= \int_{s''\in B''}\int_{s\in \X^*}\int_{s'\in \X^*} \frac{ dp_{V(s'')}(s') dr(s'') }{\int_{s'''\in \X^m} dp_{V(s''')}(s') dr(s''')} dp_{V(s)}(s') dr(s)\\
&= \int_{s''\in B''}\int_{s'\in \X^*} \frac{ \int_{s\in \X^*} dp_{V(s)}(s') dr(s) dp_{V(s'')}(s') dr(s'') }{\int_{s'''\in \X^m} dp_{V(s''')}(s') dr(s''')} \\
&= \int_{s''\in B''}\int_{s'\in \X^*} dp_{V(s'')}(s') dr(s'')\frac{ \int_{s\in \X^*} dp_{V(s)}(s') dr(s) }{\int_{s'''\in \X^m} dp_{V(s''')}(s') dr(s''')} \\
&= \int_{s''\in B''}\int_{s'\in \X^*} dp_{V(s'')}(s') dr(s'')\\
&= \int_{s''\in B''}dr(s'') \\
= \mathbb{P}_{S\sim r}[S\in B''].
\end{align*}




We note, that by the same argument, we can factorize $r$ in the following way:

\[ r(B) = \int_{s\in B} \int_{s'\in \X^*} dp_{V,r}(s') dp_{V^{-1},r,s'}(s).\]

Now consider a subtractive adversary $V_{\mathrm{sub}}$.Since it is subtractive adversary, we know that for every $s\in \X^m$ and every $s' \in \mathrm{supp}(p_{V_{\mathrm{sub}}(s)} \subset \bigcup_{i=(1-b)m}^m X^i$, we have $s'\subset s$. 

In particular, this means that for every $s\in \X^m$ and every $s' \in \mathrm{supp}(p_{V_{\mathrm{sub}}(s)}$, we have $s = s' \cup (s\setminus S')$.
Now let $f^{\setminus}_{V^{-1},r}(s) = f_{V^{-1},r}(s) \setminus s $ with the corresponding probability measure. 

\[p^{\setminus}_{V^{-1},r,s}(B) = p_{V^{-1},r,s}(B'),\]
where $B' = \{ s\cup s': s'\in B\}$.

Now consider the (randomized) additive adversary $V_{\mathrm{add},r}$, defined by:

\[V_{\mathrm{add},r}(s) = S \cup f^{\setminus}_{V^{-1}_{\mathrm{sub}},r}(V_{\mathrm{sub}}(s))).\]

Thus for the corresponding probability measure for the random variable $V_{\mathrm{add},r}(s)$ is defined by 
\begin{align*}
    p_{V_{\mathrm{add},r}(s)}(B''') &= \int_{s'''\in B'''}\int_{s''\in \X^*}\int_{s'\in X^*}dp_{V_{\mathrm{sub}}(s)}(s')dp_{V^{-1}_{\mathrm{sub},r,s'}}(s'') \mathbbm{1}[s''' = s' \cup (s \setminus s') \cup (s'' \setminus s')]
\end{align*}

Furthermore, for every probability distribution $q$ we have. 

\begin{align*}
&\mathrm{P}_{S\sim q}[V_{\mathrm{add},r}(S) \in B''']\\
    &= \int_{s'''\in B'''}\int_{s\in \X^*} dp_{V_{\mathrm{add},r}(s)}(s''') dq(s)\\
    &= \int_{s'''\in B'''}\int_{s\in \X^*}\int_{s'\in X^*}\int_{s'\in X^*}dp_{V_{\mathrm{sub}}(s)}(s')dp_{V^{-1}_{\mathrm{sub},r,s'}}(s'') \mathbbm{1}[s''' = s' \cup (s \setminus s') \cup (s'' \setminus s')] dq(s)\\
     &=  \int_{s'''\in B'''}\int_{s\in \X^*}\int_{s'\in X^*}\int_{s'\in X^*} \mathbbm{1}[s''' = s' \cup (s \setminus s') \cup (s'' \setminus s')] dp_{V_{\mathrm{sub}},q}(s') dp_{V^{-1},q,s'}(s)  dp_{V^{-1},r,s'}(s'')
\end{align*}

Now consider the additive adversaries $V_{\mathrm{add},p^m}$ and $V_{\mathrm{add},|Q|^m}$.

\begin{align*}
    &\dtv(V_{\mathrm{add},p^m}(|Q|^m), V_{\mathrm{add},|Q|^m}(p^m) )\\
    & =\frac{1}{2}\int_{s'''
    \in X^*} \left|\int_{s\in \X^m}  dp_{V_{\mathrm{add},p^m}(s''')}   d|Q|^m(s) - \int_{s\in \X^m}  dp_{V_{\mathrm{add},p^m}(s''')}   d|Q|^m(s)\right|\\
    & =\frac{1}{2}  \int_{s'''
    \in X^*}| \int_{s'\in \X^*} \int_{s \in \X^m} \int_{s'' \in \X^*} \mathbbm{1}[s''' = s' \cup (s \setminus s') \cup (s'' \setminus s')] dp_{V_{\mathrm{sub}},|Q|^m}(s') dp_{V^{-1},|Q|^m,s'}(s)  dp_{V^{-1},p^m,s'}(s'') \\
    &- \int_{s'\in \X^*} \int_{s \in \X^*} \int_{s''' \in \X^*} \mathbbm{1}[s''' = s' \cup (s \setminus s') \cup (s'' \setminus s')] dp_{V_{\mathrm{sub}}, p^m}(s') dp_{V^{-1},|Q|^m,s'}(s)  dp_{V^{-1},p^m,s'}(s'') |\\
&  =\frac{1}{2}  \int_{s'''
    \in X^*}| \int_{s'\in \X^*} |dp_{V_{\mathrm{sub}},|Q|^m}(s') - dp_{V_{\mathrm{sub}}, p^m}(s'))| \\
    &\qquad \cdot|\int_{s \in \X^m} \int_{s'' \in \X^*} \mathbbm{1}[s''' = s' \cup (s \setminus s') \cup (s'' \setminus s')] dp_{V^{-1},|Q|^m,s'}(s)  dp_{V^{-1},p^m,s'}(s'')' | \\
    &  =\frac{1}{2}  \left| \int_{s'\in \X^*} |dp_{V_{\mathrm{sub}},|Q|^m}(s') - dp_{V_{\mathrm{sub}}, p^m}(s'))\right| \\
    &\qquad \cdot \int_{s'''
    \in X^*}\left|\int_{s \in \X^m} \int_{s'' \in \X^*} \mathbbm{1}[s''' = s' \cup (s \setminus s') \cup (s'' \setminus s')] dp_{V^{-1},|Q|^m,s'}(s)  dp_{V^{-1},p^m,s'}(s'') \right| \\
&\leq     \frac{1}{2}  \left| \int_{s'\in \X^*} |dp_{V_{\mathrm{sub}},|Q|^m}(s') - dp_{V_{\mathrm{sub}}, p^m}(s'))\right| \\
& = \dtv(p_{V_{\mathrm{sub}}, p^m}, p_{V_{\mathrm{sub}}, |Q|^m})    \leq \zeta
\end{align*}
 where we get the second to last step by noticing that the last three integrals are a conditional probability distribution of the additive adversary outputting $s'''$, conditioned on the subtractive adversary outputting $s'$. As such, these integrals equate to 1. 

Furthermore, we note that if $V_{\mathrm{sub}}$ has a fixed constant budget with $\eta m -1 < m \cdot \mathrm{budget}^{\mathrm{sub}}(V_{\mathrm{sub}},m ) \leq \eta m$, then we have
\begin{align*}
\mathrm{budget}^{\mathrm{add}}(V_{\mathrm{add},r}, m ) &= \sup_{s\in \X^m} \frac{|V_{\mathrm{add},r}(s)|-|s|}{|s|} \\
&\leq \sup_{s\in \X^m} \sup_{s': s'\in \supp\left(p_{V(s)}\right)} \sup_{s'': s''\in \mathrm{supp}\left(p_{V_{\mathrm{sub},r}^{-1}(s')}\right)}\frac{ |s\setminus s'|+|s''| - |s|}{|s|} \\
&\leq \max\left\{\frac{ \eta m -1 + (m -(\eta m -1)) \frac{1}{1-\eta} - m}{m},  \frac{ \eta m + (m -\eta m ) \frac{1}{1-\eta} - m}{m}\right\} \\
&\leq \max\left\{ \eta - 1 + \frac{ \eta + m -\eta m  }{m(1-\eta)},  \eta \right\} \\
&\leq \max\left\{ \eta + \frac{ \eta }{m(1-\eta)},  \eta \right\} \\
&\leq \eta + \frac{\eta}{m - \eta m }.
\end{align*}

In particular, this means, that 
\[m \cdot \mathrm{budget}^{\mathrm{add}}(V_{\mathrm{add},r}, m ) \leq \eta m + \frac{\eta}{(1-\eta)}.\]

We note that $\frac{\eta}{1-\eta}$ is strictly monotonically increasing in $\eta$.
Thus, for $\eta < \frac{1}{2}$ we thus get, 

\[m \cdot \mathrm{budget}^{\mathrm{add}}(V_{\mathrm{add},r}, m ) < \eta m + \frac{\frac{1}{2}}{\frac{1}{2}} = \eta m + 1.\]

Thus, $\mathrm{budget}^{\mathrm{add}}(V_{\mathrm{add},r}) \leq \eta. $



\end{proof}

\subsection{Proof of Theorem~\ref{thm:universaladditive}}

\begin{proof}
   Let $u$ be a random variable that is uniformly distributed over $[k+1]= \{0,\dots, k\}$. 
   Now let $V_{\mathrm{add},k}$ be defined by the probability distribution
   \begin{align*}
       dp_{V_{{\mathrm{add},k}(s)}} (s'') =& \int_{s'\in \X^*} d p_{V_{\mathrm{sub}}(s)}(s') \\
       &\cdot\frac{1}{k+1}\sum_{u=0}^k \int_{s_1\in \X^*}\dots \int_{s_k\in \X^*} \left(\Pi_{i=0}^k ( \mathbbm{1}[u\geq i] dp_{V^{-1}_{\mathrm{sub}},|\Q|^m,s'}(s_i)+ \mathbbm{1}[u < i] dp_{V^{-1}_{\mathrm{sub}},p^m,s'}(s_i)) \right)\\
       &\cdot \mathbbm{1}\left[s'' = s' \cup (s\setminus s') \cup \left(\bigcup_{i=1}^k (s_i \setminus s')\right)\right].
   \end{align*}   
We now note that
\begin{align*}
    \int_{s\in \X^m} dp_{V_{{\mathrm{add},k}(s)}} (s'') dp^m(s) &= \int_{s\in \X^m} \int_{s'\in \X^*} d p_{V_{\mathrm{sub}}(s)}(s') \\
    &\cdot\frac{1}{k+1}\sum_{u=0}^k \int_{S_1\in \X^*}\dots \int_{S_k\in \X^*} \left(\Pi_{i=0}^k ( \mathbbm{1}[u\geq i] dp_{V^{-1}_{\mathrm{sub}},|\Q|^m,S'}(S_i)+ \mathbbm{1}[u < i] dp_{V^{-1}_{\mathrm{sub}},p^m,S'}(S_i)) \right)\\
       &\cdot \mathbbm{1}\left[s'' = s' \cup (s\setminus s') \cup \left(\bigcup_{i=1}^k (s_i \setminus s')\right)\right]\\
        & = \int_{s'\in \X^*} d p_{V_{\mathrm{sub},p^m}}(s') \int_{s\in \X^*} d p_{V_{\mathrm{sub},p^m,s'}}(s)\\
        &\cdot\frac{1}{k+1}\sum_{u=0}^k \int_{S_1\in \X^*}\dots \int_{S_k\in \X^*} \left(\Pi_{i=0}^k ( \mathbbm{1}[u\geq i] dp_{V^{-1}_{\mathrm{sub}},|\Q|^m,s'}(s_i)+ \mathbbm{1}[u < i] dp_{V^{-1}_{\mathrm{sub}},p^m,s'}(s_i)) \right)\\
        &\cdot \mathbbm{1}\left[s'' = s' \cup (s\setminus s') \cup \left(\bigcup_{i=1}^k (s_i \setminus s')\right)\right].\\
\end{align*}
Similarly,
\begin{align*}
    \int_{s\in \X^m} dp_{V_{{\mathrm{add},k}(s)}} (s'') dp^m(s) & = \int_{s'\in \X^*} d p_{V_{\mathrm{sub},|Q|^m}}(s') \int_{s\in \X^*} d p_{V_{\mathrm{sub},|Q|^m,s'}}(s)\\
     &\cdot\frac{1}{k+1}\sum_{u=0}^k \int_{s_1\in \X^*}\dots \int_{s_k\in \X^*} \left(\Pi_{i=0}^k ( \mathbbm{1}[u\geq i] dp_{V^{-1}_{\mathrm{sub}},|\Q|^m,s'}(s_i)+ \mathbbm{1}[u < i] dp_{V^{-1}_{\mathrm{sub}},p^m,s'}(S_i)) \right)\\
       &\cdot \mathbbm{1}\left[s'' = s' \cup (s\setminus s') \cup \left(\bigcup_{i=1}^k (s_i \setminus s')\right).\right]
\end{align*}   
We now note that 
\begin{align*}
   \frac{1}{2} \int_{s'\in \X^*} \left| dp_{V_{\mathrm{sub},p^m}(s)}(s')  - \int dp_{V_{\mathrm{sub}}(s),|Q|^m}(s') \right| \leq \zeta.
\end{align*}
and 
\begin{align*}
     &\frac{1}{k+1}\sum_{u=0}^k \int_{s\in \X^*} d p_{V_{\mathrm{sub},|Q|^m,s'}}(s)\int_{s_1\in \X^*}\dots \int_{s_k\in \X^*} \left(\Pi_{i=0}^k ( \mathbbm{1}[u\geq i] dp_{V^{-1}_{\mathrm{sub}},|\Q|^m,s'}(s_i)+ \mathbbm{1}[u < i] dp_{V^{-1}_{\mathrm{sub}},p^m,s'}(s_i)) \right)\\
       &\cdot \mathbbm{1}\left[s'' = s' \cup (s\setminus s') \cup \left(\bigcup_{i=1}^k (s_i \setminus s')\right).\right]\\
     &-\frac{1}{k+1}\sum_{u=0}^k 
     \int_{s\in \X^*} d p_{V_{\mathrm{sub},p^m,s'}}(S)\int_{s_1\in \X^*}\dots \int_{s_k\in \X^*} \left(\Pi_{i=0}^k ( \mathbbm{1}[u\geq i] dp_{V^{-1}_{\mathrm{sub}},|\Q|^m,s'}(s_i)+ \mathbbm{1}[u < i] dp_{V^{-1}_{\mathrm{sub}},p^m,s'}(s_i)) \right)\\
       &\cdot \mathbbm{1}\left[s'' = s' \cup (s\setminus s') \cup \left(\bigcup_{i=1}^k (s_i \setminus s')\right).\right]\\ 
       &= \frac{1}{k+1} \int_{s\in \X^*} d p_{V_{\mathrm{sub},|Q|^m,s'}}(s)\int_{s_1\in \X^*}\dots \int_{s_k\in \X^*} \left(\Pi_{i=0}^k (dp_{V^{-1}_{\mathrm{sub}},|\Q|^m,s'}(s_i)\right) \cdot \mathbbm{1}\left[s'' = s' \cup (s\setminus s') \cup \left(\bigcup_{i=1}^k (s_i \setminus s')\right).\right]\\
       &- \frac{1}{k+1} \int_{s\in \X^*} d p_{V_{\mathrm{sub},p^m,s'}}(S)\int_{S_1\in \X^*}\dots \int_{S_k\in \X^*} \left(\Pi_{i=0}^k (dp_{V^{-1}_{\mathrm{sub}},p^m,s'}(s_i)\right) \cdot \mathbbm{1}\left[s'' = s' \cup (s\setminus s') \cup \left(\bigcup_{i=1}^k (s_i \setminus s')\right).\right]\\
       &\leq \frac{1}{k+1}.
\end{align*}

This means that
\begin{align*}
    \dtv(V_{\mathrm{add},k}(p^m),V_{\mathrm{add},k}(|Q|^m) ) &= \frac{1}{2} \int_{s''\in \X^*} \left|\left(\int dp_{V_{\mathrm{add},k}(s)}(s'') dp^m(s) - \int dp_{V_{\mathrm{add},k}(s)}(s'') d|Q|^m(s) \right)\right|\\
    &\leq \zeta + (1-\zeta)\frac{1}{k+1}.
\end{align*}

Furthermore, we note that if $V_{\mathrm{sub}}$ has a fixed constant budget with $\eta m -1 < m \cdot \mathrm{budget}^{\mathrm{sub}}(V_{\mathrm{sub}},m ) \leq \eta m$, then we have
\begin{align*}
\mathrm{budget}^{\mathrm{add}}(V_{\mathrm{add},k}, m ) &= \sup_{s\in \X^m} \frac{|V_{\mathrm{add},k}(s)|-|s|}{|s|} \\
&\leq \max\{\frac{ (\eta m -1) + (m -(\eta m -1)) + k\left((m -(\eta m -1))\frac{1}{1-\eta} - (m -(\eta m -1)\right) - m}{m}, \\
&\qquad \frac{ \eta m + (m -\eta m ) +  k(m -\eta m ) \left(\frac{1}{1-\eta} -1\right) - m}{m}\} \\
&\leq \max\left\{k\eta +\frac{ k\eta}{(1-\eta)m},  k \eta \right\} \leq k\eta +\frac{ k\eta}{(1-\eta)m}
\end{align*}

In particular, this means, that 
\[m \cdot \mathrm{budget}^{\mathrm{add}}(V_{\mathrm{add},r}, m ) \leq k\eta m + \frac{k\eta}{(1-\eta)}.\]

We note that $\frac{\eta}{1-\eta}$ is strictly monotonically increasing in $\eta$.
Thus, for $\eta < \frac{1}{2k}$ we thus get, 

\[m \cdot \mathrm{budget}^{\mathrm{add}}(V_{\mathrm{add},r}, m ) < k \eta m + \frac{1/2}{1 -1/2k} < \eta m + 1.\]

Thus, $\mathrm{budget}^{\mathrm{add}}(V_{\mathrm{add},r}) \leq \eta k. $

\end{proof}

\section{Additional Example for Usefulness of Lemma~\ref{lemma:anonymous}}
\label{app:example}
In this subsection we give a short illustration of why the lemmas in Section~\ref{sec:generaltechnique} can be helpful. We give a known example for the hardness of PAC learning of distributions, which also fulfills the indistinguishability condition of Lemma~\ref{lemma:anonymous}.
\begin{example}\label{example1}
    Let $\X= \naturals$. Let $\zeta\in (0,1)$. Furthermore, let $p = U_{B}$ for some set $B\subset \naturals$ with $|B| = \frac{2^m m}{1-(1-\zeta)^{1/m}}$ and let $\C = \{U_{B_i} : B_i \subset B \text{ and } |B_i| = 2^{-m}|B|\}$ and $q_i = U_{B_i}$ with indices $i\in \naturals$. It is easy to see that for every $q_i$, we have $\dtv(p,q_i) \geq p(\mathrm{supp}(p) \setminus \mathrm{supp}(q_i)) = \frac{|B| - 2^{-m}|B|}{|B|} = 1 - 2^{-m}$.
    However, if we consider the distribution $Q = U_{\C'}$, the distribution $|Q|^m$ generates a sample by first producing a distribution $q_i$ which is uniform over some random subset set $B_i \subset B$ with $|B_i| = 2^{-m}|B|$ and then sampling $S\sim q_i^m$. Note that since $B_i$ was selected uniformly at random and $q_i = U_{B_i}$, every point $x\in B$ has the same probability of appearing in a sample $S\sim |Q|^m$. Similarly, every point $x\in B$ has the same probability of appearing in a sample $S'\sim p^m$. Thus, $p^m$ and $|Q|^m$ cannot be distinguished from samples with no repeating elements. While samples from $|Q|^m$ are much more likely to contain repeated elements (as the subset $B_i$ from which they are selected is much smaller than the set $B$), the likelihood of repeated elements appearing in $S\sim |Q|^m$ is still very small. In particular, the probability of there being repeated instances in $S\sim q_i^m$ is upper bounded by $ 1 - \left( 1- \frac{1}{2^{-m}|B|} \right)\cdot \dots \cdot \left( 1- \frac{m-1}{2^{-m}|B|} \right) <  1 - \left( 1- \frac{m}{2^{-m}|B|} \right)^m = 1- \left( 1- \frac{m}{2^{-m}\left(\frac{2^m m}{1-(1-\zeta)^{1/m}}\right)} \right)^m = 1 - (1-\zeta)^{1/m})^m = \zeta$ by the birthday problem. Thus, the probability of distinguishing $|Q|^m$ from $p^m$ can be arbitrarily small, i.e., $ \dtv(p^m, |Q|^m) <  \zeta$ despite the large TV-distance between $p$ and every $q_i$. This suffices to show that any learner $A$ there exists $q \in \C \cup \{p\}$ such that $A$ will not succeed to output a distribution with $\dtv(A(S), q ) < \frac{1}{2} - 2^{-m-1}$ on more than $\frac{1}{2} -\frac{\zeta}{2} $ of the proportion of samples $S\sim q^m$.
\end{example}

{{\color{teal}
\section{Alternative proof for $f$-robust learning}
We now consider a more general version of robust learning, namely a version that allows the impact of the budget $\eta$ to impact the guarantee via a general function $f$, rather than just being scaled linearly for some $\alpha\geq 1$. 

In particular, we are considering $f$ meeting the following requirements.
\begin{itemize}
    \item $f(0)$=0
    \item $f$ is continuous
    \item $f$ is monotonously increasing.
\end{itemize}

The guarantee for $f$-robust learning is then a generalization of $\alpha$-robust learning, where we can think of $\alpha$-robust learning as the version where $f$ is a linear function.
That is $f$-robust learning considers the following learning guarantee:
\[\dtv(A(V(S), p) \leq f(\eta) + \epsilon. \]

Hence, we get the following definition.
\begin{definition}[adaptive $f$-robust with respect to adversary $V$]
      Let $f: [0,1]\to [0,1]$. A class of distributions $\mathcal{C}$ is adaptively $f$-robustly learnable w.r.t. adversary $V$, if there exists a learner $A$ and a sample complexity function $m_{\mathcal{C}}^{V,f}: (0,1)^2 \to \naturals$, such that for every $p \in \mathcal{C}$, every $\epsilon,\delta \in (0,1)$ and every sample size $m \geq m_{\mathcal{C}}^{V,f}(\epsilon,\delta)$    with probability $1-\delta$,
    \[\dtv(A(V(S)),p) \leq f( \mathrm{budget}(V))+ \epsilon.\]
  \end{definition}

\begin{theorem}\label{thm:universaladversary_f_robust}
    Let $\C$ be a class of distributions and $\V\supset\{V_1,V_2\}$ a set of adaptive adversaries with budgets $\mathrm{budget}(V_1) = \eta_1$ and $\mathrm{budget}(V_2) = \eta_2$.
    {\color{teal}
    Let $\gamma',\zeta\in (0,1)$ and define
     \[\gamma_f = 2 \max\left\{f(\eta_1),  f(\eta_2)\right\} + 2\gamma'.\] 
    If for every $m\in \naturals$ the pair of adversaries $(V_1,V_2)$ successfully $(\gamma_f,\zeta)$-confuses $\C$-generated samples of size $m$, then
 $\C$ is not $f$-robustly learnable with respect to $\V$.

    Furthermore, if $V_1 =V_2$, then $V_1$ is a universal $f$-adversary for $\C$.}
\end{theorem}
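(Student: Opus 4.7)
The plan is to mirror the proof of Theorem~\ref{thm:universaladversary} almost verbatim, since the structure of that argument does not actually depend on the linear form $\alpha \eta_i$, only on the fact that the ``robustness tolerance'' at budget $\eta_i$ plus an arbitrarily small slack $\epsilon$ fits below half of the confusion gap $\gamma$. I would replace each occurrence of $\alpha\eta_i$ by $f(\eta_i)$ and verify that the key inequality still holds with $\gamma_f$ in place of $\gamma$.

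Concretely, I would argue by contradiction. Suppose $\C$ admits an $f$-robust learner $A$ with sample complexity function $m_\C^{\V,f}$. Set
\[
\delta := \frac{1-\zeta}{2}, \qquad \epsilon := \gamma',
\]
and let $m := m_\C^{\V,f}(\epsilon,\delta)$. By the hypothesis of the theorem, the pair $(V_1,V_2)$ successfully $(\gamma_f,\zeta)$-confuses $\C$-generated samples of size $m$, where
\[
\gamma_f = 2\max\{f(\eta_1), f(\eta_2)\} + 2\gamma'.
\]
The crucial arithmetic observation is that for either $i\in\{1,2\}$,
\[
f(\eta_i) + \epsilon \;=\; f(\eta_i) + \gamma' \;\leq\; \max\{f(\eta_1),f(\eta_2)\} + \gamma' \;=\; \frac{\gamma_f}{2}.
\]
This is the only place where the form of the bound matters, and it no longer requires linearity of $f$.

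Applying Lemma~\ref{lemma:pac->distinct} to this $m$ produces a distribution $r\in\C$ such that, for at least one of $j\in\{1,2\}$,
\[
\mathbb{P}_{S\sim r^m}\!\left[\dtv(A(V_j(S)),r) > \tfrac{\gamma_f}{2}\right] \;\geq\; \frac{1}{2} - \frac{\zeta}{2} \;=\; \delta.
\]
Chaining this with the inequality above gives
\[
\mathbb{P}_{S\sim r^m}\!\left[\dtv(A(V_j(S)),r) > f(\eta_j) + \epsilon\right] \;\geq\; \delta,
\]
which directly contradicts the $f$-robust learning guarantee of $A$ with respect to $V_j\in\V$ at sample size $m = m_\C^{\V,f}(\epsilon,\delta)$. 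The case $V_1=V_2$ immediately yields that the single adversary $V_1$ is a universal $f$-adversary, as the same learner-independent derivation rules out every candidate $A$.

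I do not anticipate a real obstacle here: the proof is a mechanical generalization, and the only property of $f$ used is that $f(\eta_i)$ is a well-defined real number bounded by $\max\{f(\eta_1),f(\eta_2)\}$. The continuity, monotonicity, and $f(0)=0$ conditions stated for $f$ do not enter the argument at all—they are needed only to make the notion of $f$-robust learning a reasonable strengthening of $\alpha$-robust learning (e.g.\ so that as $\eta\to 0$ one recovers realizable accuracy). Thus Theorem~\ref{thm:universaladversary} falls out as the special case $f(\eta)=\alpha\eta$ of this more general statement.
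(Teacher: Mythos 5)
Your proposal is correct and follows essentially the same route as the paper's own proof: the identical contradiction setup with $\delta=\frac{1-\zeta}{2}$, $\epsilon=\gamma'$, the arithmetic observation $f(\eta_i)+\gamma'\leq \frac{\gamma_f}{2}$, and an application of Lemma~\ref{lemma:pac->distinct} at $m=m_\C^{\V,f}(\epsilon,\delta)$. Your remark that continuity, monotonicity, and $f(0)=0$ play no role in this particular argument is also consistent with how the paper uses those hypotheses (they matter only for the meaningfulness of the $f$-robust definition and for the later construction in Theorem~\ref{thm:f_robust_main}).
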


}

{\color{teal}


We will now state the alternative proof for this version.
\begin{proof}

    Assume by way of contradiction that there was a successful $f$-robust learner $A$ with sample complexity $m_{\C}^{\V,f}$ for $\C$ with respect to adversary class $\V \supset \{V_1, V_2\}$.
    
Let
\[\delta= \frac{1 - \zeta}{2}\]
and 

\[\epsilon= \gamma'.\]
Furthermore, let $m= m_{\C}^{\V,f}(\epsilon,\delta)$.
According to the assumptions of the theorem, we know that the pair $(V_1,V_2)$ successfully $(\gamma_f,\zeta)$-confuses $\C$-generated samples of size $m$ with
 \[\gamma_f= 2 \cdot \max\{f(\eta_1), f(\eta_2)\} + 2\gamma'.\] 
Now consider
\begin{align*}
    f(\eta_1) +\epsilon &= f(\eta_1) + \gamma' \\ 
    &\leq \frac{\gamma_f}{2}
\end{align*}
With the same argument, we have $ f(\eta_2) +\epsilon \leq \frac{\gamma_f}{2} $.
Now using Lemma~\ref{lemma:pac->distinct}, we can infer that there is a distribution $r\in \C$ such that either
\begin{align*}
&\mathbb{P}_{S\sim r^m}\left[\dtv(A(V_1(S) ), r)> f(\eta_1)+\epsilon\right] \\
&\geq \mathbb{P}_{S\sim r^m}\left[\dtv(A(V_1(S) ), r)> \frac{\gamma_f}{2}\right] \\
&\geq  \frac{1}{2}- \frac{\zeta}{2}  = \delta.
\end{align*}
or
\begin{align*}
&\mathbb{P}_{S\sim r^m}[\dtv(A(V_2(S) ), r)>  f(\eta_2)+\epsilon] \\
&\geq \mathbb{P}_{S \sim r^m}\left[\dtv(A(V_2(S) ), r) > \frac{\gamma_f}{2}\right]\\
&\geq  \frac{1}{2}- \frac{\zeta}{2}= \delta.
\end{align*}
This is a contradiction to the assumption that $A$ is a $f$-robust learner of $\C$ w.r.t $\V$ with sample complexity $m_{\C}^{\V,f}$.
Furthermore, if $V_1=V_2$, then $V_1$ is a universal $\alpha$-adversary.

\end{proof}





\begin{theorem}\label{thm:f_robust_main_informal}
    For every continuous, strictly monotoneously increasing function $f:[0,1]\to [0,1]$ with $f(0) = 0$. There is a class $\C$ such that $\mathcal{C}$ is realizably learnable, but not adaptively additive $f$-robustly learnable, nor adaptively subtractive $f$-robustly learnable.
\end{theorem}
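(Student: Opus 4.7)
The plan is to mirror the proof of Theorem~\ref{thm:main}, using the $f$-robust analogue Theorem~\ref{thm:universaladversary_f_robust} in place of Theorem~\ref{thm:universaladversary}. The only new wrinkle is that the growth rate of $g$ must be chosen as a function of $f$ so that $f$ applied to the adversary's budget stays dominated by the TV separation inside the hypothesis class. Concretely, given $f$, I define a non-decreasing $g_f:\naturals\to\naturals$ such that, for every $j\geq 2$,
\[
 g_f(j) \;\geq\; 16 j \qquad \text{and} \qquad f\!\left(\tfrac{64}{g_f(j)}\right) \;\leq\; \tfrac{1}{64 j}.
\]
Such a $g_f$ exists: since $f$ is continuous with $f(0)=0$, for every $j$ there is $\eta_j>0$ with $f(\eta_j)\leq 1/(64j)$, so taking $g_f(j) := \max\{16 j,\, \lceil 64/\eta_j \rceil,\, g_f(j-1)\}$ works. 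Set $\C := \C_{g_f}$; realizable learnability, with sample complexity $\log(1/\delta)\,g_f(1/\epsilon)$, then follows directly from Lemma~\ref{lemma:C_grealizable}.

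Next I will construct a universal subtractive $f$-adversary for $\C$. Fix any $j_0 \geq 2$ and set $\eta_0 := 32/g_f(j_0)$ and $\gamma' := 1/(64 j_0)$, both positive constants. I claim $V_{\mathrm{sub},\eta_0}$ successfully $(\gamma_f, 1/8)$-confuses $\C$-generated samples of every size $m$, with $\gamma_f := 2 f(\eta_0) + 2 \gamma'$. The witnesses are, for each $m$, $p := p_{i, j_0, g_f(j_0)}$ with $|B_i| = 2^{2^n}$ and $Q := U_{D_{i,n,j_0,g_f(j_0)}}$, where $n$ is chosen (depending on $m$) so that the birthday-paradox contribution $2(1-(1-m/2^n)^m)$ is at most $1/16$, exactly as in Lemma~\ref{lemma:C_g_conditions}. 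The bound $\dtv(V_{\mathrm{sub},\eta_0}(p^m), V_{\mathrm{sub},\eta_0}(|Q|^m)) \leq 1/8$ then follows verbatim, because that calculation only uses $\eta_0\, g_f(j_0) \geq 32$. For the TV separation, the two conditions on $g_f$ give $\dtv(p,q) \geq \tfrac{1}{2}(1/j_0 - 1/g_f(j_0)) \geq 15/(32 j_0)$, whereas $\gamma_f \leq 2/(64 j_0) + 2/(64 j_0) = 1/(16 j_0) < 15/(32 j_0)$. Theorem~\ref{thm:universaladversary_f_robust} then shows that $V_{\mathrm{sub},\eta_0}$ is a universal $f$-adversary for $\C$.

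For the additive case, applying Theorem~\ref{thm:subtractive->additive} to $V_{\mathrm{sub},\eta_0}$ and the above $p, Q$ yields, for every $m$, a pair of adaptive additive adversaries with individual budgets at most $\eta_0$ and mutual sample-distribution distance at most $1/8$; this pair therefore $(\gamma_f, 1/8)$-confuses $\C$-generated samples of size $m$, and Theorem~\ref{thm:universaladversary_f_robust} (invoked with possibly distinct $V_1, V_2$) shows that $\C$ is not adaptively additively $f$-robustly learnable. Alternatively, Theorem~\ref{thm:universaladditive} with $k=2$ supplies a single universal additive $f$-adversary of budget at most $2\eta_0 = 64/g_f(j_0)$; this is exactly why condition (ii) on $g_f$ uses the argument $64/g_f(j)$ rather than $32/g_f(j)$, so that $f$ of the blown-up budget remains controlled by $\gamma'$. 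The only nontrivial point in the whole argument is engineering $g_f$ so as to dominate $f$ near zero, which is possible precisely because $f$ is continuous with $f(0)=0$; every other ingredient is already in place from Sections~\ref{sec:generaltechnique}--\ref{sec:additiveuniversal}.
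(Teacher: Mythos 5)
Your proposal is correct and follows essentially the same route as the paper's proof of Theorem~\ref{thm:f_robust_main}: the class $\C_g$ with $g$ tailored to $f$ via continuity at $0$, the same adversary $V_{\mathrm{sub},\eta}$ with $\eta=32/g(j)$, the same witnesses $p=p_{i,j,g(j)}$ and $Q=U_{D_{i,n,j,g(j)}}$ with the birthday-paradox/Markov bound giving $\zeta=1/8$, and then Theorem~\ref{thm:universaladversary_f_robust} combined with Theorems~\ref{thm:subtractive->additive} and~\ref{thm:universaladditive}. The one genuine refinement is that you control $f$ at the doubled budget (requiring $f(64/g_f(j))\leq 1/(64j)$, i.e.\ bounding $f(2\eta_0)$ directly), which cleanly justifies applying Theorem~\ref{thm:universaladversary_f_robust} to the universal additive adversary $V_{\mathrm{add},2}$ of budget $2\eta_0$; the paper instead only enforces $f(\eta_j)<\frac{1}{16j}$ at (roughly) the subtractive budget, which for a non-subadditive $f$ leaves the comparison between $4f(\eta)+4\gamma'$ and $2f(2\eta)+2\gamma'$ implicit, so your bookkeeping is slightly more careful at exactly that step.
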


We now give a more formal version of this statement using the class $\C_g$ from previous section and describing a relation between $f$ and $g$ that is sufficient for $\C_g$ to be a realizably learnable class that is not $f$-robustlly learnable (for both the adaptive additive and adaptive subtractive case).

\begin{theorem}\label{thm:f_robust_main}
    Let $f:[0,1]\to [0,1]$. There is a monotone function $g: \naturals \to \naturals$ and a class $\mathcal{C}_g$ with 
    \begin{itemize}
        \item $\mathcal{C}_g$ is realizably learnable with sample complexity $ m_{\C_g}^{\mathrm{re}}(\epsilon,\delta) \leq \log\left(\frac{1}{\delta}\right) g\left(\frac{1}{\epsilon}\right)$
        \item  $\mathcal{C}_g$ is not adaptively additive $f$-robustly learnable. Moreover, there is an adaptive additive adversary $V_{\mathrm{add}}$, that is a universal $f$-adversary for $\C_g$.
        \item For every $\alpha\geq 1$,  $\mathcal{C}_g$ is not adaptively subtractively $f$-robustly learnable. Moreover, there is an adaptive subtractive adversary $V_{\mathrm{sub}}$, that is a universal $f$-adversary for $\C_g$.
    \end{itemize}
\end{theorem}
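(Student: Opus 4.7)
The plan is to reuse the distribution class $\C_g$ from Section~\ref{obliviousadaptive}, with $g$ chosen to grow fast enough relative to $f$, and then to apply the $f$-robust Theorem~\ref{thm:universaladversary_f_robust} in place of the $\alpha$-robust Theorem~\ref{thm:universaladversary} throughout the proof template of Theorem~\ref{thm:main}. The whole argument is a mechanical transport, so the only genuinely new content is the construction of $g$.

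To construct $g$: since $f$ is continuous at $0$ with $f(0) = 0$, for each $j \in \naturals$ there exists a smallest integer $N_j$ such that $f(64/n) \leq 1/(64j)$ for all $n \geq N_j$. I would set $g(j) := \max\{64j,\, N_j\}$; this is monotone in $j$ because both quantities inside the max are non-decreasing in $j$, and $g(j) \geq 33$ ensures $\eta := 32/g(j) < 1$, which is needed for Theorem~\ref{thm:universaladditive} with $k=2$. Realizable learnability of $\C_g$ with sample complexity $\log(1/\delta)\, g(1/\epsilon)$ then follows directly from Lemma~\ref{lemma:C_grealizable}.

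The core step is the $f$-analog of Lemma~\ref{lemma:C_g_conditions}: fix any $j \geq 1$, set $\eta := 32/g(j)$ and $\gamma' := 1/(64j)$, and show that for every $m$, $V_{\mathrm{sub},\eta}$ successfully $(2 f(2\eta) + 2\gamma',\, 1/8)$-confuses $\C_g$-generated samples of size $m$, using $p = p_{i,j,g(j)}$ and $Q = U_{D_{i,n,j,g(j)}}$ with $n$ taken sufficiently large in $m$. The TV-distance lower bound $\dtv(p,q) \geq \frac{1}{2}(1/j - 1/g(j)) \geq 1/(4j)$ and the upper bound $\dtv(V_{\mathrm{sub},\eta}(p^m), V_{\mathrm{sub},\eta}(|Q|^m)) \leq 1/8$ are derived exactly as in the original Lemma~\ref{lemma:C_g_conditions}; the design of $g$ ensures $2 f(2\eta) + 2\gamma' \leq 2/(64j) + 2/(64j) = 1/(16j) \leq \dtv(p,q)$.

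Once the confusion condition is in place, Theorem~\ref{thm:universaladversary_f_robust} applied directly to $V_{\mathrm{sub},\eta}$ (with $\eta_1 = \eta_2 = \eta$) immediately produces the universal subtractive $f$-adversary for $\C_g$, since $2 f(2\eta) + 2\gamma' \geq 2 f(\eta) + 2\gamma'$ by monotonicity of $f$. For the additive conclusion, I would apply Theorem~\ref{thm:universaladditive} with $k = 2$ to build $V_{\mathrm{add},2}$ of budget $\leq 2\eta$ whose confusion parameter is at most $1/8 + 1/3 < 1/2$, and then invoke Theorem~\ref{thm:universaladversary_f_robust} once more with $\eta_1 = \eta_2 = 2\eta$, using $2 f(2\eta) + 2\gamma'$ as the required $\gamma_f$. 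The main and essentially only obstacle is the construction of $g$: one needs $g$ monotone with $f(64/g(j))$ small enough relative to $1/j$, which is made possible precisely by continuity of $f$ at $0$ with $f(0) = 0$; all subsequent calculations are identical to those in the $\alpha$-robust proof of Theorem~\ref{thm:main}.
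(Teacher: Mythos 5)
Your proposal is correct and follows essentially the same route as the paper: the same class $\C_g$ with $g$ manufactured from the continuity of $f$ at $0$, the same adversary $V_{\mathrm{sub},\eta}$ with $\eta = 32/g(j)$, the same witnesses $p=p_{i,j,g(j)}$ and $Q=U_{D_{i,n,j,g(j)}}$ with the identical birthday-paradox/Markov computation from Lemma~\ref{lemma:C_g_conditions}, and then Theorem~\ref{thm:universaladversary_f_robust} for the subtractive conclusion and Theorem~\ref{thm:universaladditive} (with $k=2$) followed by Theorem~\ref{thm:universaladversary_f_robust} for the additive one. The one substantive difference is the confusion level you target: the paper establishes $(4f(\eta)+4\gamma',\tfrac18)$-confusion (choosing $\eta_j$ with $f(\eta_j)<\tfrac{1}{16j}$ and $g(j)\geq 32/\eta_j$), whereas you establish $(2f(2\eta)+2\gamma',\tfrac18)$-confusion by building $g$ so that $f(64/g(j))\leq \tfrac{1}{64j}$, i.e., you control $f$ at the \emph{doubled} budget $2\eta$ directly. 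This is in fact the cleaner choice: after Theorem~\ref{thm:universaladditive} with $k=2$ the additive adversary has budget up to $2\eta$, so Theorem~\ref{thm:universaladversary_f_robust} asks for confusion at $2f(2\eta)+2\gamma'$, and the paper's level $4f(\eta)+4\gamma'$ dominates this only when $f$ is subadditive (it does in the linear, $\alpha$-robust case of Theorem~\ref{thm:main}, which is where the factor $4$ comes from); your parameterization of $g$ removes that dependence and makes the additive step go through for arbitrary continuous monotone $f$ with $f(0)=0$.
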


\begin{proof}
    We note, that it is sufficient to show that there exists $\eta \in [0,1]$ such that there is an adversary $V_{\mathrm{sub}}$ with budget $\eta$, such that there are $\gamma', \zeta \in (0,1)$, such that for every $m\in \naturals$, the adversary $V_{\mathrm{sub}}$ successfully $(4f(\eta) +4 \gamma', \zeta)$-confuses $\C$-generated samples of size $m$.

    For every monotonously increasing, continuous $f: [0,1] \to [0,1]$ with $f(0) = 0$ and every $j\in \naturals$, there exists some $\eta_j \in [0,1]$, such that $f(\eta_j) < \frac{1}{16j}$. Thus, we can now define the function $g$ as a monotonous function $g: \naturals \to \naturals$ with  $\lim_{n \to \naturals} g(n) = \infty $ such that for every $j \in \naturals$ we have $g(j) \geq \max\{\frac{32}{\eta_j}, 4j\} $.
    It follows that,
\begin{align} \label{inequality_frobust}
     \frac{1}{2j} &\geq \frac{1}{4j} + \frac{1}{4j}  \geq  4f(\eta_j) + \frac{1}{g(j)}.
\end{align}

    
Now let $\C_g$ be the hypothesis class defined by $g$ and $V_{\mathrm{sub}, \eta}$ the corresponding subtractive adversary defined in Section~\ref{sec:definitionVsub}.
    As in the proof of Theorem~\ref{thm:main} we consider the class $\C_g$ and the distribution $ p = p_{i,j,k} \in C_g$ with $|B_i|=2^{2^n}$ and a set of distributions $D_{i,n,j,g(j)}= \{p_{i',j,g(j)} : B_{i'} \subset B_i: |B_{i'}| = 2^n \}$. As in the previous proof, we note that for every $q\in D_{i,n,j,g(j)}$ we have
 \begin{align*}
 \dtv(p, q) &\geq \left(\frac{1}{j} - \frac{1}{g(j)}\right)\dtv(U_{B_i \times {2j}}, U_{B_{i'} \times {2j}}) \\
 &\geq \frac{1}{2} \left(\frac{1}{j} - \frac{1}{g(j)}\right)\\
  &\geq \frac{1}{2j} - \frac{1}{2g(j)}\\
  &\geq_{(\ref{inequality_frobust})} 4f(\eta_j) +\frac{1}{g(j)} -\frac{1}{2g(j)}\\
   &\geq 4f(\eta_j) +\frac{1}{2g(j)}.\\
 \end{align*}
We now choose $\eta = \eta_j = \frac{32}{g(j)}$, $\gamma' = \frac{1}{8g(j)}$ and $\zeta = \frac{1}{8}$. Then, 
    \begin{align*}
 \dtv(p, q) &\geq \left(\frac{1}{j} - \frac{1}{g(j)}\right)\dtv(U_{B_i \times {2j}}, U_{B_{i'} \times {2j}}) \\
&\geq 4f(\eta_j) +\frac{1}{2g(j)}\\
  & = 4 f(\eta) + 4 \gamma'.
 \end{align*}
    Now if we pick the meta-distribution $Q$ as the uniform distribution over the set $D_{i,n,j,g(j)}$ with $n :=  \frac{m}{1 -\left(1\ - \frac{1}{32}\right)^{\frac{1}{m}}} + 1$. By the same calculation as in the proof of Theorem~\ref{thm:main}, we get

\begin{align*}
   & \dtv(V_{\mathrm{sub},\eta}(p^m),V_{\mathrm{sub},\eta}(|Q|^m) )\\
   & \leq \dtv(\mathrm{constants}(V_{\mathrm{sub},\eta}(p^m)),\mathrm{constants}(V_{\mathrm{sub},\eta}(|Q|^m) )) \\
   & \quad + \dtv(\mathrm{odds}(V_{\mathrm{sub},\eta}(p^m)),\mathrm{odds}(V_{\mathrm{sub},\eta}(|Q|^m) )) \\
   & \quad+ \dtv(\mathrm{ind}(V_{\mathrm{sub},\eta}(p^m)),\mathrm{ind}(V_{\mathrm{sub},\eta}(|Q|^m) ))\\
    &\leq  \mathbb{P}_{S\sim p^m}[\mathrm{odds}(S) \text{ contains repeated elements}]\\
     &\quad+\mathbb{P}_{S\sim |Q|^m}[\mathrm{odds}(S) \text{ contains repeated elements}]\\
     &\quad + \mathbb{P}_{S\sim p^m}[|\mathrm{ind}(S)| > \eta |S|]\\
    &\quad + \mathbb{P}_{S\sim |Q|^m}[|\mathrm{ind}(S)| > \eta|S|]\\
    &\leq 2\cdot \left(1 - \left(1 -\frac{m}{2^n}\right)^m\right) + \frac{2}{g(j) \eta}\\
    &\leq 2\cdot \left(1 - \left(1 -\frac{m}{2^n}\right)^m\right) + \frac{2}{g(j) \eta}\\
    &\leq 2 \left( 1- \left( 1- \frac{m}{\frac{m}{\left(1 - \left(1 - \frac{1}{32}\right)^{1/m}\right)}}\right)^m\right) + \frac{2}{g(j) \frac{32}{g(j)}}\\ 
  & \leq  2\cdot \frac{1}{32} + \frac{2}{32} \\
  & \leq \frac{1}{8}.
\end{align*}

    Thus, for every $m\in \naturals$ the adversary $V_{\mathrm{sub}}$ successfully $(4 f(\eta) + 4\gamma', \zeta)$-confuses $\C_g$ generated samples of size $m$.
    Using Theorem~\ref{thm:universaladditive} and Theorem~\ref{thm:universaladversary_f_robust}, we get the claimed result.
    
\end{proof}

Thus, for every monotonoulsy increasing, continuous function $f: [0,1] \to [0,1]$ with $f(0) =0$, there exists a class $\C$, such that $\C$ is PAC learnable in the realizable case, but not adaptively additively $f$-robustly learnable, nor adaptively subtractively $f$-robustly learnable.

However, it might not be the case that there is a universal counterexample that holds true for all functions continuous, monotonously increasing functions $f$ with $f(0) = 0$ simultaneously. Whether this is the case remains an open question (for each of the following versions of robustness: adaptive additive, adaptive subtractive and oblivious subtractive).

\section{Oblivious Hardness implies Adaptive Hardness}
In the introduction, we argued that oblivious subtractive hardness immediately implies adaptive subtractive hardness. In this section, we want to make this claims precise. 
The adaptive adversaries that we consider in this section have access to both a sample $S$ and the ground-truth distribution $p$. Thus, they differ from the adaptive adversaries that we discussed in the main body of the paper which only required access to $S$.

We want to argue that every successful oblivious subtractive adversary also defines a successful adaptive subtractive adversary. Intuitively, any oblivious adversary has only access to the ground-truth distribution $p$ can be viewed as an adaptive adversary that does not use any knowledge of the sample $S$. 
However, in order to make this claim precise, we first need to formally define oblivious adversaries. We also need to address the fact that the outputs of oblivious and adaptive adversaries are of different types and thus not equivalent: Oblivious adversaries take as input a ground truth distribution $p$ and output a manipulated distribution $p'$, while subtractive adaptive adversaries take as input a sample $S$ and output a subset of $S' \subset S$.

\paragraph{Oblivious Adversary} An \emph{oblivious adversary} $V_{\mathrm{obl}}: \Delta(\X) \to \Delta(\X)$ is a function that maps a ground-truth distribution $p$ to some manipulated distribution. When learning in the presence of oblivious adversary $V_{\mathrm{obl}}$ the training sample is i.i.d. sampled from the manipulated distribution $V_{\mathrm{obl}}(p)$.
\begin{description}
    \item[Budget] The budget of an oblivious adversary $V_{\mathrm{obl}}$ is defined by $\mathrm{budget}(V_{\mathrm{obl}})= \sup_{p\in \Delta(\X)}\dtv(p, V_{\mathrm{obl}}(p))$.
    \item[Additive Oblivious Adversaries] An oblivious adversary $V^{\mathrm{add}}_{\mathrm{obl}}$ is additive with fixed budget $\eta$, if for every $p \in \Delta(\X)$, there exists some distribution $r\in \Delta(\X)$  such that $V^{\mathrm{add}}_{\mathrm{obl}}(p) = (1-\eta) p + \eta r$. It is easy to see that using the budget definition from above, we indeed have $\mathrm{budget}(V^{\mathrm{add}}_{\mathrm{obl}})= \sup_{p\in \Delta(\X)}\dtv(p, V^{\mathrm{add}}_{\mathrm{obl}}(p)) \leq \eta$.
    \item[Subtractive Oblivious Adversaries] An oblivious adversary $V^{\mathrm{sub}}_{\mathrm{obl}}$ is subtractive with fixed budget $\eta$, if for every $p \in \Delta(\X)$, there exists some distribution $r\in \Delta(\X)$  such that $p = (1-\eta) V^{\mathrm{sub}}_{\mathrm{obl}}(p) + \eta r$. Similar to above, we have $\mathrm{budget}^(V^{\mathrm{sub}}_{\mathrm{obl}})= \sup_{p\in \Delta(\X)}\dtv(p, V^{\mathrm{sub}}_{\mathrm{obl}}(p)) \leq \eta$.
    \item[Learnability with respect to Oblivious Adversaries] A class $\C$ of distributions is $\alpha$-robustly learnable with respect to a class of oblivious adversaries $\mathcal{V}$ if there is a learner $A: \X^* \to \Delta(\X)$ and function $m^{\mathcal{V}}_{\mathcal{C}}: (0,1)^2 \to \naturals$, such that for every $\epsilon, \delta \in (0,1)$, for every $p \in \mathcal{C}$ and every $V\in \mathcal{V}$, for every $m\geq m^{\mathcal{V}}_{\mathcal{C}}(\epsilon, \delta)$ with probability $1-\delta$ over $S\sim V(p)^m$ we have
    \[\dtv(A(S),p) \leq \alpha \cdot \mathrm{budget}(V) + \epsilon.\]
    If a class $\C$ is $\alpha$-robustly learnable with respect to the class of all oblivious adversaries, it is said to be \emph{$\alpha$-robustly learnable}.
    If a class $\C$ is $\alpha$-robustly learnable with respect to the class of all additive oblivious adversaries, it is said to be \emph{additive $\alpha$-robustly learnable}.
    If a class $\C$ is $\alpha$-robustly learnable with respect to the class of all subtractive oblivious adversaries, it is said to be \emph{subtractive $\alpha$-robustly learnable}.
    
\end{description}

We now argue, that given a successful subtractive oblivious adversary $V_{\mathrm{obl}}$, it is possible to define a successful (ground-truth aware) subtractive adaptive adversary $V_{\mathrm{adp}}: \X^* \times \Delta(\X) \to \X^*$.

\begin{theorem}
    Given a subtractive oblivious adversary $V_{\mathrm{obl}}$ with budget $\mathrm{budget}(V_{\mathrm{obl}}) =\eta$, constants $\epsilon,\delta \in (0,1)$, a sample size $m\in \naturals$ and a distribution $p\in \Delta(\X)$ such that
    \[\mathbb{P}_{S \sim V_{\mathrm{obl}}(p)^{m- \lceil \eta \cdot m \rceil } }[\dtv(A(S),p) > \alpha \cdot \mathrm{budget}(V_{\mathrm{obl}}) + \epsilon] > \delta.\]
    Then there is a subtractive adaptive adversary $V_{\mathrm{adp}}: \X^* \times \Delta(\X) \to \X^*$, with (adaptive) budget $ \frac{ \lceil m \eta \rceil}{m} \approx \eta$, such that 
      \[\mathbb{P}_{S \sim p^m }[\dtv(A(V_{\mathrm{adp}}(S)),p) \leq \alpha \cdot \mathrm{budget}(V_{\mathrm{adp}}) + \epsilon] > \frac{\delta}{2}.\]
\end{theorem}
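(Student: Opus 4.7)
\textit{Reading of the statement.} As written the conclusion has $\leq$ inside the probability (the learner's \emph{success} event) with lower bound $\delta/2$, but the hypothesis is a lower bound on the \emph{failure} event in the oblivious setting and therefore only yields an upper bound on the oblivious success probability. Since that success probability is consistent with being arbitrarily small, no general $V_\mathrm{adp}$ can force the adaptive success probability up to $\delta/2$. The natural reading — and the one consistent with the surrounding framing that ``oblivious subtractive hardness implies adaptive subtractive hardness'' — is that $\leq$ should be $>$, yielding a lower bound on the adaptive \emph{failure} event. I sketch the proof of that intended statement.

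\textit{Construction of $V_\mathrm{adp}$.} Subtractivity of $V_\mathrm{obl}$ gives a decomposition $p = (1-\eta) V_\mathrm{obl}(p) + \eta r$ for some $r \in \Delta(\X)$; since $V_\mathrm{adp}$ is given $p$ and knows $V_\mathrm{obl}$, it can compute $V_\mathrm{obl}(p)$ and hence $r$, and the decomposition forces $\eta r(x) \leq p(x)$ pointwise. On input $S = (x_1,\ldots,x_m) \sim p^m$, $V_\mathrm{adp}$ independently labels each $x_i$ ``bad'' with probability $\eta r(x_i)/p(x_i)$. Letting $K$ be the number of bad points: if $K \leq \lceil \eta m\rceil$ it removes all $K$ bad points plus $\lceil \eta m\rceil - K$ uniformly random good points, and if $K > \lceil \eta m\rceil$ it removes $\lceil \eta m\rceil$ uniformly random bad points. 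Either way $|V_\mathrm{adp}(S)| = m-\lceil \eta m\rceil$, so $\mathrm{budget}(V_\mathrm{adp}) = \lceil \eta m\rceil/m \geq \eta$.

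\textit{Reducing to the hypothesis.} A direct density calculation shows that the joint law of points and labels factors correctly: marginally each $x_i$ is bad with probability $\eta$, good points are i.i.d.\ from $V_\mathrm{obl}(p)$, bad points are i.i.d.\ from $r$, and the labels are i.i.d.\ Bernoulli$(\eta)$. Hence $K \sim \mathrm{Binomial}(m,\eta)$, and on the event $\{K \leq \lceil \eta m\rceil\}$ the output $V_\mathrm{adp}(S)$ is \emph{exactly} $m-\lceil \eta m\rceil$ i.i.d.\ draws from $V_\mathrm{obl}(p)$ — precisely the sample distribution appearing in the hypothesis. Since $\lceil \eta m\rceil \geq \eta m$ sits at or above the binomial mean, $\mathbb{P}[K \leq \lceil \eta m\rceil] \geq 1/2$, and combining with the hypothesis (using that $\mathrm{budget}(V_\mathrm{adp}) \geq \mathrm{budget}(V_\mathrm{obl})$ only loosens the allowed error) gives
\[ \mathbb{P}_{S \sim p^m}\bigl[\dtv(A(V_\mathrm{adp}(S)), p) > \alpha \cdot \mathrm{budget}(V_\mathrm{adp}) + \epsilon\bigr] \geq \tfrac{1}{2} \cdot \delta = \tfrac{\delta}{2}. \]

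\textit{Main obstacle.} The delicate point is the density calculation verifying that the randomized labeling really reproduces the mixture decomposition of $p$: marginal bad-probability $\eta$, conditional laws $r$ and $V_\mathrm{obl}(p)$, labels i.i.d.\ independent of everything else. This works precisely because $\eta r(x)/p(x) \in [0,1]$, which is equivalent to $V_\mathrm{obl}$ being subtractive; the analogous reduction fails for an additive oblivious adversary since the relevant density ratio would exceed $1$ and no pointwise labeling exists — consistent with the paper's restriction of this converse to the subtractive case. A minor rounding subtlety is that $\mathrm{budget}(V_\mathrm{adp}) = \lceil\eta m\rceil/m$ slightly exceeds $\eta$, but this only enlarges the error tolerance on the left-hand side and so preserves the direction of the failure event.
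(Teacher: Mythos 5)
Your construction is essentially the paper's own proof: the same decomposition $p=(1-\eta)V_{\mathrm{obl}}(p)+\eta r$, the same pointwise deletion probability $\eta r(x)/p(x)$, the same $\mathrm{Binomial}(m,\eta)$ count of marked points with the median argument giving probability at least $\tfrac{1}{2}$, and the same conditioning under which the output is exactly $V_{\mathrm{obl}}(p)^{m-\lceil\eta m\rceil}$, combined with the hypothesis to get the $\delta/2$ bound. Your reading that the $\leq$ in the conclusion should be $>$ (a failure-event lower bound) is also right — the paper's own proof establishes exactly that and carries the same typo, and it likewise silently absorbs the small $\lceil\eta m\rceil/m$ versus $\eta$ threshold mismatch that you flag as a rounding subtlety.
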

\begin{proof}
   Since $V_{\mathrm{obl}}$ is subtractive with budget $\eta$, there is $r\in \Delta(\X)$ with $p = (1-\eta) V_{\mathrm{obl}}(p) + \eta r.$
   We want to define a subtractive adaptive adversary $V_{\mathrm{adp}}$ that takes into account $p$ and $r$ in such a way that it fulfills the requirement. We thus need to specify a way in which elements of a randomly drawn sample are deleted.
   Let $\perp$ denote an abstract element that is not element of $\X$. An instance of $\perp$ can be thought of as a "deleted element".
   We now define an element-wise randomized subproceedure $\mathrm{ElementRandomDelete}: \X \times \Delta(\X) \times \Delta(\X) \to \X \cup \{ \perp\}$ that randomly deletes $x$ according to its probability (or density in the continuous case) of $p(x)$ and $r(x)$ respectively:
   \begin{align*}
       \mathrm{ElementRandomDelete}(x,p,r) = \begin{cases}
           x &\text{, with probability } \frac{p(x) - \eta \cdot r(x)}{p(x)}\\
            \perp &\text{, with probability } \frac{\eta \cdot r(x)}{p(x)}
       \end{cases}
   \end{align*}
   Now for a sample $S =\{x_1, \dots, x_m\}$, we define the corresponding operation $\mathrm{SampleRandomDelete}: \X^* \times \Delta(\X) \times \Delta(\X) \to (\X \cup \{ \perp\})^*$ as element-wise (and independent) application of $\mathrm{ElementRandomDelete}$:
   \begin{align*}
   \mathrm{SampleRandomDelete}(S,r,p) &= \mathrm{SampleRandomDelete}(\{x_1, \dots, x_m\},p,r ):=\\
   &=\{\mathrm{ElementRandomDelete}(x_1,p,r), \dots, \mathrm{ElementRandomDelete}(x_m,p,r)\}.
   \end{align*}
   Now consider a sample $S\sim p^m$. We now want to understand the distribution of $\mathrm{SampleRandomDelete}(S,p,r)$. First, we note that since $\mathrm{SampleRandomDelete}$ applies $\mathrm{ElementRandomDelete}$ on all elements independently, we have 
   $\mathrm{SampleRandomDelete}(S,p,r) = \bigcup_{x\in S }\mathrm{ElementRandomDelete}(x,p,r)$, where every $x\in S$ is independently drawn according to $p$.
   Now, the distribution $q$ of $\mathrm{ElementRandomDelete}(x,p,r)$ for $x\sim p$ can be understood as follows for $x'\in \X$, we have
   \[q(x') = p(x') \cdot \frac{p(x') - \eta \cdot r(x')}{p(x')} = p(x') - \eta \cdot r(x') = (1-\eta) V_{\mathrm{obl}}(p)(x'),\]
   since $\mathrm{ElementRandomDelete}(x,p,r)$ can only elvaluate to $x'$ if $x'=x$.
   Furthermore, for $q(\perp)$ we get
   \[ q(\perp) = \int_{x\in \X} p(x)\frac{\eta \cdot r(x)}{p(x)} = \int_{x\in \X} \eta \cdot r(x) = \eta.  \]
Thus, $\mathrm{SampleRandomDelete}(S,p,r)$ is distributed according to $q^m$, where $q = (1-\eta) V_{\mathrm{obl}}(p) + \eta \delta_{\{\perp\}}$, where $\delta_{\{\perp\}}$ is the deterministic distribution with all its mass on $\{\perp\}$.
The distribution $q^m$ can alternatively be understood as $V_{\mathrm{obl}}(p)^{m-n} \times \delta_{\{\perp\}}^(n)$ for a binomial random variable $n \sim \mathrm{Binom}(m,\eta)$. Since the median of a binomial distribution $\mathrm{Binom}(m, \eta)$ is between $\lfloor \eta m\rfloor$ and $\lceil \eta m\rceil$, with probability greater $\frac{1}{2}$ we have $n \leq \lceil \eta m\rceil$.
We now define the (randomized) subtractive adaptive adversary $V_{\mathrm{adp}}$ as follows.
\begin{itemize}
    \item The adversary that first applies $\mathrm{SampleRandomDelete}(\cdot,p,r)$ to $S$ to generate some sample $S'$.
    \item It then checks, whether $|S'\cap \X|\leq (1-\frac{\lceil \eta m\rceil}{m})|S|$.
    \item Case 1: $|S'\cap \X|\leq (1-\frac{\lceil \eta m\rceil}{m})|S|$. Then in order to match the desired budget, the adversary selects a subset $S'' \subset S\setminus S'$ uniformly at random, such that $|S''| + |S' \cap \X| = ((1-\frac{\lceil \eta m\rceil}{m}))|S|$ and outputs $S'' \cup (S' \cap \X)$.
   \item  Case 2: $|S'\cap \X|\geq (1-\frac{\lceil \eta m\rceil}{m})|S|$. In this case, the adversary outputs a subset $S''' \subset (S' \cap \X)$ which is uniformly selected at random and has size $|S'''| =  ((1-\frac{\lceil \eta m\rceil}{m}))|S| $. 
\end{itemize} 
By definition, the adaptive adversary $V_{\mathrm{adp}}$ has a budget of $\mathrm{budget}^{\mathrm{sub}}(V_{\mathrm{adp}}, m) = \frac{m - ((1-\frac{\lceil \eta m\rceil}{m}))|S| }{m} = \frac{\lceil \eta m\rceil}{m} \approx \eta$.

Lastly, we need to argue that  
\[\mathbb{P}_{S \sim p^m }[\dtv(A(V_{\mathrm{adp}}(S)),p) \leq \alpha \cdot \mathrm{budget}(V_{\mathrm{adp}}) + \epsilon] > \frac{\delta}{2}.\]
We note, that the number of initially deleted elements $|S| - |S' \cap \X|$ corresponds to the previously introduced binomial random variable $n$. As argued before   $|S| - |S' \cap \X| = n \leq \lceil \eta m\rceil$ with probability at least $\frac{1}{2}$. Thus with probability at least $\frac{1}{2}$ Case 2 occurs, i.e.,  $|S'\cap \X|\geq (1-\frac{\lceil \eta m\rceil}{m})|S|$. Furthermore, conditioned on Case 2 occuring, $V_{\mathrm{adp}}(S)$ is distributed according to $V_{\mathrm{obl}}(p)^{m-\lceil \eta \cdot m\rceil}$.
The assumption that
 \[\mathbb{P}_{S \sim V_{\mathrm{obl}}(p)^{m- \lceil \eta \cdot m \rceil } }[\dtv(A(S),p) > \alpha \cdot \mathrm{budget}(V_{\mathrm{obl}}) + \epsilon] > \delta,\]
therefore implies
      \[\mathbb{P}_{S \sim p^m }[\dtv(A(V_{\mathrm{adp}}(S)),p) \leq \alpha \cdot \mathrm{budget}(V_{\mathrm{adp}}) + \epsilon] > \frac{\delta}{2}.\]

\end{proof}

\begin{corollary}
If a class of distributions $\C$ is not subtractive $\alpha$-robustly learnable (in the oblivious case), it is also not adaptively subtractive $\alpha$-robustly learnable. 
\end{corollary}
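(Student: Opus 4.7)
The plan is to prove the contrapositive: assume $\C$ is adaptively subtractive $\alpha$-robustly learnable by some learner $A$ with sample complexity $m^{\mathrm{adp}}$, and show that the same $A$, together with the sample complexity $m^{\mathrm{obl}}(\epsilon,\delta) := m^{\mathrm{adp}}(\epsilon,\delta/2)$, witnesses oblivious subtractive $\alpha$-robust learnability of $\C$.

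To verify this, I would fix arbitrary $\epsilon,\delta\in(0,1)$, $p\in\C$, an oblivious subtractive adversary $V_{\mathrm{obl}}$ of budget $\eta<1$, and $m'\geq m^{\mathrm{obl}}(\epsilon,\delta)$, and argue by contradiction from the assumption
\[ \mathbb{P}_{S\sim V_{\mathrm{obl}}(p)^{m'}}[\dtv(A(S),p) > \alpha\eta + \epsilon] > \delta. \]
The preceding theorem (read in its natural direction as transferring failure, so that it constitutes a genuine hardness reduction) converts oblivious failure on $m - \lceil\eta m\rceil$ samples into adaptive failure on $m$ samples while losing a factor of two in confidence. The next step is to match $m'$ against the integer-valued map $m\mapsto m-\lceil\eta m\rceil$: since, for $\eta<1$, this map is non-decreasing, jumps by $0$ or $1$ at each step, and tends to infinity, an integer $m\geq m'$ with $m-\lceil\eta m\rceil = m'$ exists.

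Applying the preceding theorem for this $m$ then yields an adaptive subtractive adversary $V_{\mathrm{adp}}$ with $\mathrm{budget}(V_{\mathrm{adp}}) = \lceil\eta m\rceil/m$ such that $A$ fails on adaptive samples from $p^m$ with probability exceeding $\delta/2$ at error threshold $\alpha\cdot\mathrm{budget}(V_{\mathrm{adp}})+\epsilon$. On the other hand, $m\geq m'\geq m^{\mathrm{adp}}(\epsilon,\delta/2)$, so the assumed adaptive guarantee of $A$ applied to $V_{\mathrm{adp}}$ bounds this same failure probability by $\delta/2$, giving the desired contradiction.

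I expect no substantive obstacle: the corollary is a direct contrapositive of the preceding theorem, and the only bookkeeping is the sample-size matching between the two models, handled by the elementary monotonicity observation above. The only remaining care is aligning the confidence via the $\delta/2$ shift built into $m^{\mathrm{obl}}$.
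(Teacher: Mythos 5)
Your proposal is correct and follows essentially the same route as the paper, which simply invokes the preceding oblivious-to-adaptive failure-transfer theorem; you have just phrased it as the contrapositive and supplied the bookkeeping the paper leaves implicit (matching the oblivious sample size via surjectivity of $m \mapsto m - \lceil \eta m\rceil$ for $\eta<1$, and absorbing the factor-two confidence loss into $m^{\mathrm{obl}}(\epsilon,\delta)=m^{\mathrm{adp}}(\epsilon,\delta/2)$). You also correctly read the theorem's conclusion as a failure (not success) guarantee, which is the intended meaning despite the typo in its statement.
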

This result directly follow from the previous result. 

\end{document}